\newcommand*{\circled}[1]{\lower.7ex\hbox{\tikz\draw (0pt, 0pt)%
    circle (.5em) node {\makebox[1em][c]{\small #1}};}}
  \providecommand\BibTeX{{%
    \normalfont B\kern-0.5em{\scshape i\kern-0.25em b}\kern-0.8em\TeX}}}
\newcommand{\pref}[1]{\prettyref{#1}}
\newcommand{\savehyperref}[2]{\texorpdfstring{\hyperref[#1]{#2}}{#2}}
\DeclareMathOperator*{\argmax}{argmax} 
\newcommand{\calD}{\mathcal{S}}
\newcommand{\calN}{\mathcal{N}}
\newcommand{\calA}{\mathcal{A}}
\newcommand{\calB}{\mathcal{B}}
\newcommand{\calS}{\mathcal{D}}
\newcommand{\calM}{\mathcal{M}}
\newcommand{\calW}{\mathcal{W}}
\newcommand{\calO}{\mathcal{B}}
\newcommand{\calRO}{\mathcal{O}}
\newcommand{\UU}{{\mathcal{U}_k}}
\newcommand{\calP}{\mathcal{P}}
\newcommand{\one}{\mathbbm{1}}
\newtheorem*{theorem*}{Theorem}
\newtheorem{assumption}{Assumption}[section]
\newtheorem*{lemma*}{Lemma}
\newtheorem{definition}{Definition}[section]
\theoremstyle{definition}
\newtheorem{property}{Property}
\newtheorem{thm}{Theorem}[section]
\newtheorem{prop}[thm]{Proposition}
\newtheorem{lem}[thm]{Lemma}
\begin{document}

\title{No free lunch theorem for security and utility in federated learning}

%
\author{Xiaojin Zhang}
\authornote{Both authors contributed equally to this research.}
\email{xiaojinzhang@ust.hk}
\affiliation{%
  \institution{Hong Kong University of Science and Technology}
  \streetaddress{Clear Water Bay}
  \country{Hong Kong}
}

\author{Hanlin Gu}
\authornotemark[1]
\email{allengu@webank.com}
\affiliation{%
  \institution{Webank}
  \city{Shenzhen}
  \country{China}
}

\author{Lixin Fan}
\email{lixinfan@webank.com}
\affiliation{%
  \institution{Webank}
  \city{Shenzhen}
  \country{China}
}

\author{Kai Chen}
\email{kaichen@cse.ust.hk}
\affiliation{%
  \institution{Hong Kong University of Science and Technology}
  \streetaddress{Clear Water Bay}
  \country{Hong Kong}
}

\author{Qiang Yang}
\email{qyang@cse.ust.hk}
\authornote{Corresponding author}
\affiliation{%
  \institution{WeBank and Hong Kong University of Science and Technology}
  \country{Hong Kong}
}






%
\renewcommand{\shortauthors}{Trovato and Tobin, et al.}

\begin{abstract}

In a federated learning scenario where multiple parties jointly learn a model from their respective data, there exist two conflicting goals for the choice of appropriate algorithms. On one hand, private and sensitive training data must be kept secure as much as possible in the presence of \textit{semi-honest} partners, while on the other hand, a certain amount of information has to be exchanged among different parties for the sake of learning utility. Such a challenge calls for the privacy-preserving federated learning solution, which maximizes the utility of the learned model and maintains a provable privacy guarantee of participating parties' private data.

This article illustrates a general framework that a) formulates the trade-off between privacy loss and utility loss from a unified information-theoretic point of view, and b) delineates quantitative bounds of privacy-utility trade-off when different protection mechanisms including Randomization, Sparsity, and Homomorphic Encryption are used. It was shown that in general \textit{there is no free lunch for the privacy-utility trade-off} and one has to trade the preserving of privacy with a certain degree of degraded utility. The quantitative analysis illustrated in this article may serve as the guidance for the design of practical federated learning algorithms.  

\end{abstract}


\begin{CCSXML}
<ccs2012>
 <concept>
  <concept_id>10010520.10010553.10010562</concept_id>
  <concept_desc>Computer systems organization~Embedded systems</concept_desc>
  <concept_significance>500</concept_significance>
 </concept>
 <concept>
  <concept_id>10010520.10010575.10010755</concept_id>
  <concept_desc>Computer systems organization~Redundancy</concept_desc>
  <concept_significance>300</concept_significance>
 </concept>
 <concept>
  <concept_id>10010520.10010553.10010554</concept_id>
  <concept_desc>Computer systems organization~Robotics</concept_desc>
  <concept_significance>100</concept_significance>
 </concept>
 <concept>
  <concept_id>10003033.10003083.10003095</concept_id>
  <concept_desc>Networks~Network reliability</concept_desc>
  <concept_significance>100</concept_significance>
 </concept>
</ccs2012>
\end{CCSXML}

\ccsdesc[500]{Security and privacy}
\ccsdesc[500]{Computing methodologies~\text{Artificial Intelligence}}
\ccsdesc[100]{Machine Learning}
\ccsdesc[100]{Distributed methodologies}

\keywords{federated learning, privacy-preserving computing,
security, utility, 
trade-off,divergence, optimization}

\maketitle


\section{Introduction}
\label{sec:introduction}

{Modern machine learning techniques are data-hungry, and it is not uncommon to use trillion bytes of data in developing large pre-trained machine learning models, e.g., for natural language processing \cite{devlin2018bert,dong2019unified} or image processing \cite{karras2019style,chen2021pre}. For a large variety of machine learning applications, for instance, in social media or finance use cases, data are often distributed across multiple devices or institutions.} Collecting such data onto a central server for training will incur additional communication overhead, management and business compliance costs, privacy issues, and even regulatory and judicial issues such as General Data Protection Regulation (GDPR)\footnote{GDPR is applicable as of May 25th, 2018 in all European member states to harmonize data privacy laws across Europe. https://gdpr.eu/}. Federated Learning (FL) has been introduced as an effective technology to allow multiple parties to jointly train a model \textit{without gathering or exchanging private training data among different parties}. Moreover, it is also required that information exchanged during the learning or inference stages \textit{do not disclose private data owned by respective parties to semi-honest adversaries} (for a formal definition, see Sect. \ref{sec:related:attack}), who aim to espy or infer other parties' private data from exchanged information. The requirements as such constitute the primary mandate for a novel statistical framework of privacy-preserving federated learning as illustrated in this article. 

From an information theory point of view, the amount of information about private data that a semi-honest party can infer from exchanged information is inherently determined by the \textit{statistical dependency} between private data and publicly exchanged information. 
Recent studies have shown that semi-honest adversaries can exploit this dependency to recover the private training images with pixel-level accuracy from exchanged gradients of learned models, e.g., by using Bayesian inference attacks (see Sect. \ref{sec:Bayesian inference attack} and \cite{zhu2019dlg,zhu2020deep,he2019model} for details). 
Taking such threat models into account, we therefore reiterate that a 
\textit{secure federated learning} (SFL) scheme must a) support multiple parties to jointly train models and to make joint inferences \textit{without exchanging private data}; b) have \textit{clearly defined threat models and security guarantees} under these models.

In this article, we consider a \textit{horizontal federated learning} \cite{yang2019federated}, or a \textit{cross-device federated learning} \cite{DBLP:journals/ftml/KairouzMABBBBCC21} setting, in which multiple \textit{clients} upload respective local models or model updates to an \textit{aggregator}\footnote{We assume the existence of a centralized aggregator for the brevity of analysis. Yet the analysis is also applicable to the decentralized federated learning setting in which model aggregation is performed without an aggregator. Detailed discussion on this setting is out of the scope of this article though and will be reported elsewhere.}, who is responsible for 
aggregating multiple local models into a global model. There are a variety of application scenarios that use this scheme for federated learning \cite{mcmahan2016federated, mcmahan2017communication, yang2019federated_new}.
The fundamental privacy-preserving requirement is to maintain potential \textit{privacy loss} below an acceptable level. This is achieved by reducing the dependency between the exchanged model information and private data. The reduced dependency, however, makes the aggregated global model less accurate and leads to \textit{utility loss}, such as a loss in model accuracy, as compared to a global model trained without using any protection. Severely degraded model utility indeed defeats the purpose of federated learning in the first place. We therefore propose in this article a unified statistical framework to analyze the privacy-utility trade-off on a rigorous theoretical foundation. The main results of our research are summarized as follows (also see Fig. \ref{fig:framework} for a pictorial summary):

\begin{itemize}
\item First, we formulate the privacy leakage attack \cite{zhu2019dlg,zhu2020deep,he2019model} through a \textbf{Bayesian inference attack} perspective (Def. \ref{def:attackA}) \cite{geman1984stochastic,wasserman2004bayesian}. A privacy metric called \textbf{Bayesian Privacy} is then proposed to quantify the additional amount of information about private data that a semi-honest adversary might gain by observing publicly exchanged information. Specifically, the information gain for adversaries i.e., \textbf{Bayesian privacy leakage} (Def. \ref{defi: average_privacy_JSD}) is measured by distances between distributions of adversaries' \textit{prior} and \textit{posterior} beliefs about private data. 
Also, an upper bound of Bayesian privacy leakage called \textbf{$\epsilon$-Bayesian Privacy} provides the privacy-preserving guarantee regardless of any Bayesian inference attack that may be launched by semi-honest adversaries. 

\item Second, we put forth a statistical framework to cast the optimal privacy-utility trade-off as a constrained optimization problem
in which the \textit{utility loss} (Def. \ref{defi: utility_loss}) is minimized subject to a preset upper bound of Bayesian privacy leakage (i.e. \textbf{$\epsilon$-Bayesian Privacy}) constraint (see Sect. \ref{sec: optimzation problem}). A theoretical analysis of the trade-off is then manifested as \textbf{Theorem \ref{thm: utility-privacy trade-off_JSD_mt}}, which dictates that a) a weighted sum of the privacy loss and utility loss is greater than a problem-dependent non-zero constant; b) relative weights between the privacy loss and utility loss depend on multiple factors including Bayesian inference attacks adversaries may launch, 
protection mechanisms adopted as well as distributions of the training and testing data. 
That is to say, in principle,\textit{ one has to trade a decrease of the privacy-loss with a certain degree of increase of the utility loss, and vice versa}\footnote{This principle bears an intriguing similarity to the no free lunch theorem for optimization and machine learning \cite{wolpert1997no}, which inspire us to name Theorem \ref{thm: utility-privacy trade-off_JSD_mt} “No free lunch theorem (NFL) for security and utility” in federated learning.}. 

\item Third, the general principle of Theorem \ref{thm: utility-privacy trade-off_JSD_mt} is applied to give quantitative analysis of the trade-off under specific privacy-protection mechanisms, including \textit{Randomization} \cite{geyer2017differentially,truex2020ldp,abadi2016deep}, \textit{Sparsity} \cite{shokri2015privacy, gupta2018distributed, thapa2020splitfed} and \textit{Homomorphic Encryption (HE)} \cite{gentry2009fully,batchCryp} (see \textbf{Theorem \ref{thm: privacy-utility-tradeoff-random}, \ref{thm: privacy-utility-tradeoff-sparsity} and \ref{thm:tradeoff-HE}}). These theoretically justified results are of interest in their own rights. On the other hand, they are of pragmatic values in serving as a practical guidance to the design of federated learning algorithms under various scenarios (see Sect. \ref{sec:application} for applications). 

\end{itemize}

To our best knowledge, the proposed Bayesian Privacy formulation is the first unified framework that explicitly considers both \textit{threat models} (attacking mechanisms) and \textit{security models} (protection mechanisms). It is also the first that is applicable to different privacy-protection mechanisms including randomization \cite{geyer2017differentially,truex2020ldp,abadi2016deep}, Sparsity \cite{shokri2015privacy, gupta2018distributed, thapa2020splitfed} and Homomorphic Encryption \cite{gentry2009fully,batchCryp}. {The rest of the article is organized as follows: Sect. \ref{sec:related} reviews existing works related to privacy measurements and federated learning; Sect. \ref{sec:framework} illustrates the statistical framework based on Bayesian inference attacks and protection mechanisms; 
Sect. \ref{sect:tradeoff} illustrates the quantification of the privacy-utility trade-off (Theorem \ref{thm: utility-privacy trade-off_JSD_mt}), the applications of the trade-off on different protection mechanisms are further presented in Sect. \ref{sec:application}; The comparison of Bayesian Privacy with other privacy measurements is presented in Sect. \ref{sect:compare-BP}. Finally, Sect. \ref{sec:conclusion} concludes the article with discussions on future research plans.}

\begin{figure}[h]
\centering
\includegraphics[width = 0.9\columnwidth]{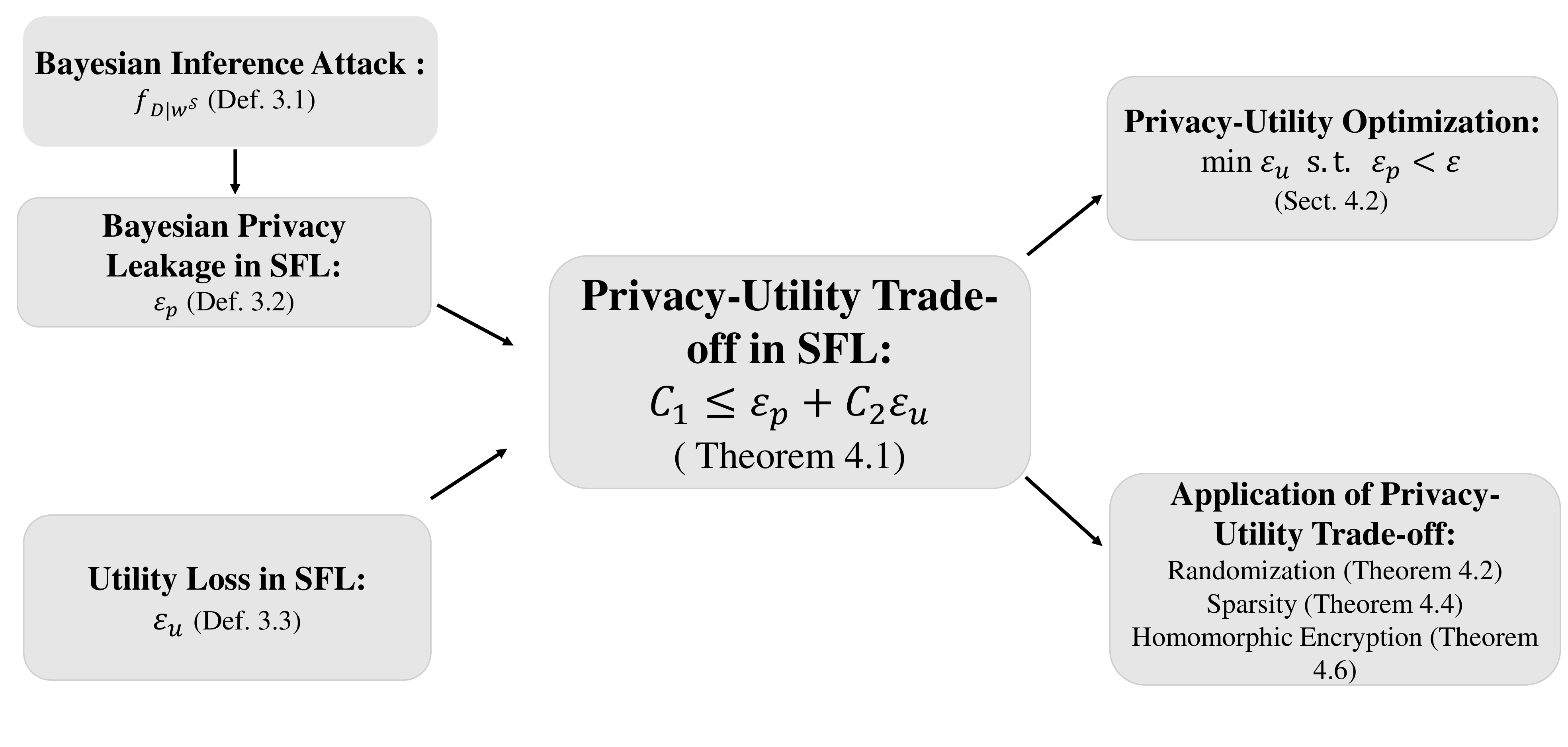}
\caption{Outline of the proposed SFL framework based on Bayesian inference attack and the privacy-utility trade-off.} \label{fig:framework}
\end{figure}

\section{Related Literature}
\label{sec:related}

\label{sec:related:DP}


This section gives a brief review of related work from distinct aspects, including \textit{privacy measurements, privacy attacking and protection methods in federated learning} and \textit{privacy-utility trade-off}. 

\subsection{Privacy Measurement}


The need of \textit{privacy-persevering computing} arises in various applications ranging from {database mining}, communication of secret information over public channels, and more recently, machine learning. Numerous definitions of privacy or privacy measurements proposed in the literature can be broadly categorized as follows as \textit{prior-independent} and \textit{prior-dependent} ones. 


\begin{itemize}
\item \textbf{Differential Privacy} (DP) is a celebrated \textit{prior-independent} definition proposed by Dwork et al. to protect individual privacy in response to queries about databases \cite{dwork2006calibrating, dwork2006differential,dwork2014algorithmic}. A series of DP variants were subsequently proposed to tighten the privacy budget under various conditions including Renyi Differential Privacy (RDP) with the natural relaxation of DP \cite{mironov2017renyi}, Gaussian Differential Privacy (GDP) with a closed-form privacy bound and stronger guarantee \cite{bu2020deep}.
Moreover, a large number of algorithms have been designed to achieve DP in federated learning (FL) \cite{seif2020wireless,truex2020ldp} and Abadi et al. proposed the moments accountants of DP suitable for deep learning \cite{abadi2016deep}. We refer readers to \cite{dwork2008differential,ha2019differential} for a thorough coverage of privacy-preserving based on differential privacy. 
It is worth mentioning that DP-based measurements do not take into account the prior distributions of private data, i.e., DP and its variants are \textit{prior-independent} except for Bayesian Differential Privacy (BDP) \cite{BDP-icml20}. 
 


\item Among different \textit{prior-dependent} privacy measurements \cite{du2012privacy,asoodeh2018estimation, issa2019operational}, \textbf{Information Privacy} (IP) \cite{du2012privacy} is proposed to model the \textit{privacy leakage} 
in terms of “amount of knowledge” learned by an adversary about the private data after observing the user’s output. The optimal privacy-utility trade-off is then cast, following the \textit{rate-distortion theory} \cite{Tretiak1974RateDT,blau2019rethinking}, as an optimization problem to minimize information leakage subject to utility loss constraints. The divergences adopted to quantify information leakages include KL divergence (mutual information) \cite{du2012privacy}, chi-squared divergence \cite{hsu2018generalizing}, total variation divergence \cite{rassouli2019optimal} and Renyi divergence \cite{liao2019tunable}.

\end{itemize}

\subsection{Federated Learning}
The notion of Federated Learning (FL) was initially proposed by McMahan et al. with the aim to build a machine learning model based on datasets that are distributed across multiple devices \cite{mcmahan2016federated,mcmahan2017communication, konevcny2016federated, konevcny2016federated_new}. The main idea is to aggregate local models learned on multiple devices, yet, without sending private data to a \textit{semi-honest} third-party server or other devices. Bearing in mind the privacy concern of secret data distributed across multiple institutions, Yang et al. extended applications of FL to a wide spectrum of use cases \cite{yang2019federated} and classified FL scenarios into three categories:
\begin{enumerate}
\item \textit{horizontal federated learning}: datasets share the same feature space but different space in samples; 

\item \textit{vertical federated learning}: two datasets share the same sample ID space but differ in feature space; 

\item \textit{federated transfer learning}: two datasets differ not only in samples but also in feature space.
\end{enumerate}
Yang et al. \cite{yang2019federated} also proposed the notion of \textit{secure federated learning} to highlight the importance of protecting the privacy of training data from various kinds of adversary attacks. Horizontal and vertical federated learning are also referred to as cross-device and cross-silo federated learning, respectively, in  \cite{DBLP:journals/ftml/KairouzMABBBBCC21}. A wealth of literature as follows has been proposed to improve the privacy-preserving capability and model utility.

\subsubsection{Threat model in Federated Learning}\label{sec:related:attack}

For the sake of privacy security, it is often necessary to consider the existence of \textit{semi-honest (honest-but-curious)} adversaries in FL. The adversary is honest in the sense that he/she faithfully follows the collaborative
learning protocol and does not submit any malformed message, but he/she may launch privacy attacks to espy the training data of other participants, by analyzing periodic updates to the joint model (e.g., gradients) during training. Such kind of attacks is referred to as \textit{Bayesian inference attack} (please refer to \pref{def:attackA}), which can be broadly classified according to the information source exploited by a semi-honest adversary:

\begin{itemize}
\item \textbf{Gradient inversion attack:} Zhu et al. \cite{zhu2019dlg,zhu2020deep} demonstrated that \textit{deep leakage} attacks allow adversaries to restore the private data up to pixel-level accuracy from exchanged deep neural network \textit{model gradients}. Following this seminal work, it was shown that adversaries can launch even more effective attacks by further exploiting various kinds of prior such as image smoothness prior by total variation loss \cite{geiping2020inverting}, 
image label prior \cite{zhao2020idlg}, and group consistency of estimated images \cite{yin2021see}. 

\item \textbf{Model Inversion attack from outputs:} \textit{Split learning} \cite{gupta2018distributed} 
allows a machine learning model to be separated and trained on multiple clients with low computing resources where each client only trains a small portion of the split models (e.g., a few layers of neural networks). However, adversaries could still infer private data from the \textit{model output} in this case \cite{fredrikson2015model,he2019model}. Gu et al. \cite{gu2021federated} later applied model inversion attack in a federated split learning (called \textit{splitfed}) setting \cite{thapa2020splitfed} in which clients only upload partial information to the server.

\item \textbf{GAN-based attack:} 
Generative Adversarial Network (GAN) networks were used to infer clients' private data. Hitaj et al. \cite{hitaj2017deep} viewed the aggregate model as a discriminator of GAN to generate a distribution of specific classes. 
Wang et al. \cite{wang2019beyond} proposed to learn a generator, which could recover user-specified private data in addition to the data distribution of a specific class.
\end{itemize} 

\subsubsection{Protection Mechanisms in Federated Learning}
\label{sec:related:prot}
In order to protect private data from being disclosed by adversarial attacks, the following protection mechanisms have been adopted for secure federated learning\footnote{Note that the privacy analysis illustrated in Sect. \ref{sec:application} applies to \textit{Randomization}, \textit{Sparsity} and \textit{HE} protection mechanisms and \textit{secret sharing}.}:


\begin{itemize}
\item {\textbf{Randomization}:}
\textit{Differential privacy} has been widely adopted to protect exchanged information in FL \cite{geyer2017differentially,truex2020ldp,abadi2016deep} by adding to model information either Laplace noise or Gaussian noise \cite{dwork2006calibrating}. \textit{Local differential privacy} \cite{dwork2006differential} was also proposed to randomize response in federated learning \cite{truex2020ldp,seif2020wireless, zhao2020local}. Despite its simplicity and popularity, the \textit{randomization} approach inevitably leads to compromised performances in terms of slow convergence, low model utility, and loose privacy guarantee as documented in \cite{BDP-icml20,kim2021federated,DBLP:journals/pvldb/HuYYDCYGZ15} etc.

\item \textbf{Encryption} is a widely-adopted technique for protecting sensitive information.
In particular, \textbf{Homomorphic Encryption} (HE) and its variants \cite{gentry2009fully,rivest1978data,gentry2010toward} 
allows the aggregation of FL models to be
performed directly on encrypted local models without decryption needed \cite{zhang2020batchcrypt,zhang2019pefl, aono2017privacy, truex2019hybrid}. 
However, extremely heavy computation and communication overhead incurred by HE prevent it from being readily applicable to large models such as 
deep neural networks with billions of model parameters. It remains an active research topic to come up with efficient HE algorithms to improve FL efficiency as demonstrated in \cite{aono2017privacy,batchCryp}.

\item
\textbf{Sparsity} based methods protect clients' private data 
by hiding part of the information from being exchanged with other parties \cite{shokri2015privacy}.
In particular, \textit{Split Learning} (SL) in a federated setting proposed to separate and hide part of neural network models from other parties \cite{gupta2018distributed} and
splitfed \cite{thapa2020splitfed} combined SL and FL to improve the efficiency and utility at the same time.

\item \textbf{Secret Sharing} \cite{SecShare-Adi79,SecShare-Blakley79,bonawitz2017practical} were developed to distribute a secret among a group of participants. However, it requires extensive exchange of messages and entails a communication overhead not viable in many federated learning settings.

\end{itemize}
Some protection mechanism like \cite{li2020tiprdc} does not belong to the field of federated learning, and is beyond the scope of our article.

\subsection{Privacy-Utility Trade-Off}

In the past decade, there has been wide interest in literature in understanding the privacy-utility trade-off:

\begin{itemize}
\item Some work focus on privacy-utility trade-off, where utility is quantified (inversely) via distortion (accuracy), and privacy via \textit{Information Privacy}. Sankar et al. \cite{sankar2013utility} provided a precise quantification of the trade-off in databases between the privacy needs of the individuals and the utility of the published data. Makhdoumi et al. \cite{makhdoumi2013privacy} modeled the privacy-utility trade-off according to the framework proposed by \cite{du2012privacy}. They regard the trade-off as a \textit{convex optimization} problem, which aims at minimizing the mutual information between the private data and released data under the constraint of utility (distortion) of released data. Moreover, Rassouli et al. \cite{rassouli2019optimal} further defined the privacy using total variation distance and illustrated that the optimal privacy-utility trade-off could be solved using a standard linear program. Also, Wang et al. \cite{wang2017estimation} provided a trade-off when utility and privacy were both evaluated using $\chi^2$-based information measures.

\item 
The trade-off of utility with relaxed privacy-preserving capability was considered necessary by Dwork et. al. for queries to be useful \cite{Imposs-DworkNaor10} and she also gave an quantitative analysis that relates $(\epsilon,0)$-DP with the maximal error (in utility) bounded by $\Omega(\frac{1}{\epsilon})$ \cite{dwork2014algorithmic}. 
Differential privacy budget $\epsilon$ of privacy-preserving algorithms was later related to the convergence (utility) of privacy-preserving algorithms \cite{wang2017differentially,girgis2021shuffled}. 
\end{itemize}

To sum up, in existing work we witnessed a compelling need to lay down a solid foundation for the design of novel federated learning algorithms. Especially, the recent Bayesian inference type of threats \cite{zhu2019dlg,zhu2020deep,geiping2020inverting,zhao2020idlg,yin2021see} were not explicitly modeled by existing privacy definitions, e.g., Differential Privacy which is considered the gold standard definition of privacy. This shortage of rigorous formulations makes it unclear for practitioners how to determine the optimal choice of algorithm design concerning the privacy-utility trade-off. There are no existing frameworks that simultaneously take into account different protection mechanisms, including randomization and Homomorphic Encryption. The rest of the present article therefore will illustrate such a novel statistical framework based on Bayesian inference attack which aims to address aforementioned issues not properly resolved in existing work.

\section{General Setup and Framework}\label{sec:framework}
\subsection{Notations}
We follow the convention of representing the random variables using uppercase letters such as $S$, representing the particular values they take on using lowercase letters, and representing the support of random variables using copperplate such as $s$ and $\calS$. The distributions are represented using uppercase letters such as $F$ and $P$, and the probability density functions are represented using lowercase letters such as $f$ and $p$. We use $f_{D_k}(d)$ to represent the value of the probability density function $f$ at $s$, and the subindex represents the random variable. We use the notation $f_{D_k|W_k}(d|w)$ to represent the conditional density function. For continuous distributions $P$ and $Q$ over $\mathbb{R}^n$, the Kullback-Leibler divergence is defined as $\text{KL}(P||Q) = \int p(x)\log (p(x)/q(x))dx$, where $p$ and $q$ denote the probability densities of $P$ and $Q$. The Jensen-Shannon divergence is a smoothed version of the Kullback-Leibler divergence, which is defined as $\text{JS}(P||Q)  = \frac{1}{2}\left[\text{KL}\left(P, M\right) + \text{KL}\left(Q, M\right)\right]$, where $M = (P + Q)/2$. The total variation distance between $P$ and $Q$ is $\text{TV}(P||Q) = \sup_{A\subset\mathbb R^n} |P(A) - Q(A)|$. See Table \ref{table: notation} for detailed descriptions of notations.

\begin{table*}[!htp]
\footnotesize
  \centering
  \setlength{\belowcaptionskip}{15pt}
  \caption{Table of Notation}
  \label{table: notation}
    \begin{tabular}{cc}
    \toprule
    Notation & Meaning\cr
    \midrule\
    $\epsilon_{p}$ & Privacy leakage (Def. \ref{defi: average_privacy_JSD})\cr
    $\epsilon_u$ & Utility loss (Def. \ref{defi: utility_loss}) \cr
    $W^{\calRO}_k$ & Unprotected model information of client $k$\cr
    $W^{\calD}_k$ & Protected model information of client $k$\cr
 $P^{\calRO}_k$ & Distribution of unprotected information of client $k$\cr
 $P^{\calD}_k$ & Distribution of protected information of client $k$\cr
 $F^{\calO}_k$ & Adversary's prior belief distribution about the private information of client $k$\cr
 $F^{\calA}_k$ & Adversary's belief distribution about client $k$ after observing the protected information\cr
 $F^{\calRO}_k$ & Adversary's belief distribution about client $k$ after observing the unprotected information\cr
 $\text{JS}(\cdot||\cdot)$ & Jensen-Shannon divergence between two distributions\cr
 $\text{TV}(\cdot||\cdot)$ & Total variation distance between two distributions\cr
    \bottomrule
    \end{tabular}
\end{table*}

\subsection{General Setup}

In this work, we consider a \textit{horizontal federated learning} scenario, or cross-device federated learning \cite{DBLP:journals/ftml/KairouzMABBBBCC21}, where multiple ($K$) participants\footnote{In this article we use \textit{participant, party, client} interchangeably to refer to a device or institute that participates in the federated learning. } collaboratively learn a global model without exposing their private training data \cite{mcmahan2017communication, yang2019federated}. The requirement of adopting certain protection of private data is due to the threat model. Some \textit{semi-honest} adversaries may launch \textit{privacy attacks} on exchanged information to infer private data of other participants (for a formal definition of semi-honest adversaries, see Sect. \ref{sec:related:attack}).
Initially, the server distributes the global model information to all the clients. The overall secure federated learning (SFL) procedures are illustrated in the left panel of Fig. \ref{fig: setup} and summarized as follows:
\begin{enumerate}[label=\circled{\arabic*}]
\item With the global model information from the server, each client $k$ trains the \textit{local model} using his/her own data set $D_k$, and obtains the \textit{local model information} $W_k^{\calRO}$, which follows a distribution $P_k^{\calRO}$. The objective of learning $W_k^{\calRO}$ is to maximize the model utility $U_k$.
\item In order to prevent semi-honest adversaries from inferring other clients' private information $D_k$ according to $W_k^{\calRO}$, the client adopts a protection mechanism $M$ to convert model information $W_k^{\calRO} $ to protected model information $W_k^{\calD}$. 
\item Each client uploads the protected information $W_k^{\calD}$ to the server and aggregates as a new global model $W_{a}^{\calD}$.
\item The clients download the global model $W_{a}^{\calD}$ and continue to update the local model information.
\end{enumerate}
The processes \textcircled{1}-\textcircled{4} iterate until the utility of the aggregated model does not improve.\\
\textbf{Remark:} In SFL, the local model information includes the \textit{model parameters}, \textit{model gradients} and \textit{model outputs}, all of which may optionally be exchanged to the aggregator and get exposed to semi-honest adversaries (see details in the Appendix \ref{App:bayes-inference-attack}). The goal of the protection mechanism $M$ is to protect private data such that the dependency between $W_k^{\calD}$ and $D_k$ is reduced, as compared to the dependency between the unprotected information $W_k^{\calRO}$ and $D_k$.

\begin{figure}[t]
\centering
\includegraphics[width = 0.9\columnwidth]{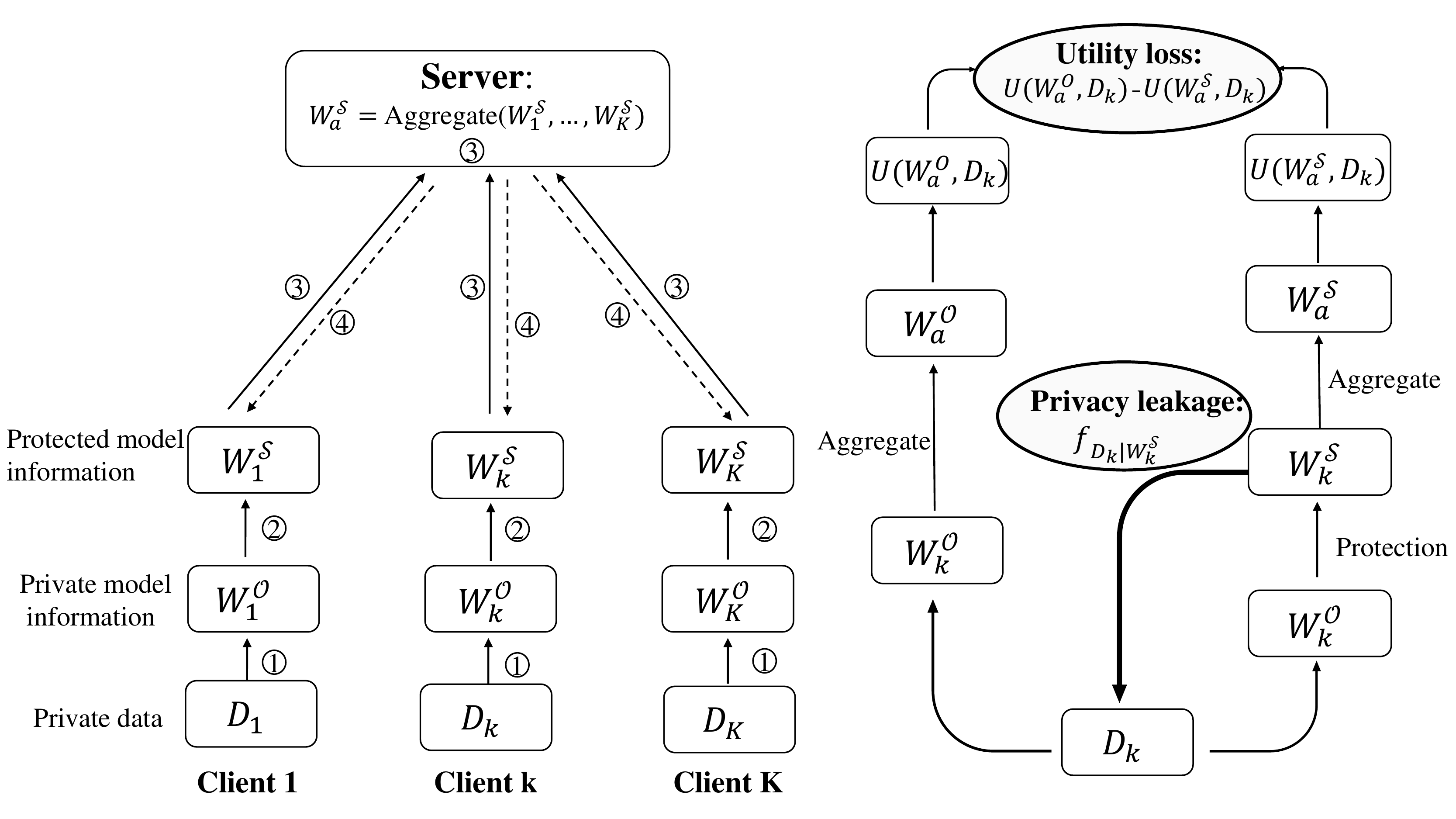}
\caption{An illustration of the setup for secure federated learning. The left panel demonstrates the four procedures: \textcircled{1} The $k_{th}$ \textit{client} learns the \textit{local model} information $W_k^{\calRO}$ using his/her own data set $D_k$; \textcircled{2} The client adopts a protection mechanism $M$ to convert model information $W_k^{\calRO} $ to protected model information $W_k^{\calD}$; \textcircled{3} Each client uploads the protected information $W_k^{\calD}$ to the \textit{aggregator} who aggregates all $W_k^{\calD}, k=1,2,\cdots,K$ into the global model $W_{a}^{\calD}$; \textcircled{4} The clients download the global model $W_{a}^{\calD}$ and continues to update the local model information. The right panel illustrates the privacy leakage that adversaries may infer the private data $D_k$ according to the protected model information $W_k^{\calD}$, and the utility loss that protection mechanism induces the aggregated global model to be less accurate with the decreased utility.
}
\label{fig: setup}
\vspace{-5pt}
\end{figure}

In this SFL setting, the privacy loss and utility loss as illustrated in right panel of Fig. \ref{fig: setup} are considered:
\begin{enumerate}
\item For the sake of privacy-preserving, a Bayesian privacy leakage measure (see Def. \ref{defi: average_privacy_JSD}) is used to quantify the amount of information about private data that semi-honest adversaries may still infer in spite of the protection mechanisms applied to publicly exposed information. The Bayesian privacy leakage allows one to evaluate the security of a secure federated learning scheme and justifies that the adopted protection mechanism is secure in thwarting Bayesian inference attacks if the Bayesian privacy leakage is less than an acceptable threshold (see Sect. \ref{sec: optimzation problem}).
\item The protection mechanism modifies the original model information $W_k^{\calRO}$ to its protected counterpart $W^{\calD}_k$, and induces the local model to behave less accurately. Consequently, the aggregated global model is less accurate and the incurred utility loss is defined as the difference of utilities with and without protections (Def. \ref{defi: utility_loss}).
\end{enumerate}



\subsection{Bayesian Inference Attack in Secure Federated Learning}
\label{sec:Bayesian inference attack}

It was shown that semi-honest adversaries could recover private training images up to pixel-level accuracy from unprotected gradients of learned models, e.g., by using efficient Bayesian inference attacks (\cite{zhu2020deep,he2019model}).
We assume that an adversary aims to recover the $k_{th}$ client's private variable $D_k$ from exposed variable $W_k^{\calD}$, which is the output of applying certain protection mechanisms on model information i.e., $W_k^{\calD} = M(W_k^{\calRO})$.
Such a Bayesian inference attack is formally defined as follows.
\begin{definition}[Bayesian inference attack] \label{def:attackA}
A \textbf{Bayesian inference attack}\footnote{The Bayesian inference framework has been applied to the image restoration problem since the 1960s~\cite{dempster1968generalization,imgRestorGeman84,box2011bayesian}.} 
is an optimization process that aims to infer the private variable $D_k$ in order to best fit the exposed information $W_k^{\calD}$ as: 
\begin{equation} \begin{split} \label{eq:bayes-infer-attack}
d^{*} &=\argmax\limits_{d}\log\left(f_{D_k|W_k^{\calD}}(d|w)\right) \\
& =\argmax\limits_{d}\log\left(\frac{f_{W_k^{\calD}|D_k}(w|d) f_{D_k}(d)}{f_{W_k^{\calD}}(w)}\right) \\
&=\argmax\limits_{d} [\log f_{W_k^{\calD}|D_k}(w|d)+\log f_{D_k}(d)],\\ 
\end{split}
\end{equation}
\end{definition}
where $f_{D_k|W_k^{\calD}}(d|w)$ is the posterior of $D_k$ given the protected variable $W_k^{\calD}$. According to Bayes' theorem, maximizing the log-posterior $f_{D_k|W_k^{\calD}}(d|w)$ on $D_k$ involves maximizing summation of $\log(f_{W_k^{\calD}|D_k}(w|d))$ and $\log(f_{D_k}(d))$. The former one aims to find $D_k$ to best match $W_k^{\calD}$ (maximize the likelihood of $W_k^{\calD}$) and the latter one aims to make the prior of $D_k$ more significant. In short, Bayesian inference attack establishes the posterior belief of $D_k$ conditioned on $W_k^{\calD}$, denoted as $f_{D_k|W_k^{\calD}}$.

The learned conditional distribution $f_{D_k|W_k^{\calD}}$ from the Bayesian inference attack reflects the dependency between $W_k^{\calD}$ and $D_k$, which determines the amount of information that adversaries may infer about $D_k$ after observing $W_k^{\calD}$. In case the exposed information $W_k^{\calD}$ is \textit{independent} of the private data $D_k$, then the posterior belief $f_{D_k|W_k^{\calD}}$ is guaranteed to be \textit{indistinguishable} from the prior $f_{D_k}$ and the Bayesian privacy leakage is zero. This extreme case corresponds to 
the \textit{Semantic Security (SS)} in cryptosystems \cite{goldwasser1984probabilistic}.

\textbf{Remark:} \\
(1) Bayesian inference attack defined here is a concrete realization of the semi-honest participant assumption. The attacker learns information from other parties and attempts to infer the private information held by other parties. The attacker does not actively inject information on exchanged messages in order to gain more private information or ``poison'' the joint model, however.

(2) It is required that the time cost of the Bayesian inference attack is polynomial. Given this requirement, adversaries cannot gain any additional information about the private data of clients if the attacking cost increases exponentially, e.g., against Homomorphic Encryption \cite{gentry2009fully}. From a practical point of view, we give in Appendix \ref{App:time-complex} detailed analysis of Bayesian inference attack including \textit{gradient inversion} attack, \textit{model inversion} attack and \textit{brute-force} attack against different protection mechanisms.

(3) We formulate three attacks including gradient-inverse attack \cite{zhu2020deep,geiping2020inverting,zhao2020idlg, yin2021see}, model inversion attack \cite{fredrikson2015model, he2019model} and brute-force attack (especially for encryption) from the Bayesian inference perspective (see details in Appendix \ref{App:bayes-inference-attack}). These attacking approaches differ in the types of exposed information, including model, model gradients and model outputs in SFL. Moreover, the prior (denoted as $F^{\calO}_k$) is an important factor for inferring the private data. For example, if the attacker knows the label of an image is ``cat'', he/she may use an averaged ``cat'' image to initialize the restored image and significantly improve the accuracy of restored data (the influence of prior for the attack is shown in Appendix \ref{App:influ-prior}). 

\subsection{Bayesian Privacy Leakage}

With the Bayesian inference attack (introduced in Def. \ref{def:attackA}), we now define \textit{Bayesian privacy leakage} as a privacy leakage measurement, which quantifies the discrepancy between the posterior belief (with exposure $W_k^{\calD}$) and prior belief (without exposure) under the Bayesian inference attack. Let $F^{\calA}_k$, $F^{\mathcal O}_k$ and $F^{\calO}_k$ represent the attacker's belief distribution about $D_k$ upon observing the protected information, the original information and without observing any information respectively. Let $f^{\calA}_{D_k}$, $f^{\calRO}_{D_k}$ and $f^{\calO}_{D_k}$ represent the probability density function of $F^{\calA}_k$, $F^{\calRO}_k$ and $F^{\calO}_k$. Specifically, $f^{\calA}_{D_k}(d) = \int_{\mathcal{W}_k} f_{{D_k}|{W_k}}(d|w)dP^{\calD}_{k}(w)$, $f^{\calRO}_{D_k}(d) = \int_{\mathcal{W}_k} f_{{D_k}|{W_k}}(d|w)dP^{\calRO}_{k}(w)$, and $f^{\calO}_{D_k}(d) = f_{D_k}(d)$.
\begin{definition}[Bayesian privacy leakage]\label{defi: average_privacy_JSD}
Let $\epsilon_{p,k}$ represent the privacy leakage of client $k$, which is defined as
\begin{align}\label{eq: def_of_pl}
\epsilon_{p,k} = \sqrt{{\text{JS}}(F^{\calA}_k || F^{\calO}_k)} = \left[\frac{1}{2}\int_{\mathcal{D}_k} f^{\calA}_{D_k}(d)\log\frac{f^{\calA}_{D_k}(d)}{f^{\calM}_{D_k}(d)}\textbf{d}\mu(d) + \frac{1}{2}\int_{\mathcal{D}_k} f^{\calO}_{D_k}(d)\log\frac{f^{\calO}_{D_k}(d)}{f^{\calM}_{D_k}(d)}\textbf{d}\mu(d)\right]^{\frac{1}{2}},
\end{align}
where $f_{D_k}^{\calM}(d) = \frac{1}{2}(f^{\calA}_{D_k}(d) + f^{\calO}_{D_k}(d))$. Furthermore, the Bayesian privacy leakage in SFL resulted from releasing the protected model information is defined as
\begin{align}
\epsilon_p = \frac{1}{K}\sum_{k=1}^K \epsilon_{p,k}.
\end{align} 
\end{definition}

The Bayesian privacy leakage measures the discrepancy between the adversaries' belief with and without leaked information. Moreover, the Bayesian privacy leakage is averaged with respect to the protected model information variable which is exposed to adversaries.\\

\textbf{Remark:} Unlike KL divergence, {\text{JS}} divergence is symmetrical and its square root satisfies the triangle inequality\footnote{We employ the property of triangular inequality to establish main results illustrated in Sect. \ref{subsect:PU-tradeoff}. Also see Fig. \ref{fig: flowchart} for a pictorial illustration.} \cite{endres2003new}. This property allows the derivation of main results illustrated in \pref{thm: utility-privacy trade-off_JSD_mt}. The definition of Bayesian privacy leakage could be generalized as the maximum privacy leakage over clients.\\

\subsection{Utility Loss in Secure Federated Learning}

In addition to privacy leakage, another important concern investigated in our work is the utility loss, which is defined as follows:

\begin{definition}[Utility Loss]\label{defi: utility_loss}
The utility loss is defined as the discrepancy between the utility with the unprotected model information drawn from distribution $P_a^{\calRO}$ and that drawn from the protected distribution $P_a^{\calD}$,
\begin{align*}
\epsilon_{u} = \frac{1}{K}\sum_{k=1}^K \epsilon_{u,k} = \frac{1}{K}\sum_{k=1}^K [U_k(P_a^{\calRO}) - U_k(P_a^{\calD})],
\end{align*}
where $P_a^{\calRO}$ and $P_a^{\calD}$ represent, respectively, the distributions of \textit{convergent models} with or without any protection mechanism used, and $U_k(P) = \mathbb E_{D_k}\mathbb E_{W_k}\frac{1}{|D_k|}\sum_{d\in D_k}U(W_k,d)$ is the expected utility taken with respect to $D_k\sim P_k$ and $W_k\sim P$. 
\end{definition}
\textbf{Remark:} The model utility evaluates model performance for a variety of learning tasks. For example, model utility is classification accuracy in a classification problem and prediction accuracy in a regression problem.

\section{A general framework for the Privacy-Utility Trade-Off}\label{sect:tradeoff}
In Sect. \ref{sec:framework}, we have introduced the definition of privacy leakage and utility loss in SFL. In this section, we first lay down the rigorous analysis of the trade-off between Bayesian privacy leakage and utility loss (no free lunch theorem). Then we formulate the \textit{privacy-utility trade-off} as an optimization problem, which minimizes the utility loss under the $\epsilon$-Bayesian privacy constraint. Finally, we apply the proposed no free lunch theorem between Bayesian privacy leakage and utility loss into four protection mechanisms in SFL.

\subsection{Theoretical Analysis for Privacy-Utility Trade-Off}\label{subsect:PU-tradeoff}

The idea of privacy-preserving is to modify the unprotected information and guarantee that the private data cannot be disclosed to semi-honest adversaries. In order to protect privacy, the protected information denoted as $W^{\calD}_k$ is shared. 
The utility loss quantifies the increase of expected average loss incurred by the released information $W^{\calD}_k$ compared with the utility caused by $W^{\calRO}_k$. 
Note that $W^{\calD}_k$ and $W^{\calRO}_k$ respectively follow distributions $P^{\calD}_k$ and $P_k^{\calRO}$. In this setting, we want to characterize the limitation of privacy-preserving mechanisms. It requires to measure the amount of utility that is inevitable to lose in order to protect privacy. Intuitively, the larger the change exerted on $W^{\calRO}_k$ is, the more secure the privacy protection is, and meanwhile, the less accurate the output is. We first measure the extent of modification using the total variation distance between the unprotected distribution and the protected distribution. Then, we use this distance as a key element for connecting the privacy leakage and the utility loss.
Note that the union of $\mathcal W_k^{\calD}$ (the support of $P_k^{\calD}$) and $\mathcal W_k^{\calRO}$ (the support of $P_k^{\calRO}$) is denoted as $\calW_k$.



\begin{definition}[Optimal parameters]
Let $\mathcal W^{*}_a$ represent the set of parameters achieving the maximum utility. Specifically, 
\begin{align*}
    \mathcal W^{*}_a = \argmax_{w\in\mathcal W_a}\frac{1}{K}\sum_{k=1}^K U_k(w),
\end{align*}
where $U_k(w) = \mathbb{E}_{D_k}\frac{1}{|D_k|}\sum_{d\in D_k}U(w,d)$ is the expected utility taken over $D_k$ sampled from distribution $P_k$.
\end{definition}





\begin{definition}[Near-optimal parameters]\label{defi: neighbor_set}
Let $\calW^{\calD}_a$ represent the support of the protected distribution of the aggregated model information. Given a non-negative constant $c$, the \textit{near-optimal parameters} is defined as
$$\calW_{c} = \left\{w\in\calW^{\calD}_a: \left|\frac{1}{K}\sum_{k=1}^K  U_k(w^{*})-\frac{1}{K}\sum_{k=1}^K U_k(w)\right|\le c, \forall w^{*}\in\mathcal W^{*}_a\right\}.$$


\end{definition}

\begin{assumption}\label{assump: assump_of_Delta}
Let $\Delta$ be the \textit{maximum} constant that satisfies


\begin{align}
     \int_{\mathcal W^{\calD}_a} p^{\calD}_{W_a}(w)\one\{w\in\calW_{\Delta}\} dw\le\frac{{\text{TV}}(P_a^{\calRO} || P_a^{\calD} )}{2},
\end{align}

where $p^{\calD}_{W_a}$ represents the probability density function of the protected model information $W_a^\calD$. We assume that $\Delta$ is positive, i.e., $\Delta >0$.
\end{assumption}
\textbf{Remark:}\\
(1) This assumption implies that the cumulative density of the near-optimal parameters as defined in Def. \ref{defi: neighbor_set} is bounded. This assumption excludes the cases where the utility function is constant or indistinguishable between the optimal parameters and a certain fraction of parameters.  \\
(2) Note that $\Delta_k$ is independent of the threat model of the adversary and $\Delta_k$ is a constant when the protection mechanism, the utility function, and the data sets are fixed.\\

The following theorem represents the central result of this work. As a form of the \textit{No Free Lunch (NFL)} theorem, it states that privacy-protection and utility enhancement of a joint model in our semi-honest SFL setting is bounded by a constant. If one increases privacy protection, one risks losing some utility of the joint model, and vice versa. 

\begin{thm}[No free lunch theorem (NFL) for security and utility]\label{thm: utility-privacy trade-off_JSD_mt} Let $\epsilon_p$ be defined in Def. \ref{defi: average_privacy_JSD}, we have that
\begin{align}\label{eq: total_variation-privacy trade-off}
C_1\le\epsilon_{p} + \frac{1}{K}\sum_{k=1}^K \frac{1}{2}(e^{2\xi}-1)\cdot {\text{TV}}(P_k^{\calRO} || P^{\calD}_k).
\end{align}

Furthermore, let $\epsilon_u$ be defined in Def. \ref{defi: utility_loss} at the convergence step, with \pref{assump: assump_of_Delta} we have that
\begin{align}\label{eq: total_variation-privacy trade-off_mt}
 C_1 \le\epsilon_{p} + C_2\cdot \epsilon_{u},
\end{align}
in which
\begin{itemize}
\item $\xi = \max_{k\in [K]} \xi_k$, where $\xi_k = \max_{w\in \mathcal{W}_k, d \in \mathcal{D}_k} \left|\log\left(\frac{f_{S_k|W_k}(d|w)}{f_{D_k}(d)}\right)\right|$ represents the maximum privacy leakage over all possible information $w$ released by client $k$, and $[K] = \{1,2,\cdots, K\}$. $\xi$ is a constant independent of the protection mechanism;

\item $C_1 = \frac{1}{K}\sum_{k=1}^K \sqrt{{\text{JS}}(F^{\calRO}_k || F^{\calO}_k)}$ is a constant 
representing the averaged square root of JS divergence between adversary's belief distribution about the private information of client $k$ before and after observing the unprotected parameter. This constant is independent of the protection mechanisms.

\item $C_2 = \frac{\gamma}{4\Delta}(e^{2\xi}-1)$ is a constant once the protection mechanisms, the utility function, and the data sets are fixed, where $\gamma = \frac{\sum_{k=1}^K {\text{TV}}(P_k^{\calRO} || P^{\calD}_k)}{{\text{TV}}(P^{\calRO}_a || P^{\calD}_a )}$\footnote{see details of analyzing of the value of $\gamma$ in Appendix \ref{sec:proof-random-app}}. 
\end{itemize}
\end{thm}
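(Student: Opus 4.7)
The plan is to exploit the fact that $\sqrt{\text{JS}}$ is a proper metric on probability distributions, so that a triangle inequality relates the three belief distributions $F_k^{\calO}$ (prior), $F_k^{\calA}$ (posterior after observing $W_k^{\calD}$), and the hypothetical intermediate posterior $F_k^{\calRO}$ (posterior had the attacker observed the unprotected $W_k^{\calRO}$). Concretely, for each client $k$,
\begin{equation*}
\sqrt{\text{JS}(F_k^{\calRO} \| F_k^{\calO})} \;\le\; \sqrt{\text{JS}(F_k^{\calA} \| F_k^{\calO})} + \sqrt{\text{JS}(F_k^{\calA} \| F_k^{\calRO})} \;=\; \epsilon_{p,k} + \sqrt{\text{JS}(F_k^{\calA} \| F_k^{\calRO})}.
\end{equation*}
Averaging over $k$ reduces the problem to bounding each cross term $\sqrt{\text{JS}(F_k^{\calA} \| F_k^{\calRO})}$ by $\tfrac{1}{2}(e^{2\xi}-1)\,\text{TV}(P_k^{\calRO} \| P_k^{\calD})$, which is exactly what is needed to obtain (\ref{eq: total_variation-privacy trade-off}).

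For the cross-term bound, the key is to write the two posterior densities as mixtures against the respective observation laws,
\begin{equation*}
f_{D_k}^{\calA}(d) - f_{D_k}^{\calRO}(d) \;=\; \int f_{D_k \mid W_k}(d \mid w)\,\bigl(dP_k^{\calD}(w) - dP_k^{\calRO}(w)\bigr),
\end{equation*}
and then use the definition of $\xi$ to bound the likelihood ratio $f_{D_k\mid W_k}(d\mid w)/f_{D_k}(d) \in [e^{-\xi},e^{\xi}]$. A signed-measure bound of the form $|\!\int g\,d\mu| \le \tfrac{1}{2}(\sup g - \inf g)\,|\mu|(\Omega)$ applied to the zero-mean signed measure $P_k^{\calD}-P_k^{\calRO}$ then pushes the TV distance on observations to a pointwise bound on the posterior-density difference, which is fed into a reverse-Pinsker-type inequality (using that the posterior densities themselves have bounded likelihood ratios) to turn an $L^1$ difference into $\sqrt{\text{JS}}$. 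The exponential factor $e^{2\xi}-1$ enters precisely at this step.

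For the second inequality, Assumption~\ref{assump: assump_of_Delta} is used to convert $\text{TV}(P_a^{\calRO} \| P_a^{\calD})$ into utility loss. Since any $w^*\!\in\!\calW_a^*$ maximizes $\tfrac{1}{K}\sum_k U_k$, every $w\notin\calW_\Delta$ satisfies $\tfrac{1}{K}\sum_k U_k(w^*)-\tfrac{1}{K}\sum_k U_k(w) > \Delta$. Integrating against $P_a^{\calD}$, using $P_a^{\calD}(\calW_\Delta)\le \text{TV}(P_a^{\calRO} \| P_a^{\calD})/2$ from the assumption, and noting that $P_a^{\calRO}$ attains the maximum utility at convergence gives
\begin{equation*}
\epsilon_u \;\ge\; \Delta\cdot P_a^{\calD}(\calW_\Delta^c) \;\ge\; \tfrac{\Delta}{2}\,\text{TV}(P_a^{\calRO}\|P_a^{\calD}),
\end{equation*}
i.e.\ $\text{TV}(P_a^{\calRO}\|P_a^{\calD}) \le 2\epsilon_u/\Delta$. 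The definition of $\gamma$ then converts the aggregated TV into the per-client sum, $\sum_k \text{TV}(P_k^{\calRO}\|P_k^{\calD}) = \gamma\,\text{TV}(P_a^{\calRO}\|P_a^{\calD})$, and substituting into (\ref{eq: total_variation-privacy trade-off}) delivers (\ref{eq: total_variation-privacy trade-off_mt}) with $C_2 = \tfrac{\gamma}{4\Delta}(e^{2\xi}-1)$.

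The main obstacle is the cross-term bound in the second paragraph: obtaining a clean inequality of the form $\sqrt{\text{JS}(F_k^{\calA}\|F_k^{\calRO})} \le \tfrac{1}{2}(e^{2\xi}-1)\,\text{TV}(P_k^{\calRO}\|P_k^{\calD})$ requires both a data-processing-type push-forward of TV to JS through the posterior channel $w\mapsto f_{D_k\mid W_k}(\cdot\mid w)$ and a reverse-Pinsker inequality tailored to the bounded-likelihood regime. The utility-loss step is comparatively routine once Assumption~\ref{assump: assump_of_Delta} is in place, though the $\gamma$-bookkeeping relating aggregated and per-client TV distances needs to be matched carefully against the averaging conventions in the definitions of $\epsilon_p$ and $\epsilon_u$.
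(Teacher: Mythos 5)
Your proposal is correct and follows essentially the same route as the paper: the triangle inequality for $\sqrt{\text{JS}}$ applied to the triple $(F_k^{\calO},F_k^{\calA},F_k^{\calRO})$, the cross-term bound $\sqrt{\text{JS}(F_k^{\calA}\|F_k^{\calRO})}\le\frac{1}{2}(e^{2\xi}-1)\,\text{TV}(P_k^{\calRO}\|P_k^{\calD})$ obtained exactly as in the paper's \pref{lem: JSBound} (signed-measure decomposition of the posterior difference, the $e^{2\xi}$ likelihood-ratio bound coming from $\xi$, and the reverse-Pinsker step via $|\log(a/b)|\le|a-b|/\min\{a,b\}$), and the conversion of $\text{TV}(P_a^{\calRO}\|P_a^{\calD})$ into $\epsilon_u$ through Assumption~\ref{assump: assump_of_Delta}. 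Your utility step, $\epsilon_u\ge\Delta\,P_a^{\calD}(\calW_\Delta^{c})\ge\frac{\Delta}{2}\text{TV}(P_a^{\calRO}\|P_a^{\calD})$, is a slightly more direct derivation than the paper's decomposition over $\mathcal U_a$ and $\mathcal V_a$ in \pref{lem: total_variation-utility trade-off}, but it reaches the same bound, and your closing remark about matching the $\gamma$ and averaging conventions is well taken since the paper's own constants are not entirely consistent there.
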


\pref{thm: utility-privacy trade-off_JSD_mt} illustrates that the summation of the average privacy leakage of the clients and the utility loss is lower bounded by a problem-dependent constant. As shown in Fig. \ref{fig: flowchart}, Eq. \eqref{eq: total_variation-privacy trade-off_mt} is based on the triangle inequality of the Jensen-Shannon measure adopted in Bayesian privacy leakage \ref{thm: utility-privacy trade-off_JSD_mt}. It essentially dedicates that \textit{one has to trade a decrease of the privacy leakage ($\epsilon_{p}$) with a certain degree of the increase of the utility loss ($\epsilon_u$) and vice versa}. This principle bears an intriguing similarity to the no free lunch 
theorem for optimization and machine learning \cite{wolpert1997no}, which dedicated that ``if an algorithm performs well on a certain class of problems then it necessarily pays for that with degraded performance on the set of all remaining problems''. We therefore name Eq. \eqref{eq: total_variation-privacy trade-off_mt} ``no free lunch theorem (NFL) for security and utility'' in federated learning.

\textbf{Remark:} ${\text{TV}}(P_k^{\calRO} || P^{\calD}_k)$ in Eq. \eqref{eq: total_variation-privacy trade-off} provides an upper bound of the modification brought by the protection mechanism, in terms of the distance between distributions of model information with and without protection methods being applied.


\begin{figure}[t]
\centering
\includegraphics[width = 0.8\columnwidth]{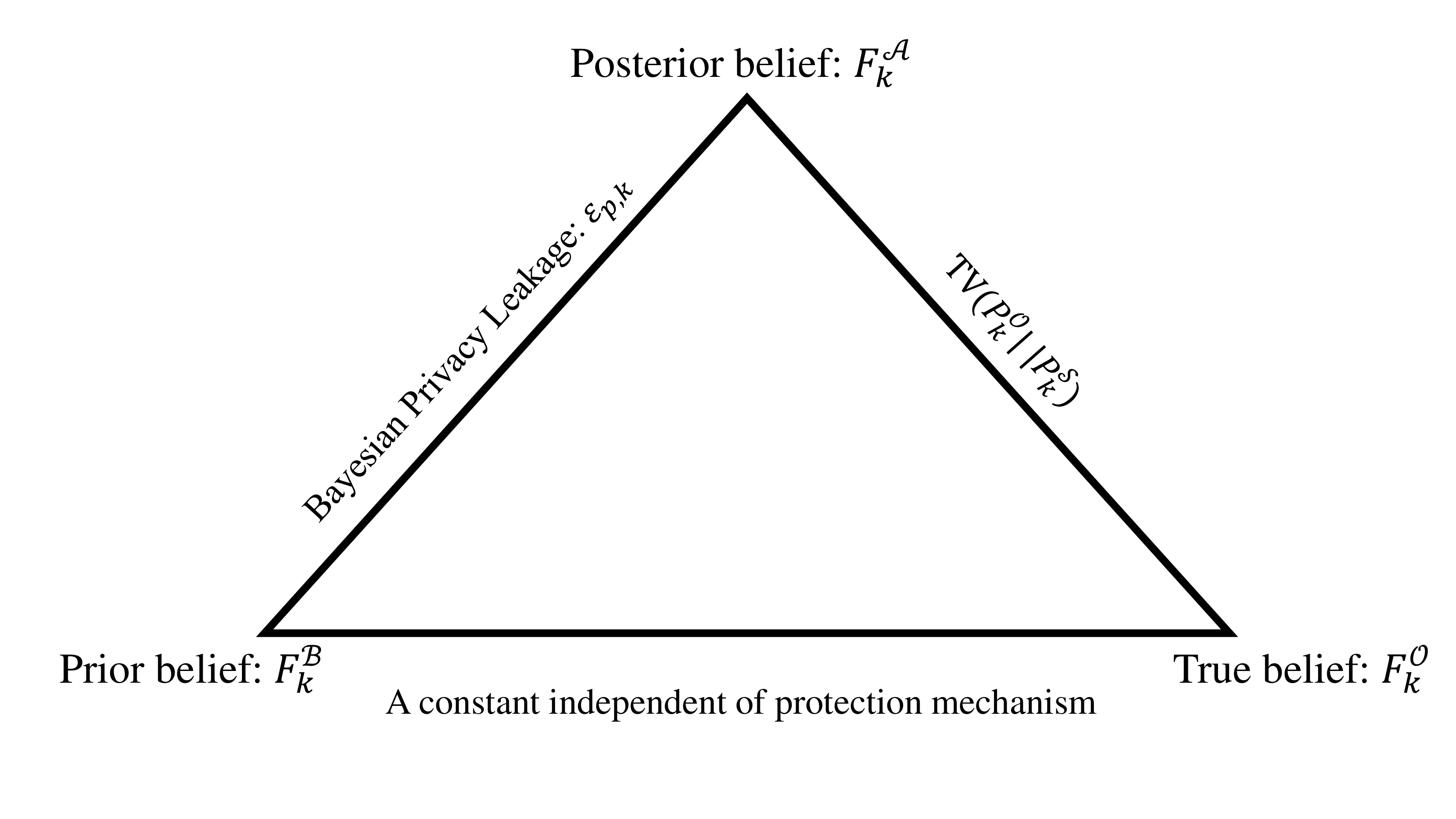}
\vspace{-5mm}
\caption{$\epsilon_{p, k} = \sqrt{{\text{JS}}(F^{\calA}_k || F^{\calO}_k)}$ as is introduced in Def. \ref{defi: average_privacy_JSD}. Notice that the square root of the Jensen-Shannon divergence satisfies the triangle inequality. It implies that the summation of $\epsilon_{p,k}$ and $\sqrt{{\text{JS}}(F^{\calA}_k || F^{\calRO}_k)}$ is at least $\sqrt{{\text{JS}}(F^{\calRO}_k || F^{\calO}_k)}$, which is a problem-dependent constant. Moreover, $\sqrt{{\text{JS}}(F^{\calA}_k || F^{\calRO}_k)}$ is at most $\frac{1}{2}(e^{2\xi}-1)\cdot\text{TV}(P_k^{\calRO} || P^{\calD}_k)$. Consequently, the summation of $\epsilon_{p,k}$ and $\frac{1}{2}(e^{2\xi}-1)\cdot\text{TV}(P_k^{\calRO} || P^{\calD}_k)$ is at least $\sqrt{{\text{JS}}(F^{\calRO}_k || F^{\calO}_k)}$. The total variation distance between the unprotected distribution and protected distribution is then used as a key element for connecting the privacy leakage and the utility loss.}
\label{fig: flowchart}
\end{figure}


\subsection{Privacy-Preserving Optimization} \label{sec: optimzation problem}
In the privacy-preserving machine learning scenario, clients aim to maintain maximal possible model utility without disclosing private information beyond an acceptable level. 
We can view this trade-off as an optimization problem. The goal is to minimize the utility loss subject to the privacy-preserving constraint. Let the Bayesian privacy leakage $\epsilon_{p}$ be defined as Def. \ref{defi: average_privacy_JSD}, 
we say a SFL system guarantees $\mathbf{\epsilon}$-\textbf{Bayesian Privacy} if it holds that 
\begin{equation}\label{eq:ep-bayes-privacy}
\epsilon_{p} \leq \epsilon.
\end{equation}
${\epsilon}$-{Bayesian Privacy} guarantees that the Bayesian privacy leakage of the SFL system never exceeds a prescribed threshold $\epsilon$ as the security requirement. For a federated machine learning system, the goal is to find a protection mechanism $M$ that achieves the minimum aggregated utility loss. These two requirements can be cast as a constrained optimization problem over the protection mechanism:

\begin{align} \label{eq: key_constraint_upper_bound}
\begin{array}{r@{\quad}l@{}l@{\quad}l}
\quad\min\limits_{M}&\epsilon_{u}: = \frac{1}{K}\sum_{k=1}^K [U_k(P_a^{\calRO}) - U_k(P_a^{\calD})],\\
\text{subject to} & \epsilon_p = \frac{1}{K}\sum_{k = 1}^K \sqrt{{\text{JS}}(F^{\calA}_k || F^{\calO}_k)}\le\epsilon.\\
\end{array}
\end{align}

\textbf{Remark:}

(1) The constant $\epsilon$ imposed in Eq. \eqref{eq: key_constraint_upper_bound} provides a privacy security guarantee that one can achieve, regardless of a variety of Bayesian inference attacks that may be launched by adversaries. 

(2) This constrained optimization problem is closely related to the optimization formulated \cite{du2012privacy} which instead aims to minimize the information leakage while the utility loss is guaranteed to be less than a given threshold.

\subsection{Applications of Privacy-Utility Trade-off} 
\label{sec:application}

In this section, we apply the no free lunch theorem between the Bayesian privacy leakage and utility loss on four privacy-preserving mechanisms: Randomization \cite{geyer2017differentially,truex2020ldp,abadi2016deep}, Sparsity \cite{shokri2015privacy, gupta2018distributed, thapa2020splitfed}, Homomorphic Encryption \cite{gentry2009fully,batchCryp} and Secret Sharing \cite{SecShare-Adi79,SecShare-Blakley79,bonawitz2017practical}\footnote{secure hardware\cite {liu2022distributed} was used in federated learning but we don't include it in our analysis since it requires to send private data to a trustworthy third-part which violates the semi-honest adversaries setting considered in this article.}. We analyze the influence of specific protection mechanisms' parameters on the fluctuation of privacy leakage and utility loss.

\subsubsection{Randomization Mechanism}
\hfill

Randomization mechanism is a natural choice since its noise parameter e.g., $\sigma$ allows a flexible control of the trade-off. In SFL, one of the \textit{Randomization} methods is to add random noise such as Gaussian noise to model gradients \cite{abadi2016deep,geyer2017differentially,truex2020ldp}. Let $W_k^{\calRO} \in \mathcal W^{\calRO}_k$ be the parameter sampled from distribution $P_k^{\calRO} = \calN(\mu_0,\Sigma_0), k =1, \cdots,K$, where $\mu_0 \in \mathbb{R}^n$, $\Sigma_0 = \text{diag}(\sigma_{1}^2,\cdots, \sigma_{n}^2)$ is a diagonal matrix. Adding noise in the parameter could be expressed as $W_k^{\calD} = W_k^{\calRO} + \epsilon_k$, where $\epsilon_k \sim \calN(0, \Sigma_\epsilon)$ and $\Sigma_\epsilon = \text{diag}(\sigma_\epsilon^2, \cdots, \sigma_\epsilon^2)$. Therefore, $W_k^{\calD}$ follows the distribution $P_k^{\calD} = \calN(\mu_0, \Sigma_0+ \Sigma_\epsilon)$. The server aggregates all the protected parameters as $W_a^{\calD} = \frac{1}{K}\sum_{k=1}^K (W_k^{\calRO} + \epsilon_k)$ following the distribution $P^{\calD}_a = \calN(\mu_0, \Sigma_0/K+ \Sigma_\epsilon/K)$. The following theorem establishes bounds for the privacy leakage and utility loss using the variance of the noise $\sigma_\epsilon^2$ (see Appendix \ref{sec:proof-random-app} for the full proof).




\begin{thm}\label{thm: privacy-utility-tradeoff-random} For the randomization mechanism by adding Gaussian noise, the privacy leakage and utility loss are bounded using variance of the Gaussian noise $\sigma_\epsilon^2$,

\begin{align} \label{tradeoff-random}
C_1
\leq \epsilon_{p} + \frac{C_3}{K}\min\left\{1,\sigma_\epsilon^2\sqrt{\sum_{i=1}^n\frac{1}{\sigma_{i}^4}}\right\},
\end{align}
and
\begin{align} \label{tradeoff-random-utility}
\epsilon_u \leq C_4 \min\left\{1,\sigma_\epsilon^2\sqrt{\sum_{i=1}^n\frac{1}{\sigma_{i}^4}}\right\},
\end{align}
where $C_1 = \frac{1}{K}\sum_{k=1}^K \sqrt{{\text{JS}}(F^{\calRO}_k || F^{\calO}_k)}, C_3 =(e^{2\xi}-1)/2$ are two constants independent of the protection mechanisms adopted, and $C_4$ is a constant satisfying that  $U(w,d)\leq C_4$ for any $w \in \mathcal{W}_k$ and $d \in \calS_k$.
\end{thm}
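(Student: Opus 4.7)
The plan is to reduce both inequalities to controlling a single quantity, namely the total variation distance between $P_k^{\calRO}=\calN(\mu_0,\Sigma_0)$ and $P_k^{\calD}=\calN(\mu_0,\Sigma_0+\Sigma_\epsilon)$ (and analogously for their aggregated counterparts). For the privacy inequality \eqref{tradeoff-random}, I would start from \pref{thm: utility-privacy trade-off_JSD_mt}, specifically inequality \eqref{eq: total_variation-privacy trade-off}, which already says
\begin{equation*}
C_1 \le \epsilon_p + \frac{1}{K}\sum_{k=1}^K \tfrac{1}{2}(e^{2\xi}-1)\,{\text{TV}}(P_k^{\calRO}\|P_k^{\calD}).
\end{equation*}
So the only remaining task for \eqref{tradeoff-random} is to show that ${\text{TV}}(P_k^{\calRO}\|P_k^{\calD})\lesssim \min\{1,\sigma_\epsilon^2\sqrt{\sum_i \sigma_i^{-4}}\}$, after which absorbing numerical factors into $C_3$ finishes the job.

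To bound that total variation distance I will use Pinsker's inequality ${\text{TV}}(P\|Q)\le \sqrt{\tfrac{1}{2}\text{KL}(P\|Q)}$ together with the closed-form KL divergence between two centered Gaussians with diagonal covariances. Since $\Sigma_0$ and $\Sigma_0+\Sigma_\epsilon$ share eigenvectors, the KL reduces to a sum of one-dimensional terms:
\begin{equation*}
\text{KL}(P_k^{\calD}\|P_k^{\calRO})=\tfrac{1}{2}\sum_{i=1}^n\left[\frac{\sigma_\epsilon^2}{\sigma_i^2}-\log\!\left(1+\frac{\sigma_\epsilon^2}{\sigma_i^2}\right)\right].
\end{equation*}
Applying the elementary inequality $x-\log(1+x)\le \tfrac{x^2}{2}$ for $x\ge 0$ (obtained from $\int_0^x t/(1+t)\,dt\le \int_0^x t\,dt$) then yields $\text{KL}\le \tfrac{\sigma_\epsilon^4}{4}\sum_i \sigma_i^{-4}$, and Pinsker delivers the claimed $\sigma_\epsilon^2\sqrt{\sum_i\sigma_i^{-4}}$ rate up to a constant. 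Combining with the trivial bound ${\text{TV}}\le 1$ produces the minimum, and the constant is folded into $C_3=(e^{2\xi}-1)/2$.

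For the utility bound \eqref{tradeoff-random-utility} I would use the standard coupling/duality fact that for any bounded function $U$ with $0\le U\le C_4$ and any two distributions $P,Q$,
\begin{equation*}
\bigl|\mathbb E_{W\sim P}U(W,d)-\mathbb E_{W\sim Q}U(W,d)\bigr|\le C_4\,{\text{TV}}(P\|Q).
\end{equation*}
Applied pointwise in $d$ and then averaged over $D_k$, this gives $\epsilon_{u,k}\le C_4\,{\text{TV}}(P_a^{\calRO}\|P_a^{\calD})$. The aggregated distributions are $P_a^{\calRO}=\calN(\mu_0,\Sigma_0/K)$ and $P_a^{\calD}=\calN(\mu_0,(\Sigma_0+\Sigma_\epsilon)/K)$; crucially, the $1/K$ cancels inside each term $\frac{\sigma_\epsilon^2/K}{\sigma_i^2/K}=\sigma_\epsilon^2/\sigma_i^2$, so the KL computation above applies verbatim and the same Pinsker-plus-trivial-bound argument delivers $\min\{1,\sigma_\epsilon^2\sqrt{\sum_i\sigma_i^{-4}}\}$ up to constants, again absorbed into $C_4$.

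\paragraph{Main obstacle.}
The routine steps (Pinsker, Taylor-type inequality) are standard; the subtlety I expect to spend the most care on is the passage from the \emph{per-client} total variation that appears in \pref{thm: utility-privacy trade-off_JSD_mt} to the \emph{aggregated} total variation that governs utility, verifying that the ratios $\sigma_\epsilon^2/\sigma_i^2$ are genuinely invariant under the $1/K$ averaging so that a single KL computation powers both bounds. Additionally, one must be careful that the $\min\{1,\cdot\}$ is justified on both sides: the upper bound $1$ comes from ${\text{TV}}\le 1$ directly, while the other branch requires the Gaussian KL estimate to remain sharp enough through the constants absorbed into $C_3$ and $C_4$.
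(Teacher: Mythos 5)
Your proof is correct, and its skeleton coincides with the paper's: both start from \pref{eq: total_variation-privacy trade-off} of \pref{thm: utility-privacy trade-off_JSD_mt} for the privacy bound, both get the utility bound from $\epsilon_u\le C_4\cdot\text{TV}(P_a^{\calRO}\|P_a^{\calD})$ for a utility bounded in $[0,C_4]$ (the paper's Lemma~\ref{lem: total_variation-utility-trade-off-app}), and both exploit the cancellation of $1/K$ in $(\Sigma_0/K)^{-1}(\Sigma_\epsilon/K)=\Sigma_0^{-1}\Sigma_\epsilon$ so that one Gaussian computation serves per-client and aggregated distributions alike. The one genuine difference is the tool used to bound $\text{TV}(\calN(\mu_0,\Sigma_0)\,\|\,\calN(\mu_0,\Sigma_0+\Sigma_\epsilon))$: the paper cites the two-sided bound of Devroye et al.\ (Lemma~\ref{lem:TV-gaussian}), which sandwiches the TV distance between $\tfrac{1}{100}$ and $\tfrac{3}{2}$ times $\min\{1,\sqrt{\sum_i\lambda_i^2}\}$ with $\lambda_i=\sigma_\epsilon^2/\sigma_i^2$, whereas you derive the upper bound from scratch via Pinsker, the closed-form Gaussian KL, and $x-\log(1+x)\le x^2/2$. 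Your route is self-contained and yields the factor $\tfrac{1}{2\sqrt{2}}<1$, so nothing extra needs to be absorbed into $C_3=(e^{2\xi}-1)/2$; the paper's cited lemma instead produces a $3/2$ that its appendix statement carries explicitly but the main-text statement drops. What the cited lemma buys that Pinsker cannot is the matching \emph{lower} bound on TV, which the paper needs separately to estimate $\gamma$ (Lemma~\ref{lem:gamma-random-analysis}) but which plays no role in this theorem. One caveat on the target inequality itself: since all clients share the same $P_k^{\calRO}$ and $P_k^{\calD}$, the average $\frac{1}{K}\sum_{k}\frac{1}{2}(e^{2\xi}-1)\text{TV}(P_k^{\calRO}\|P_k^{\calD})$ equals $C_3\cdot\text{TV}$ with no residual $1/K$, so your argument --- exactly like the paper's own appendix derivation --- proves the bound with $C_3$ rather than $\frac{C_3}{K}$; the $1/K$ in the displayed statement appears to be an inconsistency of the paper rather than a gap in your proof.
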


\textbf{Remark:} In this application, we assume $W_k^\calRO$ follows Gaussian distribution $\calN(\mu_0,\Sigma_0)$. In particular, when $\Sigma_0$ tends to zero, the Gaussian distribution degenerates into the one-point distribution when the algorithm converges. Moreover, the estimation of the variable $\gamma$ in \pref{thm: utility-privacy trade-off_JSD_mt} is shown in Appendix \ref{sec:proof-random-app}.

\begin{prop}
\label{prop:random-sigma}
If $\sigma_\epsilon^2\sqrt{\sum_{i=1}^n\frac{1}{\sigma_{i}^4}} \leq 1$, then we have
\begin{itemize}
\item The Bayesian privacy leakage $\epsilon_{p}$ is at least $C_1 - \frac{C_3}{K} \sigma_\epsilon^2\sqrt{\sum_{i=1}^n\frac{1}{\sigma_{i}^4}} $, which is a decreasing function of $\sigma_\epsilon$.
\item The utility loss $\epsilon_u$ is at most $C_4\sigma_\epsilon^2\sqrt{\sum_{i=1}^n\frac{1}{\sigma_{i}^4}}$, which is an increasing function of $\sigma_\epsilon$.
\end{itemize}
\end{prop}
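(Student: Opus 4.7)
The plan is to derive both statements directly from \pref{thm: privacy-utility-tradeoff-random} by evaluating the $\min$ expressions under the hypothesis $\sigma_\epsilon^2\sqrt{\sum_{i=1}^n 1/\sigma_i^4} \le 1$, and then reading off the monotonicity in $\sigma_\epsilon$. Since the hypothesis ensures that $\sigma_\epsilon^2\sqrt{\sum_{i=1}^n 1/\sigma_i^4} \le 1$, both $\min$ operators collapse to their second argument. No new analytic machinery is required beyond the bounds already established in \pref{thm: privacy-utility-tradeoff-random}.

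For the privacy leakage bound, I would substitute $\min\{1,\sigma_\epsilon^2\sqrt{\sum_{i=1}^n 1/\sigma_i^4}\} = \sigma_\epsilon^2\sqrt{\sum_{i=1}^n 1/\sigma_i^4}$ into Eq.~\eqref{tradeoff-random} to obtain $C_1 \le \epsilon_p + \frac{C_3}{K}\sigma_\epsilon^2\sqrt{\sum_{i=1}^n 1/\sigma_i^4}$, and then rearrange to get $\epsilon_p \ge C_1 - \frac{C_3}{K}\sigma_\epsilon^2\sqrt{\sum_{i=1}^n 1/\sigma_i^4}$. Since $C_1, C_3, K$ and the $\sigma_i$'s are all fixed and independent of $\sigma_\epsilon$, and since $C_3 \ge 0$, the right-hand side is manifestly a decreasing function of $\sigma_\epsilon$ (its derivative in $\sigma_\epsilon$ equals $-\tfrac{2C_3}{K}\sigma_\epsilon\sqrt{\sum_{i=1}^n 1/\sigma_i^4} \le 0$).

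For the utility loss bound, I would apply the same substitution to Eq.~\eqref{tradeoff-random-utility}, yielding $\epsilon_u \le C_4 \sigma_\epsilon^2\sqrt{\sum_{i=1}^n 1/\sigma_i^4}$. Because $C_4 \ge 0$ and $\sqrt{\sum_{i=1}^n 1/\sigma_i^4}$ does not depend on $\sigma_\epsilon$, the upper bound is strictly increasing in $\sigma_\epsilon$ (derivative $2C_4 \sigma_\epsilon\sqrt{\sum_{i=1}^n 1/\sigma_i^4} \ge 0$).

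There is essentially no hard step here: the work is entirely contained in \pref{thm: privacy-utility-tradeoff-random}, and the proposition merely records the closed-form regime $\sigma_\epsilon^2\sqrt{\sum_{i=1}^n 1/\sigma_i^4} \le 1$ where the noise-variance dependence is explicit. The only point deserving care is to note that $C_1$, $C_3$, and $C_4$ truly do not depend on $\sigma_\epsilon$: this is immediate for $C_1$ and $C_3$ (which \pref{thm: utility-privacy trade-off_JSD_mt} asserts are independent of the protection mechanism), and for $C_4$ it is the problem-dependent utility bound $U(w,d) \le C_4$ that holds uniformly in $w \in \mathcal W_k$ and $d \in \calS_k$. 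Hence the monotonicity conclusions in both bullets follow by inspection.
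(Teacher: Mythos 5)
Your proposal is correct and matches the paper's (implicit) argument exactly: Proposition~\ref{prop:random-sigma} is stated as an immediate corollary of Theorem~\ref{thm: privacy-utility-tradeoff-random}, obtained by collapsing the $\min$ to its second argument under the stated hypothesis and rearranging. The monotonicity observations and the remark that $C_1$, $C_3$, $C_4$ do not depend on $\sigma_\epsilon$ are precisely what the paper relies on.
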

Proposition \ref{prop:random-sigma} demonstrates that when the variance of adding noise $\sigma_\epsilon$ decreases, the lower bound of Bayesian privacy leakage $\epsilon_{p}$ increases. 
In particular, when $\sigma_\epsilon = 0$, $\sigma_\epsilon^2\sqrt{\sum_{i=1}^n\frac{1}{\sigma_{i}^4}} =0$.
Therefore, $ \epsilon_u=0$ and $\epsilon_{p} \geq C_1$. As a result, at least $C_1$ privacy leakage might be incurred due to the exposure of unprotected model information to adversaries who may launch Bayesian inference attacks. This theoretical analysis is in accordance with the empirical evidence demonstrated in \cite{zhu2020deep,he2019model}.

\subsubsection{Sparsity Mechanism}
\hfill

The trade-offs between privacy and utility change with distinct extents of sparsity. In SFL, clients launch the sparsity mechanism by uploading the partial parameters to the server. Specifically, each client uploads $d$ of total $n$ dimensions to the server. Let $W_k^{\calRO}\in \mathcal W^{\calRO}_k$ represent the parameter sampled from distribution $P^{\calRO}_k = \calN(\mu_0,\Sigma_0)$, where $\mu_0 = (\mu_u, \mu_o), \mu_u \in\mathbb R^d, \mu_o \in\mathbb R^{n-d}$ and $\Sigma_0 = \text{diag}(\Sigma_u^{d\times d}, \Sigma_o^{(n-d)\times (n-d)})$ is a diagonal matrix. Without loss of generality, we assume each client uploads the first $d$ dimensions to the server. We assume the vector composed by the last $(n-d)$ dimensions follows a Gaussian distribution denoted as $\calN(\mu_g, \Sigma_g)$, where $\Sigma_g$ is a diagonal matrix. In this setting, the protected model information follows $P_k^\calD \sim \calN(\mu, \Sigma)$, where $\mu = (\mu_u, \mu_g), \Sigma = \text{diag}(\Sigma_u^{d\times d}, \Sigma_g^{(n-d)\times (n-d)})$. Then server aggregates all the uploaded parameters of clients as $W_a^{\calD}=\frac{1}{K}\sum_{k=1}^KW_k^\calD$ following the distribution $\calN(\mu,\Sigma/K)$, and $W_a^\calRO = \frac{1}{K}\sum_{k=1}^KW_k^\calRO$ follows the distribution $\calN(\mu_0,\Sigma_0/K)$. The following theorem illustrates bounds for privacy leakage and utility loss using the dimension of the uploaded model information $d$. The full proof is deferred to Appendix \ref{sec:proof-sparisty-app}.

\begin{thm}  \label{thm: privacy-utility-tradeoff-sparsity}
For the sparsity mechanism by uploading partial information to the server, denote $h(\mu_1, \Sigma_1, \mu_2, \Sigma_2) =\left(1-\frac{\text{det}(\Sigma_1)^{1/4}\text{det}(\Sigma_2)^{1/4}}{\text{det}\left(\frac{\Sigma_1+\Sigma_2}{2}\right)^{1/2}}\exp\{-\frac{1}{8}(\mu_1-\mu_2)^T(\frac{\Sigma_1+\Sigma_2}{2})^{-1}(\mu_1-\mu_2)\}\right)^{1/2}$, we have

\begin{align}
C_1 \leq \epsilon_p + C_3\cdot h(\mu_o, \mu_g, \Sigma_o, \Sigma_g),
\end{align}
and
\begin{align}
    \epsilon_u \leq C\cdot h(\mu_o, \mu_g, \Sigma_o, \Sigma_g),
\end{align}
where $C_1 = \frac{1}{K}\sum_{k=1}^K \sqrt{{\text{JS}}(F^{\calRO}_k || F^{\calO}_k)}, C_3 =\frac{\sqrt{2}(e^{2\xi}-1)}{2}$ are two constants independent of the protection mechanisms adopted, and $C$ is a  constant satisfying that $\sqrt{2}U(w,d) \leq C$ for any $w \in \mathcal{W}_k$ and $d \in \calS_k$.
\end{thm}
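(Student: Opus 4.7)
The plan is to reduce both inequalities to the Hellinger distance between two Gaussians that differ only in one block of coordinates, invoke \pref{thm: utility-privacy trade-off_JSD_mt} for the privacy side, and use a total-variation Lipschitz bound for the utility side. The key recognition is that the function $h(\mu_1,\Sigma_1,\mu_2,\Sigma_2)$ stated in the theorem is exactly the Hellinger distance $H(\calN(\mu_1,\Sigma_1)\,\|\,\calN(\mu_2,\Sigma_2))$ via the classical Bhattacharyya-coefficient closed form for Gaussians, so the whole argument will hinge on relating $\text{TV}$ to $H$.

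Under the sparsity mechanism, $P_k^{\calRO}$ and $P_k^{\calD}$ are block-diagonal Gaussians that share the first block $\calN(\mu_u,\Sigma_u)$ and differ only on the second block, which is $\calN(\mu_o,\Sigma_o)$ for $P_k^{\calRO}$ and $\calN(\mu_g,\Sigma_g)$ for $P_k^{\calD}$. Since a common marginal cancels out in every $f$-divergence, I would first argue that $\text{TV}(P_k^{\calRO}\,\|\,P_k^{\calD}) = \text{TV}(\calN(\mu_o,\Sigma_o)\,\|\,\calN(\mu_g,\Sigma_g))$, and analogously for Hellinger. For the privacy bound I would plug $P_k^{\calRO}, P_k^{\calD}$ into the first inequality of \pref{thm: utility-privacy trade-off_JSD_mt}, then apply $\text{TV}\le\sqrt{2}\,H$ to obtain $\text{TV}(P_k^{\calRO}\,\|\,P_k^{\calD})\le\sqrt{2}\,h(\mu_o,\Sigma_o,\mu_g,\Sigma_g)$; averaging over $k$ yields the desired bound with $C_3=\tfrac{\sqrt{2}(e^{2\xi}-1)}{2}$.

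For the utility inequality I would begin with the standard estimate $|U_k(P_a^{\calRO})-U_k(P_a^{\calD})|\le 2\|f_k\|_\infty\cdot\text{TV}(P_a^{\calRO}\,\|\,P_a^{\calD})$, where $f_k(w)=\mathbb{E}_{D_k}\tfrac{1}{|D_k|}\sum_{d\in D_k}U(w,d)$. The boundedness assumption $\sqrt{2}\,U(w,d)\le C$ gives $\|f_k\|_\infty\le C/\sqrt{2}$, and another use of $\text{TV}\le\sqrt{2}\,H$ lands at $\epsilon_u\le C\cdot H(P_a^{\calRO}\,\|\,P_a^{\calD})$. I would then identify this Hellinger distance with $h(\mu_o,\Sigma_o,\mu_g,\Sigma_g)$ through the same block-cancellation argument used on the privacy side.

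The main obstacle I anticipate is that the aggregated distributions actually carry covariances $\Sigma_0/K$ and $\Sigma/K$, so a literal application of the Hellinger formula produces an extra $K$ inside the Mahalanobis exponent that is absent from the stated $h$. I expect to resolve this by tracking the scaling carefully: the determinant factors $K^{-n/2}$ cancel in the ratio $\det(\Sigma_0/K)^{1/4}\det(\Sigma/K)^{1/4}/\det(\tfrac{\Sigma_0+\Sigma}{2K})^{1/2}$, so only the exponent genuinely changes with $K$. Either one invokes the Hellinger data-processing inequality on the averaging map $(W_1,\dots,W_K)\mapsto\frac{1}{K}\sum_k W_k$ to pass from the aggregated to the per-client distributions, or one exploits that near convergence the mean gap $\mu_o-\mu_g$ along the dropped coordinates is small enough to absorb the $K$ into the constant $C$. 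Aside from this constant bookkeeping, the remaining steps are routine Gaussian calculus combined with the already-established triangle-inequality structure from \pref{thm: utility-privacy trade-off_JSD_mt}.
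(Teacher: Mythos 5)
Your proposal follows essentially the same route as the paper's proof in \pref{sec:proof-sparisty-app}: the paper likewise identifies $h$ with the Hellinger-type quantity in the Gaussian total-variation bound of \pref{lem:TV-diffmean-Gauss} ($h^2 \le \text{TV} \le \sqrt{2}\,h$), plugs $\text{TV}(P_k^{\calRO}\,\|\,P_k^{\calD}) \le \sqrt{2}\,h(\mu_o,\mu_g,\Sigma_o,\Sigma_g)$ into the first inequality of \pref{thm: utility-privacy trade-off_JSD_mt} for the privacy side, and combines $\epsilon_u \le C_4\,\text{TV}(P_a^{\calRO}\,\|\,P_a^{\calD})$ from \pref{lem: total_variation-utility-trade-off-app} with the same Gaussian bound for the utility side.

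Two remarks. First, a constant: \pref{lem: total_variation-utility-trade-off-app} exploits $0 \le U \le C_4$ to obtain $\epsilon_u \le C_4\,\text{TV}$ directly, whereas your symmetric estimate $|\mathbb{E}_P f - \mathbb{E}_Q f| \le 2\|f\|_\infty\,\text{TV}$ would leave you with $2C\,h$ instead of $C\,h$; use the shifted (one-sided) version to recover the stated constant. Second, and more substantively, the $K$-scaling obstacle you flag is genuine, and neither of your proposed resolutions closes it: Hellinger data processing applied to the averaging map only compares $H(P_a^{\calRO},P_a^{\calD})$ to the Hellinger distance between the $K$-fold \emph{product} measures, which is larger, not smaller, than the single-client distance; and absorbing $K$ into $C$ changes the statement. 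The paper itself resolves this step by asserting $h(\mu_o,\mu_g,\Sigma_o/K,\Sigma_g/K) \le h(\mu_o,\mu_g,\Sigma_o,\Sigma_g)$, but that inequality is reversed whenever $\mu_o \ne \mu_g$: scaling both covariances by $1/K$ leaves the determinant prefactor unchanged while multiplying the Mahalanobis exponent by $K$, which decreases the Bhattacharyya coefficient and hence \emph{increases} $h$. So the step you correctly identified as the weak point is also the weak point of the paper's own argument; as written it is valid only in the case $\mu_o = \mu_g$.
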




\begin{prop} \label{prop:sparsity-prop}
For the sparsity mechanism by uploading partial information to the server, we have: 
\begin{itemize}
    \item The Bayesian privacy leakage $\epsilon_{p}$ is at least $C_1 - C_3\cdot h(\mu_o, \mu_g, \Sigma_o, \Sigma_g)$, which is an increasing function of the dimension of uploaded model information $d$.
    \item The utility loss $\epsilon_u$ is at most $ C\cdot h(\mu_o, \mu_g, \Sigma_o, \Sigma_g)$, which is a decreasing function of the dimension of uploaded model information $d$.
\end{itemize}
\end{prop}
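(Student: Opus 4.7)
The plan is to derive Proposition \ref{prop:sparsity-prop} directly from Theorem \ref{thm: privacy-utility-tradeoff-sparsity} together with a monotonicity analysis of $h(\mu_o, \mu_g, \Sigma_o, \Sigma_g)$ as a function of $d$. The bounds themselves are immediate: Theorem \ref{thm: privacy-utility-tradeoff-sparsity} already gives $\epsilon_p \ge C_1 - C_3\cdot h(\mu_o, \mu_g, \Sigma_o, \Sigma_g)$ (by rearranging the first inequality there) and $\epsilon_u \le C\cdot h(\mu_o, \mu_g, \Sigma_o, \Sigma_g)$. So the only substantive step is to verify that $h$ is non-increasing in $d$, or equivalently that the Bhattacharyya-coefficient-like quantity inside $h$ is non-decreasing in $d$.

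To see this, I would exploit the diagonal structure of the covariance matrices. Writing $\Sigma_o = \text{diag}(\sigma_{o,i}^2)$ and $\Sigma_g = \text{diag}(\sigma_{g,i}^2)$ with indices ranging over the hidden coordinates $i \in \{d+1,\ldots,n\}$, and $\mu_o,\mu_g$ analogously indexed, the Bhattacharyya factor factorizes across coordinates as
\[
\frac{\det(\Sigma_o)^{1/4}\det(\Sigma_g)^{1/4}}{\det\!\bigl(\tfrac{\Sigma_o+\Sigma_g}{2}\bigr)^{1/2}} \exp\!\left(-\tfrac{1}{8}(\mu_o-\mu_g)^{T}\bigl(\tfrac{\Sigma_o+\Sigma_g}{2}\bigr)^{-1}(\mu_o-\mu_g)\right) = \prod_{i=d+1}^{n}\rho_i,
\]
where
\[
\rho_i = \sqrt{\frac{2\sigma_{o,i}\sigma_{g,i}}{\sigma_{o,i}^{2}+\sigma_{g,i}^{2}}}\, \exp\!\left(-\frac{(\mu_{o,i}-\mu_{g,i})^{2}}{4(\sigma_{o,i}^{2}+\sigma_{g,i}^{2})}\right).
\]
Each $\rho_i$ lies in $(0,1]$: the square-root factor is bounded by $1$ by AM--GM applied to $\sigma_{o,i}^{2}$ and $\sigma_{g,i}^{2}$, and the exponential has non-positive exponent.

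Given that each $\rho_i \in (0,1]$, incrementing $d$ by one removes one factor from the product $\prod_{i=d+1}^{n}\rho_i$, which cannot decrease the product. Hence $1-h^{2}$ is non-decreasing in $d$, so $h$ itself is non-increasing in $d$. Plugging this back into the two inequalities gives that the lower bound $C_1 - C_3\cdot h$ on $\epsilon_p$ is non-decreasing in $d$ while the upper bound $C\cdot h$ on $\epsilon_u$ is non-increasing in $d$, establishing both monotonicity claims.

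I do not expect any serious obstacle. The delicate point is mostly bookkeeping: one must keep straight that $\mu_o,\Sigma_o,\mu_g,\Sigma_g$ live over the \emph{hidden} coordinates (indexed by $d+1,\ldots,n$), so that increasing $d$ shrinks rather than grows the product; the rest is a standard per-coordinate Gaussian Hellinger computation combined with the two elementary inequalities noted above.
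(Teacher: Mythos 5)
Your proposal is correct and follows essentially the same route as the paper: both reduce the claim to the monotonicity of $h$ in $d$ and establish it via the per-coordinate factorization of the Bhattacharyya-type factor, using AM--GM on the variance ratio and the non-positivity of the exponent to show each extra hidden coordinate contributes a factor in $(0,1]$. The only cosmetic difference is that you write out the full product $\prod_{i=d+1}^{n}\rho_i$ at once, whereas the paper argues one coordinate at a time ($d \to d-1$); the substance is identical.
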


Proposition \ref{prop:sparsity-prop} demonstrates that when the dimension of uploaded model information $d$ increases, the lower bound of Bayesian privacy leakage $\epsilon_{p}$ increases, and the upper bound of utility loss $\epsilon_u$ decreases. In particular, $d= n$ implies that the server uploads all parameters to the server, then $ h(\mu_o, \mu_g, \Sigma_o, \Sigma_g)= 0$. Therefore, $\epsilon_p \geq C_1$ and $\epsilon_u = 0$. It shows that at least $C_1$ Bayesian privacy leakage occurs, but the utility is not sacrificed.

\subsubsection{Homomorphic Encryption} 
\hfill \\
Homomorphic Encryption (HE) \cite{gentry2009fully,rivest1978data,gentry2010toward} allows certain computation (e.g., addition) to be performed directly on ciphertexts, without decrypting them first. Such characteristic allows applying HE into FL to protect privacy because the server only accesses the uploaded parameters of users on ciphertexts instead of clients’ plaintext directly \cite{zhang2020batchcrypt,zhang2019pefl, aono2017privacy, truex2019hybrid}. Specifically, in federated learning, the uploaded model weights or gradients have been encrypted by the clients themselves. Therefore, it is hard for adversaries to espy the exposed information and infer clients' private data. The following theorem evaluates the privacy leakage and utility loss of the \textbf{approximate eigenvector} method \cite{gentry2013homomorphic}, which is a widely-used method of HE.


\begin{thm} \label{thm:tradeoff-HE}
For the encryption mechanism that encrypts the model information ($W_k^\calD = \text{Enc}(W_k^\calRO)$ using \textbf{approximate eigenvector} method), 
\begin{itemize}
    \item If the private key is unknown for server (adversary), then $\epsilon_p = 0$ and $\epsilon_u \geq \frac{C_1}{C_2}$. 
    \item If the private key is known for server (adversary), then $\epsilon_u = 0$ and $\epsilon_p \geq C_1$.
\end{itemize}
\end{thm}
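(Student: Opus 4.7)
The plan is to handle the two cases separately, in each case using the defining property of homomorphic encryption (semantic security when the key is hidden, and correctness when the key is held) to pin down one of the two quantities $\epsilon_p, \epsilon_u$ directly, and then invoking \pref{thm: utility-privacy trade-off_JSD_mt} (the NFL inequality $C_1 \le \epsilon_p + C_2\epsilon_u$) to get the bound on the other.

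First I would address Case 1 (the private key is unknown to the adversary/server). The key observation is that the approximate-eigenvector HE scheme is semantically secure: any polynomial-time adversary that sees only $W_k^{\calD}=\mathrm{Enc}(W_k^{\calRO})$ has a view statistically (indeed, for bounded-time adversaries, computationally) indistinguishable from a sample drawn independently of the underlying plaintext. Combined with the polynomial-time restriction on Bayesian inference attacks stated in Remark~(2) of Section~\ref{sec:Bayesian inference attack}, this means the conditional $f_{D_k\mid W_k^{\calD}}(d\mid w)$ reduces to the marginal $f_{D_k}(d)$, so the posterior belief $F^{\calA}_k$ coincides with the prior $F^{\calO}_k$. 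By \pref{defi: average_privacy_JSD}, $\mathrm{JS}(F^{\calA}_k\|F^{\calO}_k)=0$, hence $\epsilon_{p,k}=0$ for every $k$, and therefore $\epsilon_p=0$. Plugging this into the NFL inequality \eqref{eq: total_variation-privacy trade-off_mt} yields
\begin{equation*}
C_1 \le 0 + C_2\,\epsilon_u \quad\Longrightarrow\quad \epsilon_u \ge \frac{C_1}{C_2}.
\end{equation*}

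Next I would handle Case 2 (the adversary holds the private key). Correctness of the HE scheme implies that decrypting $W_k^{\calD}$ returns $W_k^{\calRO}$, so the adversary's view after decryption is information-theoretically equivalent to directly observing the unprotected parameter $W_k^{\calRO}$. Consequently the posterior $F^{\calA}_k$ equals $F^{\calRO}_k$, giving
\begin{equation*}
\epsilon_p \;=\; \frac{1}{K}\sum_{k=1}^{K}\sqrt{\mathrm{JS}(F^{\calA}_k\|F^{\calO}_k)} \;=\; \frac{1}{K}\sum_{k=1}^{K}\sqrt{\mathrm{JS}(F^{\calRO}_k\|F^{\calO}_k)} \;=\; C_1,
\end{equation*}
which in particular gives $\epsilon_p\ge C_1$. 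For the utility, the additive-homomorphic property ensures that aggregating the ciphertexts and then decrypting yields exactly $W_a^{\calRO}$, so the effective aggregated distribution $P_a^{\calD}$ coincides with $P_a^{\calRO}$, and therefore $\epsilon_u=\frac{1}{K}\sum_k [U_k(P_a^{\calRO})-U_k(P_a^{\calD})]=0$ by \pref{defi: utility_loss}.

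The main obstacle I anticipate is the argument for Case 1: the Bayesian-privacy framework is information-theoretic, whereas the indistinguishability guarantee of HE is computational. To keep the argument clean I would lean on Remark~(2) of Section~\ref{sec:Bayesian inference attack}, which explicitly restricts the Bayesian inference attack to polynomial time, so that for any such bounded adversary the induced posterior collapses to the prior and $\epsilon_p=0$ is justified. A secondary, more technical point is to account for the approximate (rather than exact) decryption in the approximate-eigenvector method in Case 2: one should verify that the noise in decryption is small enough that $\mathrm{Dec}(\mathrm{Enc}(W_k^{\calRO}))=W_k^{\calRO}$ holds with the parameter settings used for FL aggregation, so that the "$=C_1$" step above is truly an equality and the utility is preserved; otherwise the bounds become $\epsilon_p \ge C_1 - o(1)$ and $\epsilon_u \le o(1)$, which still implies the stated inequalities.
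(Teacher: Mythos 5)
Your proposal is correct and follows the paper's strategy almost exactly: in both cases one of $\epsilon_p,\epsilon_u$ is pinned down by the defining property of the encryption scheme, and the bound on the other follows from the NFL inequality $C_1\le \epsilon_p+C_2\epsilon_u$. For Case~1 your argument is the same as the paper's: semantic security of the LWE-based scheme gives $f_{D_k|W_k^{\calD}}(d|w)=f_{D_k}(d)$, hence $F^{\calA}_k=F^{\calO}_k$, $\epsilon_p=0$, and then $\epsilon_u\ge C_1/C_2$ from \pref{thm: utility-privacy trade-off_JSD_mt}. The one place you diverge is Case~2: the paper only argues $\epsilon_u=0$ (accurate decryption of the aggregate) and then invokes the NFL inequality to conclude $\epsilon_p\ge C_1$, whereas you additionally identify $F^{\calA}_k$ with $F^{\calRO}_k$ (the key-holding adversary's view is equivalent to seeing $W_k^{\calRO}$) and thereby compute $\epsilon_p=\frac{1}{K}\sum_k\sqrt{{\text{JS}}(F^{\calRO}_k||F^{\calO}_k)}=C_1$ exactly. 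This is a slightly stronger conclusion (equality rather than the one-sided bound) and makes Case~2 self-contained without the NFL theorem; the trade-off is that it requires the extra, though natural, identification of the post-decryption posterior with $F^{\calRO}_k$. Your two caveats --- the computational-versus-statistical gap in the semantic-security step and the approximate decryption noise --- are real but are handled (or rather, elided) by the paper in exactly the way you propose, via the polynomial-time restriction on Bayesian inference attacks.
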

\textbf{Remark:}
 1) On one hand, if private key is unknown, the Bayesian privacy leakage $\epsilon_p$ is zero because encryption satisfies semantic security, with which $D_k$ is independent of $W_k^{\calD}$. Moreover, since the private key of HE is unknown and the global model $\text{Enc}(W_a^{\calRO})$ remains encrypted, the utility evaluated by server is far away from the optimal utility (utility loss is large). \\
2) On the other hand, if the private key is mistakenly disclosed to server (adversary), then the server could decrypt the global model and utility loss $\epsilon_u=0$ in the ideal case. Moreover, the privacy leakage $\epsilon_p$ is large.

\subsection{Secret Sharing}
\textbf{Secret Sharing } \cite{SecShare-Adi79,SecShare-Blakley79,bonawitz2017practical} were developed to distribute a secret among a group of participants. However, it requires extensive exchange of messages and entails a communication overhead not viable in many federated learning settings. Let $W_k^{\calRO}$ represent the original model information that follows a uniform distribution over $[c_k^1 - \delta, c_k^1 + \delta]\times [c_k^2 - \delta, c_k^2 + \delta]\cdots\times [c_k^n - \delta, c_k^n + \delta]$, $W_k^{\calD}$ represent the distorted model information that follows a uniform distribution over $[c_k^1 - a_k^1, c_k^1 + b_k^1]\times [c_k^2 - a_k^2, c_k^2 + b_k^2]\cdots\times [c_k^n - a_k^{n}, c_k^n + b_k^{n}]$, and $0< \delta< a_k^{i}, b_k^{i}$, $\forall i = 1,2, \cdots, n$. The following theorem measures utility loss and provides lower bounds for privacy leakage. 
\begin{thm}
For the secret sharing mechanism, the privacy leakage
\begin{align}
    \epsilon_{p,k} &\ge \sqrt{{\text{JS}}(F^{\calRO}_k || F^{\calO}_k)} - \frac{1}{2}(e^{2\xi}-1)\cdot\left(1 - \prod_{j = 1}^{m}\left(\frac{2\delta}{b_k^j + a_k^j}\right)\right).
\end{align}
Furthermore, we have that
\begin{align}
    \epsilon_u = 0.
\end{align}
\end{thm}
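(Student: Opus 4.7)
The plan is to split the theorem into its two assertions and handle each via structure that is already established in the paper. For the privacy bound I would reuse the triangle-inequality decomposition depicted in \pref{fig: flowchart} that underlies \pref{thm: utility-privacy trade-off_JSD_mt}; for the utility claim I would invoke the defining correctness property of secret sharing, namely that the aggregation is exact almost surely despite each individual share being heavily randomized.

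\textbf{Privacy bound.} Because $\sqrt{{\text{JS}}(\cdot || \cdot)}$ is a metric on probability measures, the triangle inequality gives, for each client $k$,
\begin{equation*}
\sqrt{{\text{JS}}(F^{\calRO}_k || F^{\calO}_k)} \;\le\; \sqrt{{\text{JS}}(F^{\calA}_k || F^{\calO}_k)} + \sqrt{{\text{JS}}(F^{\calA}_k || F^{\calRO}_k)}.
\end{equation*}
The second term on the right is exactly the posterior-shift bound recorded in the caption of \pref{fig: flowchart}, namely $\sqrt{{\text{JS}}(F^{\calA}_k || F^{\calRO}_k)} \le \frac{1}{2}(e^{2\xi}-1)\cdot{\text{TV}}(P_k^{\calRO} || P^{\calD}_k)$, which is the exact same intermediate step used inside the proof of \pref{thm: utility-privacy trade-off_JSD_mt}. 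Rearranging yields
\begin{equation*}
\epsilon_{p,k} \;=\; \sqrt{{\text{JS}}(F^{\calA}_k || F^{\calO}_k)} \;\ge\; \sqrt{{\text{JS}}(F^{\calRO}_k || F^{\calO}_k)} \;-\; \tfrac{1}{2}(e^{2\xi}-1)\cdot{\text{TV}}(P_k^{\calRO} || P^{\calD}_k),
\end{equation*}
so the only remaining task is to evaluate the total-variation distance between the two uniform laws.

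\textbf{TV computation.} Since $\delta < a_k^j, b_k^j$ at every coordinate, the support of $P_k^{\calRO}$ is contained in that of $P_k^{\calD}$. Writing $V_{\calRO} = (2\delta)^m$ and $V_{\calD} = \prod_{j=1}^{m}(a_k^j + b_k^j)$, the densities are $1/V_{\calRO}$ on the inner cube (zero outside) and $1/V_{\calD}$ on the outer cube (zero outside). Decomposing $\int |p^{\calRO} - p^{\calD}|$ into an integral over the inner cube (integrand $1/V_{\calRO} - 1/V_{\calD}$) and over the outer-minus-inner annulus (integrand $1/V_{\calD}$), each piece evaluates to $1 - V_{\calRO}/V_{\calD}$, so
\begin{equation*}
{\text{TV}}(P_k^{\calRO} || P^{\calD}_k) \;=\; 1 - \prod_{j=1}^{m}\frac{2\delta}{b_k^j + a_k^j}.
\end{equation*}
Substituting this into the rearranged triangle inequality delivers the stated lower bound on $\epsilon_{p,k}$.

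\textbf{Utility claim.} For the second assertion $\epsilon_u = 0$, I would invoke the correctness of the secret-sharing reconstruction: although each individual share $W_k^{\calD}$ is marginally nearly uniform on a large cube and therefore individually uninformative, the reconstruction preserves the aggregate, so $\frac{1}{K}\sum_k W_k^{\calD} = \frac{1}{K}\sum_k W_k^{\calRO}$ almost surely. Hence the distributions of the two aggregated models coincide, $P_a^{\calD} = P_a^{\calRO}$, so $U_k(P_a^{\calD}) = U_k(P_a^{\calRO})$ for every $k$ and \pref{defi: utility_loss} yields $\epsilon_u = 0$ directly. The only delicate step in the whole argument is the TV bookkeeping, which hinges on the containment of supports guaranteed by $\delta < a_k^j, b_k^j$; once this is in place the proof collapses to elementary area computations, so I do not anticipate any substantive obstacle beyond reconciling the apparent index inconsistency between $n$ (used in the definition of the model space) and $m$ (used in the statement's product).
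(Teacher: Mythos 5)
Your proposal is correct and follows essentially the same route as the paper: the privacy bound is the triangle inequality for $\sqrt{{\text{JS}}}$ combined with the bound $\sqrt{{\text{JS}}(F^{\calA}_k || F^{\calRO}_k)}\le \frac{1}{2}(e^{2\xi}-1)\,{\text{TV}}(P_k^{\calRO} || P^{\calD}_k)$ (i.e., \pref{lem: total_variation-privacy trade-off_appendix}), followed by the same explicit evaluation ${\text{TV}}(P_k^{\calRO} || P^{\calD}_k) = 1 - \prod_{j}\frac{2\delta}{b_k^j + a_k^j}$ for the nested uniform laws, and the utility claim is likewise obtained from $P_a^{\calD} = P_a^{\calRO}$ because secret-sharing reconstruction leaves the aggregate unchanged. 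Your observation about the $n$ versus $m$ index mismatch is a genuine typo in the paper's statement, not a gap in your argument.
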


\textbf{Remark:} For secret sharing mechanism, there does not exist a continuous trade-off between privacy and utility, but \pref{eq: total_variation-privacy trade-off} in \pref{thm: utility-privacy trade-off_JSD_mt} still holds.

\section{Discussion}\label{sect:compare-BP}

The section illustrates the comparison of the proposed Bayesian Privacy with respect to Differential Privacy \cite{dwork2014algorithmic}, Bayesian Differential Privacy BDP \cite{BDP-icml20}, Local Differential Privacy \cite{duchi2013local} and Information Privacy \cite{du2012privacy}. A brief summary of the comparison is summarized in Table \ref{tab:tradeoff-comp}.

\begin{table}[htbp]
\footnotesize
\begin{tabular}{|c|c|c|c|c|}
\hline
& Def. 
& \begin{tabular}[c]{@{}c@{}}Protection\\ mech.\end{tabular} &
\begin{tabular}[c]{@{}c@{}} Privacy-Utility\\Trade-off \end{tabular} &
\begin{tabular}[c]{@{}c@{}}Applied \\ to FL \end{tabular} \\ \hline
\begin{tabular}[c]{@{}c@{}} DP \cite{dwork2014algorithmic}\end{tabular}
& \begin{tabular}[c]{@{}c@{}}$\frac{\Pr[M(d) \in W]}{\Pr[M(S')\in W]} \in [e^{-\epsilon},e^\epsilon]$\\ for any \textit{adjacent} data $S,S'$ \end{tabular} &\begin{tabular}[c]{@{}c@{}} Add noise \& \\ randomized response \end{tabular}& \begin{tabular}[c]{@{}c@{}} error $n \in \Omega(\frac{1}{\epsilon})$ 
\\ (Theo. 8.7 \cite{dwork2014algorithmic})\end{tabular} 
 & \cite{seif2020wireless,truex2020ldp} \\ \hline
\begin{tabular}[c]{@{}c@{}} LDP\cite{duchi2013local}\end{tabular} 
&\begin{tabular}[c]{@{}c@{}} 
$\sup \Big\{\frac{Q(S|x)}{Q(S|x')}\Big\} \le \exp(\alpha)$
\\ for any data $x,x'$ \end{tabular} 
& \begin{tabular}[c]{@{}c@{}} Add noise \& \\ randomized response \end{tabular}
&\begin{tabular}[c]{@{}c@{}} $c_l\min\{1,\frac{1}{\sqrt{n\alpha^2}},\frac{d}{n\alpha^2} \}\le \mathfrak{M}_n $ \\ (Theo.1 \cite{duchi2013local}) \end{tabular}
& \begin{tabular}[c]{@{}c@{}} \cite{kim2021federated,truex2020ldp} \\ \cite{zhao2020local,seif2020wireless} \end{tabular} \\ \hline
\begin{tabular}[c]{@{}c@{}} BDP \cite{BDP-icml20}\end{tabular} 
& \begin{tabular}[c]{@{}c@{}}$ \Pr \big[\log \frac{P(w|d)}{P(W|S')}\ge \epsilon_\mu \big]\le \delta_\mu$\\ for any \textit{adjacent} data $S,S'$ \\
differing by point $x' \sim \mu(x)$\end{tabular}
& \begin{tabular}[c]{@{}c@{}} Add noise \& \\ randomized response \end{tabular} 
& N.A. 
&\cite{FL-BDP19} \\ \hline
\begin{tabular}[c]{@{}c@{}} IP \cite{du2012privacy}\end{tabular} 
& \begin{tabular}[c]{@{}c@{}}$\frac{p_{S|U}(s|u)}{p_{S}(d)} \in [e^{-\epsilon},e^\epsilon]$\\ for any $s$ and any $u$\end{tabular}
&\begin{tabular}[c]{@{}c@{}} Add noise \\ (e.g., Laplacian) \end{tabular} 
&\begin{tabular}[c]{@{}c@{}}min IP s.t. distortion \\ (Def. 3 \cite{du2012privacy})\end{tabular} 
&no \\ \hline
\begin{tabular}[c]{@{}c@{}} Bayesian \\ Privacy \end{tabular} 
& \begin{tabular}[c]{@{}c@{}} $\Big({JS(F^{\calA}_k || F^{\calO}_k)}\Big)^{\frac{1}{2}}\le \epsilon$, 
\\ for any threat model (Def. \ref{defi: average_privacy_JSD})
\end{tabular} 
& 
\begin{tabular}[c]{@{}c@{}}Add noise, Sparsity \& \\ 
HE (Sect.\ref{sec:application})\\ \end{tabular} 
& \begin{tabular}[c]{@{}c@{}}$ C_1 \le\epsilon_{p} + C_2\epsilon_{u}$ 
\\No free lunch theo. \end{tabular} 
& \begin{tabular}[c]{@{}c@{}} this \\ article \end{tabular}
\\ \hline
\end{tabular}
\caption{Comparison of privacy-utility trade-offs with Bayesian Privacy (BP) and related privacy measurements, namely, Diff. Privacy (DP) \cite{dwork2014algorithmic}, Local DP (LDP) \cite{duchi2013local}, Bayesian DP (BDP) \cite{BDP-icml20} and Information Privacy (IP) \cite{du2012privacy}. \label{tab:tradeoff-comp}}
\end{table}

\subsection{The comparison with Differential Privacy and its variants}
\label{subsect:compare-BP-DP}

We illustrate the comparison of Bayesian Privacy with respect to Differential Privacy and its variants as follows. 
\begin{itemize}
\item First, the privacy-utility trade-off analyzed in differential privacy (DP) is essentially characterized by a reciprocal relation as shown by Theorem 8.7 of \cite{dwork2014algorithmic}, which dedicates that the outputs protected by an $\epsilon$-differentially private mechanism have the maximum error $\Omega\left(\frac{1}{\epsilon}\right)$. This trade-off analysis is in contrast to the \textit{No free lunch theorem} (\pref{thm: utility-privacy trade-off_JSD_mt}) in the present article, which dictates that the sum of privacy loss and utility loss is greater than a problem-dependent constant.

\item Second, it is shown that a $\xi$-maximum Bayesian privacy preserving mapping $f_{W|S}(\cdot)$ is $(2\xi)$-differentially private (see Appendix \ref{appendix:proof-BP-DP} for details). 

\item Third, the privacy-utility trade-off established by the local differential privacy dedicates that, for an $\alpha${-private} estimator, its \textit{minimax error bound} (in utility) is greater than a constant that is proportional to $\frac{1}{\alpha^2}$ (see Theorem 1 in \cite{duchi2013local}). 

\item Fourth, Bayesian Differential Privacy (BDP)  was also proposed to measure the privacy loss following Bayesian rule \cite{duchi2013local}. Nevertheless, BDP estimated the posterior of query outputs (denoted by $S$) given private data (denoted by $x$). This is in sharp contrast to the proposed Bayesian Privacy, which models adversaries' posterior belief of private data (denoted by $S$) given publicly exchanged information (denoted by $W^\calD$). Moreover, no privacy-utility trade-off analysis was provided for BDP in \cite{duchi2013local}. 

\item Last but not least, DP and its variants are not amenable to include encryption as a protection mechanism. To our best knowledge, the SFL framework based on Bayesian inference attack is the first that is applicable to different privacy-protection mechanisms, including randomization in Differential Privacy \cite{geyer2017differentially,truex2020ldp,abadi2016deep}, Sparsity \cite{shokri2015privacy, gupta2018distributed, thapa2020splitfed}, Homomorphic Encryption \cite{gentry2009fully,batchCryp} and Secret Sharing \cite{SecShare-Adi79,SecShare-Blakley79,bonawitz2017practical}.

\end{itemize}

\subsection{The relationship between Bayesian Privacy and Information Privacy}\label{subsect:compare-BP-IP}

There are three fundamental differences between Bayesian Privacy (BP) and Information Privacy (IP) \cite{du2012privacy} as shown below. 
\begin{itemize}
\item One fundamental difference between BP and IP lies in the fact that BP is formulated within a \textit{secure federated learning} setting as illustrated in Sect. \ref{sec:framework}, while it is unclear how to apply IP for delineating the privacy-utility trade-off in distributed learning scenarios. 

\item As per the privacy definitions, BP involves the \textit{averaged} gain of information about private data that an adversary may obtain by observing publicly exposed model information (see Def. \ref{defi: average_privacy_JSD} in this article). On the other hand, information privacy is defined based on the \textit{bound} of such information gain (see Def. 6 in \cite{du2012privacy}). 
Note that we also give the bounds of Bayesian Privacy in Eq. (\ref{eq:ep-bayes-privacy}), which is nevertheless based on the average gain. 

\item In terms of the privacy-utility trade-off, the optimal trade-off in BP is cast as the solution of a constrained optimization where the goal is to minimize utility loss subject to the hard constrain of maintaining Bayesian privacy leakage less than an acceptable threshold (the problem in Eq. (\ref{eq: key_constraint_upper_bound})). In contrast, the optimal trade-off in IP is to minimize the privacy leakage subject to a given utility loss constrain (Def. 3 in \cite{du2012privacy}). 
\end{itemize}

\textbf{Remark:}
It is worth noting that the Bayesian privacy is measured using JS divergence instead of KL divergence. Unlike KL divergence, {\text{JS}} divergence is symmetrical and its square root satisfies the triangle inequality. The property of triangular inequality of JS divergence facilitates the derivation of our main results illustrated in Sect. \ref{subsect:PU-tradeoff}.

\section{Conclusion}
\label{sec:conclusion}


In this article, we proposed a statistical framework in SFL based on Bayesian inference attack to take into account two compelling requirements i.e. the protection of private data as well as the maximization of model utility in secure federated learning (SFL). The proposed framework is based on the cornerstone definition of Bayesian privacy Leakage that quantitatively measures the amount of information about private data that semi-honest adversaries may infer from the espied information during the SFL learning process. 
Then the trade-off privacy and utility is 
formulated as an optimization problem in which the goal is to find a modified distribution of exchanged model information that achieves the smallest utility loss without incurring privacy loss above an acceptable level. The theoretical analysis of the trade-off then leads us to the No Free Lunch (NFL) theorem for federated learning, which dictates that one has to trade a high privacy persevering guarantee with a certain degree of utility loss, and vice versa. This trade-off analysis is applicable to protection mechanisms including Randomization \cite{geyer2017differentially,truex2020ldp,abadi2016deep}
Sparsity \cite{shokri2015privacy, gupta2018distributed, thapa2020splitfed}, Homomorphic Encryption \cite{gentry2009fully,batchCryp} and Secret Sharing \cite{SecShare-Adi79,SecShare-Blakley79,bonawitz2017practical} which are special cases of the general theorem proved in \pref{thm: utility-privacy trade-off_JSD_mt}. 

The analysis of privacy-utility trade-off in secure federated learning (SFL), to our best knowledge, is the first work that takes into account different specific protection mechanisms, including DP and HE, in a unified framework. While the main result disclosed in this article is inspiring, we plan to apply the technique we developed during this research endeavor to explore in the future the trade-off between efficiency and privacy, which provides a more comprehensive view of secure federated learning. \\


\textbf{ACKNOWLEDGEMENTS}

This work was partially supported by the National Key Research and Development Program of China under Grant 2018AAA0101100 and Hong Kong RGC TRS T41-603/20-R.

\bibliography{main}


\begin{thebibliography}{90}


\ifx \showCODEN    \undefined \def \showCODEN     #1{\unskip}     \fi
\ifx \showDOI      \undefined \def \showDOI       #1{#1}\fi
\ifx \showISBNx    \undefined \def \showISBNx     #1{\unskip}     \fi
\ifx \showISBNxiii \undefined \def \showISBNxiii  #1{\unskip}     \fi
\ifx \showISSN     \undefined \def \showISSN      #1{\unskip}     \fi
\ifx \showLCCN     \undefined \def \showLCCN      #1{\unskip}     \fi
\ifx \shownote     \undefined \def \shownote      #1{#1}          \fi
\ifx \showarticletitle \undefined \def \showarticletitle #1{#1}   \fi
\ifx \showURL      \undefined \def \showURL       {\relax}        \fi
\providecommand\bibfield[2]{#2}
\providecommand\bibinfo[2]{#2}
\providecommand\natexlab[1]{#1}
\providecommand\showeprint[2][]{arXiv:#2}

\bibitem[\protect\citeauthoryear{Abadi, Chu, Goodfellow, McMahan, Mironov,
  Talwar, and Zhang}{Abadi et~al\mbox{.}}{2016}]%
        {abadi2016deep}
\bibfield{author}{\bibinfo{person}{Martin Abadi}, \bibinfo{person}{Andy Chu},
  \bibinfo{person}{Ian Goodfellow}, \bibinfo{person}{H~Brendan McMahan},
  \bibinfo{person}{Ilya Mironov}, \bibinfo{person}{Kunal Talwar}, {and}
  \bibinfo{person}{Li Zhang}.} \bibinfo{year}{2016}\natexlab{}.
\newblock \showarticletitle{Deep learning with differential privacy}. In
  \bibinfo{booktitle}{\emph{Proceedings of the 2016 ACM SIGSAC conference on
  computer and communications security}}. \bibinfo{publisher}{ACM},
  \bibinfo{address}{New York, NY, USA}, \bibinfo{pages}{308--318}.
\newblock


\bibitem[\protect\citeauthoryear{Aono, Hayashi, Wang, Moriai,
  et~al\mbox{.}}{Aono et~al\mbox{.}}{2017}]%
        {aono2017privacy}
\bibfield{author}{\bibinfo{person}{Yoshinori Aono}, \bibinfo{person}{Takuya
  Hayashi}, \bibinfo{person}{Lihua Wang}, \bibinfo{person}{Shiho Moriai},
  {et~al\mbox{.}}} \bibinfo{year}{2017}\natexlab{}.
\newblock \showarticletitle{Privacy-preserving deep learning via additively
  homomorphic encryption}.
\newblock \bibinfo{journal}{\emph{IEEE Transactions on Information Forensics
  and Security}} \bibinfo{volume}{13}, \bibinfo{number}{5}
  (\bibinfo{year}{2017}), \bibinfo{pages}{1333--1345}.
\newblock


\bibitem[\protect\citeauthoryear{Asoodeh, Diaz, Alajaji, and Linder}{Asoodeh
  et~al\mbox{.}}{2018}]%
        {asoodeh2018estimation}
\bibfield{author}{\bibinfo{person}{Shahab Asoodeh}, \bibinfo{person}{Mario
  Diaz}, \bibinfo{person}{Fady Alajaji}, {and} \bibinfo{person}{Tam{\'a}s
  Linder}.} \bibinfo{year}{2018}\natexlab{}.
\newblock \showarticletitle{Estimation efficiency under privacy constraints}.
\newblock \bibinfo{journal}{\emph{IEEE Transactions on Information Theory}}
  \bibinfo{volume}{65}, \bibinfo{number}{3} (\bibinfo{year}{2018}),
  \bibinfo{pages}{1512--1534}.
\newblock


\bibitem[\protect\citeauthoryear{Blakley}{Blakley}{1979}]%
        {SecShare-Blakley79}
\bibfield{author}{\bibinfo{person}{G.R. Blakley}.}
  \bibinfo{year}{1979}\natexlab{}.
\newblock \showarticletitle{Safeguarding cryptographic keys}. In
  \bibinfo{booktitle}{\emph{Proceedings of the 1979 AFIPS National Computer
  Conference}}. \bibinfo{publisher}{AFIPS Press}, \bibinfo{address}{Monval, NJ,
  USA}, \bibinfo{pages}{313--317}.
\newblock


\bibitem[\protect\citeauthoryear{Blau and Michaeli}{Blau and Michaeli}{2019}]%
        {blau2019rethinking}
\bibfield{author}{\bibinfo{person}{Yochai Blau} {and} \bibinfo{person}{Tomer
  Michaeli}.} \bibinfo{year}{2019}\natexlab{}.
\newblock \showarticletitle{Rethinking lossy compression: The
  rate-distortion-perception tradeoff}. In
  \bibinfo{booktitle}{\emph{International Conference on Machine Learning}}.
  PMLR, \bibinfo{pages}{675--685}.
\newblock


\bibitem[\protect\citeauthoryear{Bonawitz, Ivanov, Kreuter, Marcedone, McMahan,
  Patel, Ramage, Segal, and Seth}{Bonawitz et~al\mbox{.}}{2017}]%
        {bonawitz2017practical}
\bibfield{author}{\bibinfo{person}{Keith Bonawitz}, \bibinfo{person}{Vladimir
  Ivanov}, \bibinfo{person}{Ben Kreuter}, \bibinfo{person}{Antonio Marcedone},
  \bibinfo{person}{H~Brendan McMahan}, \bibinfo{person}{Sarvar Patel},
  \bibinfo{person}{Daniel Ramage}, \bibinfo{person}{Aaron Segal}, {and}
  \bibinfo{person}{Karn Seth}.} \bibinfo{year}{2017}\natexlab{}.
\newblock \showarticletitle{Practical secure aggregation for privacy-preserving
  machine learning}. In \bibinfo{booktitle}{\emph{proceedings of the 2017 ACM
  SIGSAC Conference on Computer and Communications Security}}.
  \bibinfo{pages}{1175--1191}.
\newblock


\bibitem[\protect\citeauthoryear{Box and Tiao}{Box and Tiao}{2011}]%
        {box2011bayesian}
\bibfield{author}{\bibinfo{person}{George~EP Box} {and}
  \bibinfo{person}{George~C Tiao}.} \bibinfo{year}{2011}\natexlab{}.
\newblock \bibinfo{booktitle}{\emph{Bayesian inference in statistical
  analysis}}. Vol.~\bibinfo{volume}{40}.
\newblock \bibinfo{publisher}{John Wiley \& Sons}.
\newblock


\bibitem[\protect\citeauthoryear{Boyd, Boyd, and Vandenberghe}{Boyd
  et~al\mbox{.}}{2004}]%
        {boyd2004convex}
\bibfield{author}{\bibinfo{person}{Stephen Boyd}, \bibinfo{person}{Stephen~P
  Boyd}, {and} \bibinfo{person}{Lieven Vandenberghe}.}
  \bibinfo{year}{2004}\natexlab{}.
\newblock \bibinfo{booktitle}{\emph{Convex optimization}}.
\newblock \bibinfo{publisher}{Cambridge university press}.
\newblock


\bibitem[\protect\citeauthoryear{Brakerski}{Brakerski}{2012}]%
        {brakerski2012fully}
\bibfield{author}{\bibinfo{person}{Zvika Brakerski}.}
  \bibinfo{year}{2012}\natexlab{}.
\newblock \showarticletitle{Fully homomorphic encryption without modulus
  switching from classical GapSVP}. In \bibinfo{booktitle}{\emph{Annual
  Cryptology Conference}}. Springer, \bibinfo{pages}{868--886}.
\newblock


\bibitem[\protect\citeauthoryear{Brakerski, Langlois, Peikert, Regev, and
  Stehl{\'e}}{Brakerski et~al\mbox{.}}{2013}]%
        {brakerski2013classical}
\bibfield{author}{\bibinfo{person}{Zvika Brakerski}, \bibinfo{person}{Adeline
  Langlois}, \bibinfo{person}{Chris Peikert}, \bibinfo{person}{Oded Regev},
  {and} \bibinfo{person}{Damien Stehl{\'e}}.} \bibinfo{year}{2013}\natexlab{}.
\newblock \showarticletitle{Classical hardness of learning with errors}. In
  \bibinfo{booktitle}{\emph{Proceedings of the forty-fifth annual ACM symposium
  on Theory of computing}}. \bibinfo{pages}{575--584}.
\newblock


\bibitem[\protect\citeauthoryear{Bu, Dong, Long, and Su}{Bu
  et~al\mbox{.}}{2020}]%
        {bu2020deep}
\bibfield{author}{\bibinfo{person}{Zhiqi Bu}, \bibinfo{person}{Jinshuo Dong},
  \bibinfo{person}{Qi Long}, {and} \bibinfo{person}{Weijie~J Su}.}
  \bibinfo{year}{2020}\natexlab{}.
\newblock \showarticletitle{Deep learning with Gaussian differential privacy}.
\newblock \bibinfo{journal}{\emph{Harvard data science review}}
  \bibinfo{volume}{2020}, \bibinfo{number}{23} (\bibinfo{year}{2020}).
\newblock


\bibitem[\protect\citeauthoryear{Buhler, Lenstra, and Pomerance}{Buhler
  et~al\mbox{.}}{1993}]%
        {buhler1993factoring}
\bibfield{author}{\bibinfo{person}{Joe~P Buhler}, \bibinfo{person}{Hendrik~W
  Lenstra}, {and} \bibinfo{person}{Carl Pomerance}.}
  \bibinfo{year}{1993}\natexlab{}.
\newblock \showarticletitle{Factoring integers with the number field sieve}.
\newblock In \bibinfo{booktitle}{\emph{The development of the number field
  sieve}}. \bibinfo{publisher}{Springer}, \bibinfo{pages}{50--94}.
\newblock


\bibitem[\protect\citeauthoryear{Chen, Wang, Guo, Xu, Deng, Liu, Ma, Xu, Xu,
  and Gao}{Chen et~al\mbox{.}}{2021}]%
        {chen2021pre}
\bibfield{author}{\bibinfo{person}{Hanting Chen}, \bibinfo{person}{Yunhe Wang},
  \bibinfo{person}{Tianyu Guo}, \bibinfo{person}{Chang Xu},
  \bibinfo{person}{Yiping Deng}, \bibinfo{person}{Zhenhua Liu},
  \bibinfo{person}{Siwei Ma}, \bibinfo{person}{Chunjing Xu},
  \bibinfo{person}{Chao Xu}, {and} \bibinfo{person}{Wen Gao}.}
  \bibinfo{year}{2021}\natexlab{}.
\newblock \showarticletitle{Pre-trained image processing transformer}. In
  \bibinfo{booktitle}{\emph{Proceedings of the IEEE/CVF Conference on Computer
  Vision and Pattern Recognition}}. \bibinfo{pages}{12299--12310}.
\newblock


\bibitem[\protect\citeauthoryear{Cheng, Fan, Jin, Liu, Chen, Papadopoulos, and
  Yang}{Cheng et~al\mbox{.}}{2021}]%
        {cheng2021secureboost}
\bibfield{author}{\bibinfo{person}{Kewei Cheng}, \bibinfo{person}{Tao Fan},
  \bibinfo{person}{Yilun Jin}, \bibinfo{person}{Yang Liu},
  \bibinfo{person}{Tianjian Chen}, \bibinfo{person}{Dimitrios Papadopoulos},
  {and} \bibinfo{person}{Qiang Yang}.} \bibinfo{year}{2021}\natexlab{}.
\newblock \showarticletitle{Secureboost: A lossless federated learning
  framework}.
\newblock \bibinfo{journal}{\emph{IEEE Intelligent Systems}}
  (\bibinfo{year}{2021}).
\newblock


\bibitem[\protect\citeauthoryear{Dempster}{Dempster}{1968}]%
        {dempster1968generalization}
\bibfield{author}{\bibinfo{person}{Arthur~P Dempster}.}
  \bibinfo{year}{1968}\natexlab{}.
\newblock \showarticletitle{A generalization of Bayesian inference}.
\newblock \bibinfo{journal}{\emph{Journal of the Royal Statistical Society:
  Series B (Methodological)}} \bibinfo{volume}{30}, \bibinfo{number}{2}
  (\bibinfo{year}{1968}), \bibinfo{pages}{205--232}.
\newblock


\bibitem[\protect\citeauthoryear{Devlin, Chang, Lee, and Toutanova}{Devlin
  et~al\mbox{.}}{2018}]%
        {devlin2018bert}
\bibfield{author}{\bibinfo{person}{Jacob Devlin}, \bibinfo{person}{Ming-Wei
  Chang}, \bibinfo{person}{Kenton Lee}, {and} \bibinfo{person}{Kristina
  Toutanova}.} \bibinfo{year}{2018}\natexlab{}.
\newblock \showarticletitle{Bert: Pre-training of deep bidirectional
  transformers for language understanding}.
\newblock \bibinfo{journal}{\emph{arXiv preprint arXiv:1810.04805}}
  (\bibinfo{year}{2018}).
\newblock


\bibitem[\protect\citeauthoryear{Devroye, Mehrabian, and Reddad}{Devroye
  et~al\mbox{.}}{2018}]%
        {devroye2018total}
\bibfield{author}{\bibinfo{person}{Luc Devroye}, \bibinfo{person}{Abbas
  Mehrabian}, {and} \bibinfo{person}{Tommy Reddad}.}
  \bibinfo{year}{2018}\natexlab{}.
\newblock \showarticletitle{The total variation distance between
  high-dimensional Gaussians}.
\newblock \bibinfo{journal}{\emph{arXiv preprint arXiv:1810.08693}}
  (\bibinfo{year}{2018}).
\newblock


\bibitem[\protect\citeauthoryear{Dong, Yang, Wang, Wei, Liu, Wang, Gao, Zhou,
  and Hon}{Dong et~al\mbox{.}}{2019}]%
        {dong2019unified}
\bibfield{author}{\bibinfo{person}{Li Dong}, \bibinfo{person}{Nan Yang},
  \bibinfo{person}{Wenhui Wang}, \bibinfo{person}{Furu Wei},
  \bibinfo{person}{Xiaodong Liu}, \bibinfo{person}{Yu Wang},
  \bibinfo{person}{Jianfeng Gao}, \bibinfo{person}{Ming Zhou}, {and}
  \bibinfo{person}{Hsiao-Wuen Hon}.} \bibinfo{year}{2019}\natexlab{}.
\newblock \showarticletitle{Unified language model pre-training for natural
  language understanding and generation}.
\newblock \bibinfo{journal}{\emph{Advances in Neural Information Processing
  Systems}}  \bibinfo{volume}{32} (\bibinfo{year}{2019}).
\newblock


\bibitem[\protect\citeauthoryear{du~Pin~Calmon and Fawaz}{du~Pin~Calmon and
  Fawaz}{2012}]%
        {du2012privacy}
\bibfield{author}{\bibinfo{person}{Fl{\'a}vio du Pin~Calmon} {and}
  \bibinfo{person}{Nadia Fawaz}.} \bibinfo{year}{2012}\natexlab{}.
\newblock \showarticletitle{Privacy against statistical inference}. In
  \bibinfo{booktitle}{\emph{2012 50th annual Allerton conference on
  communication, control, and computing (Allerton)}}. IEEE,
  \bibinfo{pages}{1401--1408}.
\newblock


\bibitem[\protect\citeauthoryear{Duchi, Jordan, and Wainwright}{Duchi
  et~al\mbox{.}}{2013}]%
        {duchi2013local}
\bibfield{author}{\bibinfo{person}{John~C Duchi}, \bibinfo{person}{Michael~I
  Jordan}, {and} \bibinfo{person}{Martin~J Wainwright}.}
  \bibinfo{year}{2013}\natexlab{}.
\newblock \showarticletitle{Local privacy and minimax bounds: Sharp rates for
  probability estimation}.
\newblock \bibinfo{journal}{\emph{arXiv preprint arXiv:1305.6000}}
  (\bibinfo{year}{2013}).
\newblock


\bibitem[\protect\citeauthoryear{Dwork}{Dwork}{2006}]%
        {dwork2006differential}
\bibfield{author}{\bibinfo{person}{Cynthia Dwork}.}
  \bibinfo{year}{2006}\natexlab{}.
\newblock \showarticletitle{Differential privacy}. In
  \bibinfo{booktitle}{\emph{International Colloquium on Automata, Languages,
  and Programming}}. Springer, \bibinfo{pages}{1--12}.
\newblock


\bibitem[\protect\citeauthoryear{Dwork}{Dwork}{2008}]%
        {dwork2008differential}
\bibfield{author}{\bibinfo{person}{Cynthia Dwork}.}
  \bibinfo{year}{2008}\natexlab{}.
\newblock \showarticletitle{Differential privacy: A survey of results}. In
  \bibinfo{booktitle}{\emph{International conference on theory and applications
  of models of computation}}. Springer, \bibinfo{pages}{1--19}.
\newblock


\bibitem[\protect\citeauthoryear{Dwork, McSherry, Nissim, and Smith}{Dwork
  et~al\mbox{.}}{2006}]%
        {dwork2006calibrating}
\bibfield{author}{\bibinfo{person}{Cynthia Dwork}, \bibinfo{person}{Frank
  McSherry}, \bibinfo{person}{Kobbi Nissim}, {and} \bibinfo{person}{Adam
  Smith}.} \bibinfo{year}{2006}\natexlab{}.
\newblock \showarticletitle{Calibrating noise to sensitivity in private data
  analysis}. In \bibinfo{booktitle}{\emph{Theory of cryptography conference}}.
  Springer, \bibinfo{pages}{265--284}.
\newblock


\bibitem[\protect\citeauthoryear{Dwork and Naor}{Dwork and Naor}{2010}]%
        {Imposs-DworkNaor10}
\bibfield{author}{\bibinfo{person}{Cynthia Dwork} {and} \bibinfo{person}{Moni
  Naor}.} \bibinfo{year}{2010}\natexlab{}.
\newblock \showarticletitle{On the Difficulties of Disclosure Prevention in
  Statistical Databases or The Case for Differential Privacy}.
\newblock \bibinfo{journal}{\emph{Journal of Privacy and Confidentiality}}
  \bibinfo{volume}{2}, \bibinfo{number}{1} (\bibinfo{date}{Sep.}
  \bibinfo{year}{2010}).
\newblock


\bibitem[\protect\citeauthoryear{Dwork, Roth, et~al\mbox{.}}{Dwork
  et~al\mbox{.}}{2014}]%
        {dwork2014algorithmic}
\bibfield{author}{\bibinfo{person}{Cynthia Dwork}, \bibinfo{person}{Aaron
  Roth}, {et~al\mbox{.}}} \bibinfo{year}{2014}\natexlab{}.
\newblock \showarticletitle{The algorithmic foundations of differential
  privacy.}
\newblock \bibinfo{journal}{\emph{Foundations and Trends in Theoretical
  Computer Science}} \bibinfo{volume}{9}, \bibinfo{number}{3-4}
  (\bibinfo{year}{2014}), \bibinfo{pages}{211--407}.
\newblock


\bibitem[\protect\citeauthoryear{Endres and Schindelin}{Endres and
  Schindelin}{2003}]%
        {endres2003new}
\bibfield{author}{\bibinfo{person}{Dominik~Maria Endres} {and}
  \bibinfo{person}{Johannes~E Schindelin}.} \bibinfo{year}{2003}\natexlab{}.
\newblock \showarticletitle{A new metric for probability distributions}.
\newblock \bibinfo{journal}{\emph{IEEE Transactions on Information theory}}
  \bibinfo{volume}{49}, \bibinfo{number}{7} (\bibinfo{year}{2003}),
  \bibinfo{pages}{1858--1860}.
\newblock


\bibitem[\protect\citeauthoryear{Fredrikson, Jha, and Ristenpart}{Fredrikson
  et~al\mbox{.}}{2015}]%
        {fredrikson2015model}
\bibfield{author}{\bibinfo{person}{Matt Fredrikson}, \bibinfo{person}{Somesh
  Jha}, {and} \bibinfo{person}{Thomas Ristenpart}.}
  \bibinfo{year}{2015}\natexlab{}.
\newblock \showarticletitle{Model inversion attacks that exploit confidence
  information and basic countermeasures}. In
  \bibinfo{booktitle}{\emph{Proceedings of the 22nd ACM SIGSAC Conference on
  Computer and Communications Security}}. \bibinfo{pages}{1322--1333}.
\newblock


\bibitem[\protect\citeauthoryear{Geiping, Bauermeister, Dr{\"o}ge, and
  Moeller}{Geiping et~al\mbox{.}}{2020}]%
        {geiping2020inverting}
\bibfield{author}{\bibinfo{person}{Jonas Geiping}, \bibinfo{person}{Hartmut
  Bauermeister}, \bibinfo{person}{Hannah Dr{\"o}ge}, {and}
  \bibinfo{person}{Michael Moeller}.} \bibinfo{year}{2020}\natexlab{}.
\newblock \showarticletitle{Inverting Gradients--How easy is it to break
  privacy in federated learning?}
\newblock \bibinfo{journal}{\emph{arXiv preprint arXiv:2003.14053}}
  (\bibinfo{year}{2020}).
\newblock


\bibitem[\protect\citeauthoryear{Geman and Geman}{Geman and Geman}{1984a}]%
        {geman1984stochastic}
\bibfield{author}{\bibinfo{person}{Stuart Geman} {and} \bibinfo{person}{Donald
  Geman}.} \bibinfo{year}{1984}\natexlab{a}.
\newblock \showarticletitle{Stochastic relaxation, Gibbs distributions, and the
  Bayesian restoration of images}.
\newblock \bibinfo{journal}{\emph{IEEE Transactions on pattern analysis and
  machine intelligence}} \bibinfo{number}{6} (\bibinfo{year}{1984}),
  \bibinfo{pages}{721--741}.
\newblock


\bibitem[\protect\citeauthoryear{Geman and Geman}{Geman and Geman}{1984b}]%
        {imgRestorGeman84}
\bibfield{author}{\bibinfo{person}{Stuart Geman} {and} \bibinfo{person}{Donald
  Geman}.} \bibinfo{year}{1984}\natexlab{b}.
\newblock \showarticletitle{Stochastic Relaxation, Gibbs Distributions, and the
  Bayesian Restoration of Images}.
\newblock \bibinfo{journal}{\emph{IEEE Transactions on Pattern Analysis and
  Machine Intelligence}} \bibinfo{volume}{PAMI-6}, \bibinfo{number}{6}
  (\bibinfo{year}{1984}), \bibinfo{pages}{721--741}.
\newblock
\urldef\tempurl%
\url{https://doi.org/10.1109/TPAMI.1984.4767596}
\showDOI{\tempurl}


\bibitem[\protect\citeauthoryear{Gentry}{Gentry}{2009}]%
        {gentry2009fully}
\bibfield{author}{\bibinfo{person}{Craig Gentry}.}
  \bibinfo{year}{2009}\natexlab{}.
\newblock \bibinfo{booktitle}{\emph{A fully homomorphic encryption scheme}}.
\newblock \bibinfo{publisher}{Stanford university}.
\newblock


\bibitem[\protect\citeauthoryear{Gentry}{Gentry}{2010}]%
        {gentry2010toward}
\bibfield{author}{\bibinfo{person}{Craig Gentry}.}
  \bibinfo{year}{2010}\natexlab{}.
\newblock \showarticletitle{Toward basing fully homomorphic encryption on
  worst-case hardness}. In \bibinfo{booktitle}{\emph{Annual Cryptology
  Conference}}. Springer, \bibinfo{pages}{116--137}.
\newblock


\bibitem[\protect\citeauthoryear{Gentry, Sahai, and Waters}{Gentry
  et~al\mbox{.}}{2013}]%
        {gentry2013homomorphic}
\bibfield{author}{\bibinfo{person}{Craig Gentry}, \bibinfo{person}{Amit Sahai},
  {and} \bibinfo{person}{Brent Waters}.} \bibinfo{year}{2013}\natexlab{}.
\newblock \showarticletitle{Homomorphic encryption from learning with errors:
  Conceptually-simpler, asymptotically-faster, attribute-based}. In
  \bibinfo{booktitle}{\emph{Annual Cryptology Conference}}. Springer,
  \bibinfo{pages}{75--92}.
\newblock


\bibitem[\protect\citeauthoryear{Geyer, Klein, and Nabi}{Geyer
  et~al\mbox{.}}{2017}]%
        {geyer2017differentially}
\bibfield{author}{\bibinfo{person}{Robin~C Geyer}, \bibinfo{person}{Tassilo
  Klein}, {and} \bibinfo{person}{Moin Nabi}.} \bibinfo{year}{2017}\natexlab{}.
\newblock \showarticletitle{Differentially private federated learning: A client
  level perspective}.
\newblock \bibinfo{journal}{\emph{arXiv preprint arXiv:1712.07557}}
  (\bibinfo{year}{2017}).
\newblock


\bibitem[\protect\citeauthoryear{Girgis, Data, Diggavi, Kairouz, and
  Suresh}{Girgis et~al\mbox{.}}{2021}]%
        {girgis2021shuffled}
\bibfield{author}{\bibinfo{person}{Antonious Girgis}, \bibinfo{person}{Deepesh
  Data}, \bibinfo{person}{Suhas Diggavi}, \bibinfo{person}{Peter Kairouz},
  {and} \bibinfo{person}{Ananda~Theertha Suresh}.}
  \bibinfo{year}{2021}\natexlab{}.
\newblock \showarticletitle{Shuffled model of differential privacy in federated
  learning}. In \bibinfo{booktitle}{\emph{International Conference on
  Artificial Intelligence and Statistics}}. PMLR, \bibinfo{pages}{2521--2529}.
\newblock


\bibitem[\protect\citeauthoryear{Goldwasser and Micali}{Goldwasser and
  Micali}{1984}]%
        {goldwasser1984probabilistic}
\bibfield{author}{\bibinfo{person}{Shafi Goldwasser} {and}
  \bibinfo{person}{Silvio Micali}.} \bibinfo{year}{1984}\natexlab{}.
\newblock \showarticletitle{Probabilistic encryption}.
\newblock \bibinfo{journal}{\emph{Journal of computer and system sciences}}
  \bibinfo{volume}{28}, \bibinfo{number}{2} (\bibinfo{year}{1984}),
  \bibinfo{pages}{270--299}.
\newblock


\bibitem[\protect\citeauthoryear{Gu, Fan, Li, Kang, Yao, and Yang}{Gu
  et~al\mbox{.}}{2021}]%
        {gu2021federated}
\bibfield{author}{\bibinfo{person}{Hanlin Gu}, \bibinfo{person}{Lixin Fan},
  \bibinfo{person}{Bowen Li}, \bibinfo{person}{Yan Kang}, \bibinfo{person}{Yuan
  Yao}, {and} \bibinfo{person}{Qiang Yang}.} \bibinfo{year}{2021}\natexlab{}.
\newblock \showarticletitle{Federated Deep Learning with Bayesian Privacy}.
\newblock \bibinfo{journal}{\emph{arXiv preprint arXiv:2109.13012}}
  (\bibinfo{year}{2021}).
\newblock


\bibitem[\protect\citeauthoryear{Gupta and Raskar}{Gupta and Raskar}{2018}]%
        {gupta2018distributed}
\bibfield{author}{\bibinfo{person}{Otkrist Gupta} {and} \bibinfo{person}{Ramesh
  Raskar}.} \bibinfo{year}{2018}\natexlab{}.
\newblock \showarticletitle{Distributed learning of deep neural network over
  multiple agents}.
\newblock \bibinfo{journal}{\emph{Journal of Network and Computer
  Applications}}  \bibinfo{volume}{116} (\bibinfo{year}{2018}),
  \bibinfo{pages}{1--8}.
\newblock


\bibitem[\protect\citeauthoryear{Ha, Dang, Dang, Truong, and Nguyen}{Ha
  et~al\mbox{.}}{2019}]%
        {ha2019differential}
\bibfield{author}{\bibinfo{person}{Trung Ha}, \bibinfo{person}{Tran~Khanh
  Dang}, \bibinfo{person}{Tran~Tri Dang}, \bibinfo{person}{Tuan~Anh Truong},
  {and} \bibinfo{person}{Manh~Tuan Nguyen}.} \bibinfo{year}{2019}\natexlab{}.
\newblock \showarticletitle{Differential privacy in deep learning: an
  overview}. In \bibinfo{booktitle}{\emph{2019 International Conference on
  Advanced Computing and Applications (ACOMP)}}. IEEE,
  \bibinfo{pages}{97--102}.
\newblock


\bibitem[\protect\citeauthoryear{Haddadpour and Mahdavi}{Haddadpour and
  Mahdavi}{2019}]%
        {haddadpour2019convergence}
\bibfield{author}{\bibinfo{person}{Farzin Haddadpour} {and}
  \bibinfo{person}{Mehrdad Mahdavi}.} \bibinfo{year}{2019}\natexlab{}.
\newblock \showarticletitle{On the convergence of local descent methods in
  federated learning}.
\newblock \bibinfo{journal}{\emph{arXiv preprint arXiv:1910.14425}}
  (\bibinfo{year}{2019}).
\newblock


\bibitem[\protect\citeauthoryear{He, Zhang, and Lee}{He et~al\mbox{.}}{2019}]%
        {he2019model}
\bibfield{author}{\bibinfo{person}{Zecheng He}, \bibinfo{person}{Tianwei
  Zhang}, {and} \bibinfo{person}{Ruby~B Lee}.} \bibinfo{year}{2019}\natexlab{}.
\newblock \showarticletitle{Model inversion attacks against collaborative
  inference}. In \bibinfo{booktitle}{\emph{Proceedings of the 35th Annual
  Computer Security Applications Conference}}. \bibinfo{pages}{148--162}.
\newblock


\bibitem[\protect\citeauthoryear{Hitaj, Ateniese, and Perez-Cruz}{Hitaj
  et~al\mbox{.}}{2017}]%
        {hitaj2017deep}
\bibfield{author}{\bibinfo{person}{Briland Hitaj}, \bibinfo{person}{Giuseppe
  Ateniese}, {and} \bibinfo{person}{Fernando Perez-Cruz}.}
  \bibinfo{year}{2017}\natexlab{}.
\newblock \showarticletitle{Deep models under the GAN: information leakage from
  collaborative deep learning}. In \bibinfo{booktitle}{\emph{Proceedings of the
  2017 ACM SIGSAC Conference on Computer and Communications Security}}.
  \bibinfo{pages}{603--618}.
\newblock


\bibitem[\protect\citeauthoryear{Hsu, Asoodeh, Salamatian, and Calmon}{Hsu
  et~al\mbox{.}}{2018}]%
        {hsu2018generalizing}
\bibfield{author}{\bibinfo{person}{Hsiang Hsu}, \bibinfo{person}{Shahab
  Asoodeh}, \bibinfo{person}{Salman Salamatian}, {and}
  \bibinfo{person}{Flavio~P Calmon}.} \bibinfo{year}{2018}\natexlab{}.
\newblock \showarticletitle{Generalizing bottleneck problems}. In
  \bibinfo{booktitle}{\emph{2018 IEEE International Symposium on Information
  Theory (ISIT)}}. IEEE, \bibinfo{pages}{531--535}.
\newblock


\bibitem[\protect\citeauthoryear{Hu, Yuan, Yao, Deng, Chen, Yang, Guan, and
  Zeng}{Hu et~al\mbox{.}}{2015}]%
        {DBLP:journals/pvldb/HuYYDCYGZ15}
\bibfield{author}{\bibinfo{person}{Xueyang Hu}, \bibinfo{person}{Mingxuan
  Yuan}, \bibinfo{person}{Jianguo Yao}, \bibinfo{person}{Yu Deng},
  \bibinfo{person}{Lei Chen}, \bibinfo{person}{Qiang Yang},
  \bibinfo{person}{Haibing Guan}, {and} \bibinfo{person}{Jia Zeng}.}
  \bibinfo{year}{2015}\natexlab{}.
\newblock \showarticletitle{Differential Privacy in Telco Big Data Platform}.
\newblock \bibinfo{journal}{\emph{Proc. {VLDB} Endow.}} \bibinfo{volume}{8},
  \bibinfo{number}{12} (\bibinfo{year}{2015}), \bibinfo{pages}{1692--1703}.
\newblock
\urldef\tempurl%
\url{https://doi.org/10.14778/2824032.2824067}
\showDOI{\tempurl}


\bibitem[\protect\citeauthoryear{Issa, Wagner, and Kamath}{Issa
  et~al\mbox{.}}{2019}]%
        {issa2019operational}
\bibfield{author}{\bibinfo{person}{Ibrahim Issa}, \bibinfo{person}{Aaron~B
  Wagner}, {and} \bibinfo{person}{Sudeep Kamath}.}
  \bibinfo{year}{2019}\natexlab{}.
\newblock \showarticletitle{An operational approach to information leakage}.
\newblock \bibinfo{journal}{\emph{IEEE Transactions on Information Theory}}
  \bibinfo{volume}{66}, \bibinfo{number}{3} (\bibinfo{year}{2019}),
  \bibinfo{pages}{1625--1657}.
\newblock


\bibitem[\protect\citeauthoryear{Kairouz, McMahan, Avent, Bellet, Bennis,
  Bhagoji, Bonawitz, Charles, Cormode, Cummings, D'Oliveira, Eichner, Rouayheb,
  Evans, Gardner, Garrett, Gasc{\'{o}}n, Ghazi, Gibbons, Gruteser, Harchaoui,
  He, He, Huo, Hutchinson, Hsu, Jaggi, Javidi, Joshi, Khodak,
  Kone{\v{c}}n{\'y}, Korolova, Koushanfar, Koyejo, Lepoint, Liu, Mittal, Mohri,
  Nock, {\"{O}}zg{\"{u}}r, Pagh, Qi, Ramage, Raskar, Raykova, Song, Song,
  Stich, Sun, Suresh, Tram{\`{e}}r, Vepakomma, Wang, Xiong, Xu, Yang, Yu, Yu,
  and Zhao}{Kairouz et~al\mbox{.}}{2021}]%
        {DBLP:journals/ftml/KairouzMABBBBCC21}
\bibfield{author}{\bibinfo{person}{Peter Kairouz}, \bibinfo{person}{H.~Brendan
  McMahan}, \bibinfo{person}{Brendan Avent}, \bibinfo{person}{Aur{\'{e}}lien
  Bellet}, \bibinfo{person}{Mehdi Bennis}, \bibinfo{person}{Arjun~Nitin
  Bhagoji}, \bibinfo{person}{Kallista~A. Bonawitz}, \bibinfo{person}{Zachary
  Charles}, \bibinfo{person}{Graham Cormode}, \bibinfo{person}{Rachel
  Cummings}, \bibinfo{person}{Rafael G.~L. D'Oliveira}, \bibinfo{person}{Hubert
  Eichner}, \bibinfo{person}{Salim~El Rouayheb}, \bibinfo{person}{David Evans},
  \bibinfo{person}{Josh Gardner}, \bibinfo{person}{Zachary Garrett},
  \bibinfo{person}{Adri{\`{a}} Gasc{\'{o}}n}, \bibinfo{person}{Badih Ghazi},
  \bibinfo{person}{Phillip~B. Gibbons}, \bibinfo{person}{Marco Gruteser},
  \bibinfo{person}{Za{\"{\i}}d Harchaoui}, \bibinfo{person}{Chaoyang He},
  \bibinfo{person}{Lie He}, \bibinfo{person}{Zhouyuan Huo},
  \bibinfo{person}{Ben Hutchinson}, \bibinfo{person}{Justin Hsu},
  \bibinfo{person}{Martin Jaggi}, \bibinfo{person}{Tara Javidi},
  \bibinfo{person}{Gauri Joshi}, \bibinfo{person}{Mikhail Khodak},
  \bibinfo{person}{Jakub Kone{\v{c}}n{\'y}}, \bibinfo{person}{Aleksandra
  Korolova}, \bibinfo{person}{Farinaz Koushanfar}, \bibinfo{person}{Sanmi
  Koyejo}, \bibinfo{person}{Tancr{\`{e}}de Lepoint}, \bibinfo{person}{Yang
  Liu}, \bibinfo{person}{Prateek Mittal}, \bibinfo{person}{Mehryar Mohri},
  \bibinfo{person}{Richard Nock}, \bibinfo{person}{Ayfer {\"{O}}zg{\"{u}}r},
  \bibinfo{person}{Rasmus Pagh}, \bibinfo{person}{Hang Qi},
  \bibinfo{person}{Daniel Ramage}, \bibinfo{person}{Ramesh Raskar},
  \bibinfo{person}{Mariana Raykova}, \bibinfo{person}{Dawn Song},
  \bibinfo{person}{Weikang Song}, \bibinfo{person}{Sebastian~U. Stich},
  \bibinfo{person}{Ziteng Sun}, \bibinfo{person}{Ananda~Theertha Suresh},
  \bibinfo{person}{Florian Tram{\`{e}}r}, \bibinfo{person}{Praneeth Vepakomma},
  \bibinfo{person}{Jianyu Wang}, \bibinfo{person}{Li Xiong},
  \bibinfo{person}{Zheng Xu}, \bibinfo{person}{Qiang Yang},
  \bibinfo{person}{Felix~X. Yu}, \bibinfo{person}{Han Yu}, {and}
  \bibinfo{person}{Sen Zhao}.} \bibinfo{year}{2021}\natexlab{}.
\newblock \showarticletitle{Advances and Open Problems in Federated Learning}.
\newblock \bibinfo{journal}{\emph{Found. Trends Mach. Learn.}}
  \bibinfo{volume}{14}, \bibinfo{number}{1-2} (\bibinfo{year}{2021}),
  \bibinfo{pages}{1--210}.
\newblock
\urldef\tempurl%
\url{https://doi.org/10.1561/2200000083}
\showDOI{\tempurl}


\bibitem[\protect\citeauthoryear{Karras, Laine, and Aila}{Karras
  et~al\mbox{.}}{2019}]%
        {karras2019style}
\bibfield{author}{\bibinfo{person}{Tero Karras}, \bibinfo{person}{Samuli
  Laine}, {and} \bibinfo{person}{Timo Aila}.} \bibinfo{year}{2019}\natexlab{}.
\newblock \showarticletitle{A style-based generator architecture for generative
  adversarial networks}. In \bibinfo{booktitle}{\emph{Proceedings of the
  IEEE/CVF conference on computer vision and pattern recognition}}.
  \bibinfo{pages}{4401--4410}.
\newblock


\bibitem[\protect\citeauthoryear{Kim, G{\"u}nl{\"u}, and Schaefer}{Kim
  et~al\mbox{.}}{2021}]%
        {kim2021federated}
\bibfield{author}{\bibinfo{person}{Muah Kim}, \bibinfo{person}{Onur
  G{\"u}nl{\"u}}, {and} \bibinfo{person}{Rafael~F Schaefer}.}
  \bibinfo{year}{2021}\natexlab{}.
\newblock \showarticletitle{Federated learning with local differential privacy:
  Trade-offs between privacy, utility, and communication}. In
  \bibinfo{booktitle}{\emph{ICASSP 2021-2021 IEEE International Conference on
  Acoustics, Speech and Signal Processing (ICASSP)}}. IEEE,
  \bibinfo{pages}{2650--2654}.
\newblock


\bibitem[\protect\citeauthoryear{Kone{\v{c}}n{\`y}, McMahan, Ramage, and
  Richt{\'a}rik}{Kone{\v{c}}n{\`y} et~al\mbox{.}}{2016a}]%
        {konevcny2016federated}
\bibfield{author}{\bibinfo{person}{Jakub Kone{\v{c}}n{\`y}},
  \bibinfo{person}{H~Brendan McMahan}, \bibinfo{person}{Daniel Ramage}, {and}
  \bibinfo{person}{Peter Richt{\'a}rik}.} \bibinfo{year}{2016}\natexlab{a}.
\newblock \showarticletitle{Federated optimization: Distributed machine
  learning for on-device intelligence}.
\newblock \bibinfo{journal}{\emph{arXiv preprint arXiv:1610.02527}}
  (\bibinfo{year}{2016}).
\newblock


\bibitem[\protect\citeauthoryear{Kone{\v{c}}n{\`y}, McMahan, Yu, Richt{\'a}rik,
  Suresh, and Bacon}{Kone{\v{c}}n{\`y} et~al\mbox{.}}{2016b}]%
        {konevcny2016federated_new}
\bibfield{author}{\bibinfo{person}{Jakub Kone{\v{c}}n{\`y}},
  \bibinfo{person}{H~Brendan McMahan}, \bibinfo{person}{Felix~X Yu},
  \bibinfo{person}{Peter Richt{\'a}rik}, \bibinfo{person}{Ananda~Theertha
  Suresh}, {and} \bibinfo{person}{Dave Bacon}.}
  \bibinfo{year}{2016}\natexlab{b}.
\newblock \showarticletitle{Federated learning: Strategies for improving
  communication efficiency}.
\newblock \bibinfo{journal}{\emph{arXiv preprint arXiv:1610.05492}}
  (\bibinfo{year}{2016}).
\newblock


\bibitem[\protect\citeauthoryear{Li, Duan, Yang, Chen, and Yang}{Li
  et~al\mbox{.}}{2020}]%
        {li2020tiprdc}
\bibfield{author}{\bibinfo{person}{Ang Li}, \bibinfo{person}{Yixiao Duan},
  \bibinfo{person}{Huanrui Yang}, \bibinfo{person}{Yiran Chen}, {and}
  \bibinfo{person}{Jianlei Yang}.} \bibinfo{year}{2020}\natexlab{}.
\newblock \showarticletitle{TIPRDC: task-independent privacy-respecting data
  crowdsourcing framework for deep learning with anonymized intermediate
  representations}. In \bibinfo{booktitle}{\emph{Proceedings of the 26th ACM
  SIGKDD International Conference on Knowledge Discovery \& Data Mining}}.
  \bibinfo{pages}{824--832}.
\newblock


\bibitem[\protect\citeauthoryear{Li, Huang, Yang, Wang, and Zhang}{Li
  et~al\mbox{.}}{2019}]%
        {li2019convergence}
\bibfield{author}{\bibinfo{person}{Xiang Li}, \bibinfo{person}{Kaixuan Huang},
  \bibinfo{person}{Wenhao Yang}, \bibinfo{person}{Shusen Wang}, {and}
  \bibinfo{person}{Zhihua Zhang}.} \bibinfo{year}{2019}\natexlab{}.
\newblock \showarticletitle{On the convergence of fedavg on non-iid data}.
\newblock \bibinfo{journal}{\emph{arXiv preprint arXiv:1907.02189}}
  (\bibinfo{year}{2019}).
\newblock


\bibitem[\protect\citeauthoryear{Liao, Kosut, Sankar, and du~Pin~Calmon}{Liao
  et~al\mbox{.}}{2019}]%
        {liao2019tunable}
\bibfield{author}{\bibinfo{person}{Jiachun Liao}, \bibinfo{person}{Oliver
  Kosut}, \bibinfo{person}{Lalitha Sankar}, {and} \bibinfo{person}{Flavio du
  Pin~Calmon}.} \bibinfo{year}{2019}\natexlab{}.
\newblock \showarticletitle{Tunable measures for information leakage and
  applications to privacy-utility tradeoffs}.
\newblock \bibinfo{journal}{\emph{IEEE Transactions on Information Theory}}
  \bibinfo{volume}{65}, \bibinfo{number}{12} (\bibinfo{year}{2019}),
  \bibinfo{pages}{8043--8066}.
\newblock


\bibitem[\protect\citeauthoryear{Liu, Huang, Zhou, Li, Ji, Xiong, and Dou}{Liu
  et~al\mbox{.}}{2022}]%
        {liu2022distributed}
\bibfield{author}{\bibinfo{person}{Ji Liu}, \bibinfo{person}{Jizhou Huang},
  \bibinfo{person}{Yang Zhou}, \bibinfo{person}{Xuhong Li},
  \bibinfo{person}{Shilei Ji}, \bibinfo{person}{Haoyi Xiong}, {and}
  \bibinfo{person}{Dejing Dou}.} \bibinfo{year}{2022}\natexlab{}.
\newblock \showarticletitle{From distributed machine learning to federated
  learning: A survey}.
\newblock \bibinfo{journal}{\emph{Knowledge and Information Systems}}
  (\bibinfo{year}{2022}), \bibinfo{pages}{1--33}.
\newblock


\bibitem[\protect\citeauthoryear{Makhdoumi and Fawaz}{Makhdoumi and
  Fawaz}{2013}]%
        {makhdoumi2013privacy}
\bibfield{author}{\bibinfo{person}{Ali Makhdoumi} {and} \bibinfo{person}{Nadia
  Fawaz}.} \bibinfo{year}{2013}\natexlab{}.
\newblock \showarticletitle{Privacy-utility tradeoff under statistical
  uncertainty}. In \bibinfo{booktitle}{\emph{2013 51st Annual Allerton
  Conference on Communication, Control, and Computing (Allerton)}}. IEEE,
  \bibinfo{pages}{1627--1634}.
\newblock


\bibitem[\protect\citeauthoryear{McMahan, Moore, Ramage, Hampson, and
  y~Arcas}{McMahan et~al\mbox{.}}{2017}]%
        {mcmahan2017communication}
\bibfield{author}{\bibinfo{person}{Brendan McMahan}, \bibinfo{person}{Eider
  Moore}, \bibinfo{person}{Daniel Ramage}, \bibinfo{person}{Seth Hampson},
  {and} \bibinfo{person}{Blaise~Aguera y Arcas}.}
  \bibinfo{year}{2017}\natexlab{}.
\newblock \showarticletitle{Communication-efficient learning of deep networks
  from decentralized data}. In \bibinfo{booktitle}{\emph{Artificial
  Intelligence and Statistics}}. PMLR, \bibinfo{pages}{1273--1282}.
\newblock


\bibitem[\protect\citeauthoryear{McMahan, Moore, Ramage, and y~Arcas}{McMahan
  et~al\mbox{.}}{2016}]%
        {mcmahan2016federated}
\bibfield{author}{\bibinfo{person}{H~Brendan McMahan}, \bibinfo{person}{Eider
  Moore}, \bibinfo{person}{Daniel Ramage}, {and}
  \bibinfo{person}{Blaise~Ag{\"u}era y Arcas}.}
  \bibinfo{year}{2016}\natexlab{}.
\newblock \showarticletitle{Federated learning of deep networks using model
  averaging}.
\newblock \bibinfo{journal}{\emph{arXiv preprint arXiv:1602.05629}}
  (\bibinfo{year}{2016}).
\newblock


\bibitem[\protect\citeauthoryear{Micciancio and Voulgaris}{Micciancio and
  Voulgaris}{2013}]%
        {micciancio2013deterministic}
\bibfield{author}{\bibinfo{person}{Daniele Micciancio} {and}
  \bibinfo{person}{Panagiotis Voulgaris}.} \bibinfo{year}{2013}\natexlab{}.
\newblock \showarticletitle{A deterministic single exponential time algorithm
  for most lattice problems based on Voronoi cell computations}.
\newblock \bibinfo{journal}{\emph{SIAM J. Comput.}} \bibinfo{volume}{42},
  \bibinfo{number}{3} (\bibinfo{year}{2013}), \bibinfo{pages}{1364--1391}.
\newblock


\bibitem[\protect\citeauthoryear{Mironov}{Mironov}{2017}]%
        {mironov2017renyi}
\bibfield{author}{\bibinfo{person}{Ilya Mironov}.}
  \bibinfo{year}{2017}\natexlab{}.
\newblock \showarticletitle{R{\'e}nyi differential privacy}. In
  \bibinfo{booktitle}{\emph{2017 IEEE 30th Computer Security Foundations
  Symposium (CSF)}}. IEEE, \bibinfo{pages}{263--275}.
\newblock


\bibitem[\protect\citeauthoryear{Paillier}{Paillier}{1999}]%
        {paillier1999public}
\bibfield{author}{\bibinfo{person}{Pascal Paillier}.}
  \bibinfo{year}{1999}\natexlab{}.
\newblock \showarticletitle{Public-key cryptosystems based on composite degree
  residuosity classes}. In \bibinfo{booktitle}{\emph{International conference
  on the theory and applications of cryptographic techniques}}. Springer,
  \bibinfo{pages}{223--238}.
\newblock


\bibitem[\protect\citeauthoryear{Peikert}{Peikert}{2009}]%
        {peikert2009public}
\bibfield{author}{\bibinfo{person}{Chris Peikert}.}
  \bibinfo{year}{2009}\natexlab{}.
\newblock \showarticletitle{Public-key cryptosystems from the worst-case
  shortest vector problem}. In \bibinfo{booktitle}{\emph{Proceedings of the
  forty-first annual ACM symposium on Theory of computing}}.
  \bibinfo{pages}{333--342}.
\newblock


\bibitem[\protect\citeauthoryear{Rassouli and G{\"u}nd{\"u}z}{Rassouli and
  G{\"u}nd{\"u}z}{2019}]%
        {rassouli2019optimal}
\bibfield{author}{\bibinfo{person}{Borzoo Rassouli} {and}
  \bibinfo{person}{Deniz G{\"u}nd{\"u}z}.} \bibinfo{year}{2019}\natexlab{}.
\newblock \showarticletitle{Optimal utility-privacy trade-off with total
  variation distance as a privacy measure}.
\newblock \bibinfo{journal}{\emph{IEEE Transactions on Information Forensics
  and Security}}  \bibinfo{volume}{15} (\bibinfo{year}{2019}),
  \bibinfo{pages}{594--603}.
\newblock


\bibitem[\protect\citeauthoryear{Regev}{Regev}{2009}]%
        {regev2009lattices}
\bibfield{author}{\bibinfo{person}{Oded Regev}.}
  \bibinfo{year}{2009}\natexlab{}.
\newblock \showarticletitle{On lattices, learning with errors, random linear
  codes, and cryptography}.
\newblock \bibinfo{journal}{\emph{Journal of the ACM (JACM)}}
  \bibinfo{volume}{56}, \bibinfo{number}{6} (\bibinfo{year}{2009}),
  \bibinfo{pages}{1--40}.
\newblock


\bibitem[\protect\citeauthoryear{Rivest, Adleman, Dertouzos,
  et~al\mbox{.}}{Rivest et~al\mbox{.}}{1978}]%
        {rivest1978data}
\bibfield{author}{\bibinfo{person}{Ronald~L Rivest}, \bibinfo{person}{Len
  Adleman}, \bibinfo{person}{Michael~L Dertouzos}, {et~al\mbox{.}}}
  \bibinfo{year}{1978}\natexlab{}.
\newblock \showarticletitle{On data banks and privacy homomorphisms}.
\newblock \bibinfo{journal}{\emph{Foundations of secure computation}}
  \bibinfo{volume}{4}, \bibinfo{number}{11} (\bibinfo{year}{1978}),
  \bibinfo{pages}{169--180}.
\newblock


\bibitem[\protect\citeauthoryear{Sankar, Rajagopalan, and Poor}{Sankar
  et~al\mbox{.}}{2013}]%
        {sankar2013utility}
\bibfield{author}{\bibinfo{person}{Lalitha Sankar}, \bibinfo{person}{S~Raj
  Rajagopalan}, {and} \bibinfo{person}{H~Vincent Poor}.}
  \bibinfo{year}{2013}\natexlab{}.
\newblock \showarticletitle{Utility-privacy tradeoffs in databases: An
  information-theoretic approach}.
\newblock \bibinfo{journal}{\emph{IEEE Transactions on Information Forensics
  and Security}} \bibinfo{volume}{8}, \bibinfo{number}{6}
  (\bibinfo{year}{2013}), \bibinfo{pages}{838--852}.
\newblock


\bibitem[\protect\citeauthoryear{Schnorr}{Schnorr}{1987}]%
        {schnorr1987hierarchy}
\bibfield{author}{\bibinfo{person}{Claus-Peter Schnorr}.}
  \bibinfo{year}{1987}\natexlab{}.
\newblock \showarticletitle{A hierarchy of polynomial time lattice basis
  reduction algorithms}.
\newblock \bibinfo{journal}{\emph{Theoretical computer science}}
  \bibinfo{volume}{53}, \bibinfo{number}{2-3} (\bibinfo{year}{1987}),
  \bibinfo{pages}{201--224}.
\newblock


\bibitem[\protect\citeauthoryear{Seif, Tandon, and Li}{Seif
  et~al\mbox{.}}{2020}]%
        {seif2020wireless}
\bibfield{author}{\bibinfo{person}{Mohamed Seif}, \bibinfo{person}{Ravi
  Tandon}, {and} \bibinfo{person}{Ming Li}.} \bibinfo{year}{2020}\natexlab{}.
\newblock \showarticletitle{Wireless federated learning with local differential
  privacy}. In \bibinfo{booktitle}{\emph{2020 IEEE International Symposium on
  Information Theory (ISIT)}}. IEEE, \bibinfo{pages}{2604--2609}.
\newblock


\bibitem[\protect\citeauthoryear{Shamir}{Shamir}{1979}]%
        {SecShare-Adi79}
\bibfield{author}{\bibinfo{person}{Adi Shamir}.}
  \bibinfo{year}{1979}\natexlab{}.
\newblock \showarticletitle{How to Share a Secret}.
\newblock \bibinfo{journal}{\emph{Commun. ACM}} \bibinfo{volume}{22},
  \bibinfo{number}{11} (\bibinfo{date}{nov} \bibinfo{year}{1979}),
  \bibinfo{pages}{612–613}.
\newblock
\showISSN{0001-0782}
\urldef\tempurl%
\url{https://doi.org/10.1145/359168.359176}
\showDOI{\tempurl}


\bibitem[\protect\citeauthoryear{Shokri and Shmatikov}{Shokri and
  Shmatikov}{2015}]%
        {shokri2015privacy}
\bibfield{author}{\bibinfo{person}{Reza Shokri} {and} \bibinfo{person}{Vitaly
  Shmatikov}.} \bibinfo{year}{2015}\natexlab{}.
\newblock \showarticletitle{Privacy-preserving deep learning}. In
  \bibinfo{booktitle}{\emph{Proceedings of the 22nd ACM SIGSAC conference on
  computer and communications security}}. \bibinfo{pages}{1310--1321}.
\newblock


\bibitem[\protect\citeauthoryear{Thapa, Chamikara, and Camtepe}{Thapa
  et~al\mbox{.}}{2020}]%
        {thapa2020splitfed}
\bibfield{author}{\bibinfo{person}{Chandra Thapa}, \bibinfo{person}{Mahawaga
  Arachchige~Pathum Chamikara}, {and} \bibinfo{person}{Seyit Camtepe}.}
  \bibinfo{year}{2020}\natexlab{}.
\newblock \showarticletitle{Splitfed: When federated learning meets split
  learning}.
\newblock \bibinfo{journal}{\emph{arXiv preprint arXiv:2004.12088}}
  (\bibinfo{year}{2020}).
\newblock


\bibitem[\protect\citeauthoryear{Tretiak}{Tretiak}{1974}]%
        {Tretiak1974RateDT}
\bibfield{author}{\bibinfo{person}{Oleh~J. Tretiak}.}
  \bibinfo{year}{1974}\natexlab{}.
\newblock \showarticletitle{Rate Distortion Theory: A Mathematical Basis for
  Data Compression, Toby Berger. Prentice-Hall, Urbana, IL (1971)}.
\newblock \bibinfo{journal}{\emph{Information \& Computation}}
  \bibinfo{volume}{24} (\bibinfo{year}{1974}).
\newblock


\bibitem[\protect\citeauthoryear{Triastcyn and Faltings}{Triastcyn and
  Faltings}{2019}]%
        {FL-BDP19}
\bibfield{author}{\bibinfo{person}{A. Triastcyn} {and} \bibinfo{person}{B.
  Faltings}.} \bibinfo{year}{2019}\natexlab{}.
\newblock \showarticletitle{Federated Learning with Bayesian Differential
  Privacy}. In \bibinfo{booktitle}{\emph{2019 IEEE International Conference on
  Big Data (Big Data)}}. \bibinfo{publisher}{IEEE Computer Society},
  \bibinfo{address}{Los Alamitos, CA, USA}, \bibinfo{pages}{2587--2596}.
\newblock
\urldef\tempurl%
\url{https://doi.org/10.1109/BigData47090.2019.9005465}
\showDOI{\tempurl}


\bibitem[\protect\citeauthoryear{Triastcyn and Faltings}{Triastcyn and
  Faltings}{2020}]%
        {BDP-icml20}
\bibfield{author}{\bibinfo{person}{Aleksei Triastcyn} {and}
  \bibinfo{person}{Boi Faltings}.} \bibinfo{year}{2020}\natexlab{}.
\newblock \showarticletitle{{B}ayesian Differential Privacy for Machine
  Learning}. In \bibinfo{booktitle}{\emph{Proceedings of the 37th International
  Conference on Machine Learning}} \emph{(\bibinfo{series}{Proceedings of
  Machine Learning Research}, Vol.~\bibinfo{volume}{119})},
  \bibfield{editor}{\bibinfo{person}{Hal~Daumé III} {and}
  \bibinfo{person}{Aarti Singh}} (Eds.). \bibinfo{publisher}{PMLR},
  \bibinfo{pages}{9583--9592}.
\newblock
\urldef\tempurl%
\url{https://proceedings.mlr.press/v119/triastcyn20a.html}
\showURL{%
\tempurl}


\bibitem[\protect\citeauthoryear{Truex, Baracaldo, Anwar, Steinke, Ludwig,
  Zhang, and Zhou}{Truex et~al\mbox{.}}{2019}]%
        {truex2019hybrid}
\bibfield{author}{\bibinfo{person}{Stacey Truex}, \bibinfo{person}{Nathalie
  Baracaldo}, \bibinfo{person}{Ali Anwar}, \bibinfo{person}{Thomas Steinke},
  \bibinfo{person}{Heiko Ludwig}, \bibinfo{person}{Rui Zhang}, {and}
  \bibinfo{person}{Yi Zhou}.} \bibinfo{year}{2019}\natexlab{}.
\newblock \showarticletitle{A hybrid approach to privacy-preserving federated
  learning}. In \bibinfo{booktitle}{\emph{Proceedings of the 12th ACM Workshop
  on Artificial Intelligence and Security}}. \bibinfo{pages}{1--11}.
\newblock


\bibitem[\protect\citeauthoryear{Truex, Liu, Chow, Gursoy, and Wei}{Truex
  et~al\mbox{.}}{2020}]%
        {truex2020ldp}
\bibfield{author}{\bibinfo{person}{Stacey Truex}, \bibinfo{person}{Ling Liu},
  \bibinfo{person}{Ka-Ho Chow}, \bibinfo{person}{Mehmet~Emre Gursoy}, {and}
  \bibinfo{person}{Wenqi Wei}.} \bibinfo{year}{2020}\natexlab{}.
\newblock \showarticletitle{LDP-Fed: Federated learning with local differential
  privacy}. In \bibinfo{booktitle}{\emph{Proceedings of the Third ACM
  International Workshop on Edge Systems, Analytics and Networking}}.
  \bibinfo{pages}{61--66}.
\newblock


\bibitem[\protect\citeauthoryear{Wang, Ye, and Xu}{Wang et~al\mbox{.}}{2017}]%
        {wang2017differentially}
\bibfield{author}{\bibinfo{person}{Di Wang}, \bibinfo{person}{Minwei Ye}, {and}
  \bibinfo{person}{Jinhui Xu}.} \bibinfo{year}{2017}\natexlab{}.
\newblock \showarticletitle{Differentially private empirical risk minimization
  revisited: Faster and more general}.
\newblock \bibinfo{journal}{\emph{Advances in Neural Information Processing
  Systems}}  \bibinfo{volume}{30} (\bibinfo{year}{2017}).
\newblock


\bibitem[\protect\citeauthoryear{Wang and Calmon}{Wang and Calmon}{2017}]%
        {wang2017estimation}
\bibfield{author}{\bibinfo{person}{Hao Wang} {and} \bibinfo{person}{Flavio~P
  Calmon}.} \bibinfo{year}{2017}\natexlab{}.
\newblock \showarticletitle{An estimation-theoretic view of privacy}. In
  \bibinfo{booktitle}{\emph{2017 55th Annual Allerton Conference on
  Communication, Control, and Computing (Allerton)}}. IEEE,
  \bibinfo{pages}{886--893}.
\newblock


\bibitem[\protect\citeauthoryear{Wang, Song, Zhang, Song, Wang, and Qi}{Wang
  et~al\mbox{.}}{2019}]%
        {wang2019beyond}
\bibfield{author}{\bibinfo{person}{Zhibo Wang}, \bibinfo{person}{Mengkai Song},
  \bibinfo{person}{Zhifei Zhang}, \bibinfo{person}{Yang Song},
  \bibinfo{person}{Qian Wang}, {and} \bibinfo{person}{Hairong Qi}.}
  \bibinfo{year}{2019}\natexlab{}.
\newblock \showarticletitle{Beyond inferring class representatives: User-level
  privacy leakage from federated learning}. In \bibinfo{booktitle}{\emph{IEEE
  INFOCOM 2019-IEEE Conference on Computer Communications}}. IEEE,
  \bibinfo{pages}{2512--2520}.
\newblock


\bibitem[\protect\citeauthoryear{Wasserman}{Wasserman}{2004}]%
        {wasserman2004bayesian}
\bibfield{author}{\bibinfo{person}{Larry Wasserman}.}
  \bibinfo{year}{2004}\natexlab{}.
\newblock \showarticletitle{Bayesian inference}.
\newblock In \bibinfo{booktitle}{\emph{All of Statistics}}.
  \bibinfo{publisher}{Springer}, \bibinfo{pages}{175--192}.
\newblock


\bibitem[\protect\citeauthoryear{Wolpert and Macready}{Wolpert and
  Macready}{1997}]%
        {wolpert1997no}
\bibfield{author}{\bibinfo{person}{David~H Wolpert} {and}
  \bibinfo{person}{William~G Macready}.} \bibinfo{year}{1997}\natexlab{}.
\newblock \showarticletitle{No free lunch theorems for optimization}.
\newblock \bibinfo{journal}{\emph{IEEE transactions on evolutionary
  computation}} \bibinfo{volume}{1}, \bibinfo{number}{1}
  (\bibinfo{year}{1997}), \bibinfo{pages}{67--82}.
\newblock


\bibitem[\protect\citeauthoryear{Yang, Liu, Chen, and Tong}{Yang
  et~al\mbox{.}}{2019a}]%
        {yang2019federated}
\bibfield{author}{\bibinfo{person}{Qiang Yang}, \bibinfo{person}{Yang Liu},
  \bibinfo{person}{Tianjian Chen}, {and} \bibinfo{person}{Yongxin Tong}.}
  \bibinfo{year}{2019}\natexlab{a}.
\newblock \showarticletitle{Federated machine learning: Concept and
  applications}.
\newblock \bibinfo{journal}{\emph{ACM Transactions on Intelligent Systems and
  Technology (TIST)}} \bibinfo{volume}{10}, \bibinfo{number}{2}
  (\bibinfo{year}{2019}), \bibinfo{pages}{1--19}.
\newblock


\bibitem[\protect\citeauthoryear{Yang, Liu, Cheng, Kang, Chen, and Yu}{Yang
  et~al\mbox{.}}{2019b}]%
        {yang2019federated_new}
\bibfield{author}{\bibinfo{person}{Qiang Yang}, \bibinfo{person}{Yang Liu},
  \bibinfo{person}{Yong Cheng}, \bibinfo{person}{Yan Kang},
  \bibinfo{person}{Tianjian Chen}, {and} \bibinfo{person}{Han Yu}.}
  \bibinfo{year}{2019}\natexlab{b}.
\newblock \showarticletitle{Federated learning}.
\newblock \bibinfo{journal}{\emph{Synthesis Lectures on Artificial Intelligence
  and Machine Learning}} \bibinfo{volume}{13}, \bibinfo{number}{3}
  (\bibinfo{year}{2019}), \bibinfo{pages}{1--207}.
\newblock


\bibitem[\protect\citeauthoryear{Yin, Mallya, Vahdat, Alvarez, Kautz, and
  Molchanov}{Yin et~al\mbox{.}}{2021}]%
        {yin2021see}
\bibfield{author}{\bibinfo{person}{Hongxu Yin}, \bibinfo{person}{Arun Mallya},
  \bibinfo{person}{Arash Vahdat}, \bibinfo{person}{Jose~M Alvarez},
  \bibinfo{person}{Jan Kautz}, {and} \bibinfo{person}{Pavlo Molchanov}.}
  \bibinfo{year}{2021}\natexlab{}.
\newblock \showarticletitle{See through Gradients: Image Batch Recovery via
  GradInversion}. In \bibinfo{booktitle}{\emph{Proceedings of the IEEE/CVF
  Conference on Computer Vision and Pattern Recognition}}.
  \bibinfo{pages}{16337--16346}.
\newblock


\bibitem[\protect\citeauthoryear{Zhang, Li, Xia, Wang, Yan, and Liu}{Zhang
  et~al\mbox{.}}{2020a}]%
        {batchCryp}
\bibfield{author}{\bibinfo{person}{Chengliang Zhang}, \bibinfo{person}{Suyi
  Li}, \bibinfo{person}{Junzhe Xia}, \bibinfo{person}{Wei Wang},
  \bibinfo{person}{Feng Yan}, {and} \bibinfo{person}{Yang Liu}.}
  \bibinfo{year}{2020}\natexlab{a}.
\newblock \showarticletitle{BatchCrypt: Efficient Homomorphic Encryption for
  Cross-Silo Federated Learning}. In \bibinfo{booktitle}{\emph{2020 {USENIX}
  Annual Technical Conference ({USENIX} {ATC} 20)}}.
  \bibinfo{publisher}{{USENIX} Association}, \bibinfo{pages}{493--506}.
\newblock
\showISBNx{978-1-939133-14-4}
\urldef\tempurl%
\url{https://www.usenix.org/conference/atc20/presentation/zhang-chengliang}
\showURL{%
\tempurl}


\bibitem[\protect\citeauthoryear{Zhang, Li, Xia, Wang, Yan, and Liu}{Zhang
  et~al\mbox{.}}{2020b}]%
        {zhang2020batchcrypt}
\bibfield{author}{\bibinfo{person}{Chengliang Zhang}, \bibinfo{person}{Suyi
  Li}, \bibinfo{person}{Junzhe Xia}, \bibinfo{person}{Wei Wang},
  \bibinfo{person}{Feng Yan}, {and} \bibinfo{person}{Yang Liu}.}
  \bibinfo{year}{2020}\natexlab{b}.
\newblock \showarticletitle{Batchcrypt: Efficient homomorphic encryption for
  cross-silo federated learning}. In \bibinfo{booktitle}{\emph{2020 \{USENIX\}
  Annual Technical Conference (\{USENIX\}\{ATC\} 20)}}.
  \bibinfo{pages}{493--506}.
\newblock


\bibitem[\protect\citeauthoryear{Zhang, Chen, Yu, and Deng}{Zhang
  et~al\mbox{.}}{2019}]%
        {zhang2019pefl}
\bibfield{author}{\bibinfo{person}{Jiale Zhang}, \bibinfo{person}{Bing Chen},
  \bibinfo{person}{Shui Yu}, {and} \bibinfo{person}{Hai Deng}.}
  \bibinfo{year}{2019}\natexlab{}.
\newblock \showarticletitle{PEFL: A privacy-enhanced federated learning scheme
  for big data analytics}. In \bibinfo{booktitle}{\emph{2019 IEEE Global
  Communications Conference (GLOBECOM)}}. IEEE, \bibinfo{pages}{1--6}.
\newblock


\bibitem[\protect\citeauthoryear{Zhao, Mopuri, and Bilen}{Zhao
  et~al\mbox{.}}{2020a}]%
        {zhao2020idlg}
\bibfield{author}{\bibinfo{person}{Bo Zhao}, \bibinfo{person}{Konda~Reddy
  Mopuri}, {and} \bibinfo{person}{Hakan Bilen}.}
  \bibinfo{year}{2020}\natexlab{a}.
\newblock \showarticletitle{idlg: Improved deep leakage from gradients}.
\newblock \bibinfo{journal}{\emph{arXiv preprint arXiv:2001.02610}}
  (\bibinfo{year}{2020}).
\newblock


\bibitem[\protect\citeauthoryear{Zhao, Zhao, Yang, Wang, Wang, Lyu, Niyato, and
  Lam}{Zhao et~al\mbox{.}}{2020b}]%
        {zhao2020local}
\bibfield{author}{\bibinfo{person}{Yang Zhao}, \bibinfo{person}{Jun Zhao},
  \bibinfo{person}{Mengmeng Yang}, \bibinfo{person}{Teng Wang},
  \bibinfo{person}{Ning Wang}, \bibinfo{person}{Lingjuan Lyu},
  \bibinfo{person}{Dusit Niyato}, {and} \bibinfo{person}{Kwok-Yan Lam}.}
  \bibinfo{year}{2020}\natexlab{b}.
\newblock \showarticletitle{Local differential privacy-based federated learning
  for internet of things}.
\newblock \bibinfo{journal}{\emph{IEEE Internet of Things Journal}}
  \bibinfo{volume}{8}, \bibinfo{number}{11} (\bibinfo{year}{2020}),
  \bibinfo{pages}{8836--8853}.
\newblock


\bibitem[\protect\citeauthoryear{Zhu and Han}{Zhu and Han}{2020}]%
        {zhu2020deep}
\bibfield{author}{\bibinfo{person}{Ligeng Zhu} {and} \bibinfo{person}{Song
  Han}.} \bibinfo{year}{2020}\natexlab{}.
\newblock \showarticletitle{Deep leakage from gradients}.
\newblock In \bibinfo{booktitle}{\emph{Federated Learning}}.
  \bibinfo{publisher}{Springer}, \bibinfo{pages}{17--31}.
\newblock


\bibitem[\protect\citeauthoryear{Zhu, Liu, , and Han}{Zhu
  et~al\mbox{.}}{2019}]%
        {zhu2019dlg}
\bibfield{author}{\bibinfo{person}{Ligeng Zhu}, \bibinfo{person}{Zhijian Liu},
  \bibinfo{person}{}, {and} \bibinfo{person}{Song Han}.}
  \bibinfo{year}{2019}\natexlab{}.
\newblock \showarticletitle{Deep Leakage from Gradients}. In
  \bibinfo{booktitle}{\emph{Annual Conference on Neural Information Processing
  Systems (NeurIPS)}}.
\newblock


\end{thebibliography}
\bibliographystyle{ACM-Reference-Format}

\clearpage

\appendix
\textbf{\Huge{Appendix}}

\section{Theoretical Analysis using {\text{JS}} Divergence}
\label{sec:BP}

Before introducing the analysis in detail, we first illustrate the key characteristics of the metric for measuring such a relationship.
\begin{property}\label{property: TriangleiIneq}
The square root of the Jensen-Shannon divergence satisfies the triangle inequality. Specifically, 
    \begin{align*}
        \sqrt{{\text{JS}}(F^{\calRO}_k || F^{\calO}_k)} - \sqrt{{\text{JS}}(F^{\calA}_k || F^{\calO}_k)}\le \sqrt{{\text{JS}}(F^{\calA}_k || F^{\calRO}_k)}.
    \end{align*}
\end{property}

\begin{property}\label{property: derive_using_AM-GM_inequality}
From AM-GM inequality, we have
\begin{align*}
    \frac{f^{\calA}_{D_k}(d)}{f^{\calM}_{D_k}(d)}\le \frac{f^{\calM}_{D_k}(d)}{f^{\calO}_{D_k}(d)}.
\end{align*}
\end{property}

\subsection{The quantitative relationship between $\text{TV}(P^{\calD}_k || P_k^{\calRO})$ and $\epsilon_{p,k}$}

\begin{lem}\label{lem: JSBound}
Let $P_k^{\calRO}$ and $P^{\calD}_k$ represent the distribution of the parameter of client $k$ before and after being protected. Let $F^{\calA}_k$ and $F^{\calRO}_k$ represent the belief of client $k$ about $D$ after observing the protected and original parameter. Then we have
\begin{align*}
{\text{JS}}(F^{\calA}_k || F^{\calRO}_k)\le \frac{1}{4}(e^{2\xi}-1)^2{\text{TV}}(P_k^{\calRO} || P^{\calD}_k)^2. 
\end{align*}
\end{lem}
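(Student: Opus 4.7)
The plan is to dominate the Jensen-Shannon divergence pointwise by a $\chi^{2}$-type functional of $f^{\calA}_{D_k}$ and $f^{\calRO}_{D_k}$, and then control that functional by ${\text{TV}}(P_k^{\calRO} || P^{\calD}_k)^{2}$ using the $\xi$-bound on the likelihood ratio together with total variation duality.

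First, I would rewrite the JS divergence in terms of the midpoint density $f^{\calM}_{D_k}=(f^{\calA}_{D_k}+f^{\calRO}_{D_k})/2$. Setting $r(d)=(f^{\calA}_{D_k}(d)-f^{\calRO}_{D_k}(d))/f^{\calM}_{D_k}(d)$ so that $f^{\calA}_{D_k}/f^{\calM}_{D_k}=1+r/2$ and $f^{\calRO}_{D_k}/f^{\calM}_{D_k}=1-r/2$, one obtains
\[
{\text{JS}}(F^{\calA}_k || F^{\calRO}_k) = \frac{1}{2}\int f^{\calM}_{D_k}\bigl[(1+r/2)\log(1+r/2)+(1-r/2)\log(1-r/2)\bigr]\,d\mu.
\]
Applying the elementary inequality $(1+x)\log(1+x)+(1-x)\log(1-x)\le 2x^{2}$ for $|x|<1$ (a twofold use of $\log(1+y)\le y$) with $x=r(d)/2$ then yields the $\chi^{2}$-type estimate
\[
{\text{JS}}(F^{\calA}_k || F^{\calRO}_k)\le \frac{1}{4}\int \frac{(f^{\calA}_{D_k}-f^{\calRO}_{D_k})^{2}}{f^{\calM}_{D_k}}\,d\mu.
\]

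Next, I would bound the integrand pointwise. From the integral representation $f^{\calA}_{D_k}(d)-f^{\calRO}_{D_k}(d)=\int f_{D_k|W_k}(d|w)\,d(P^{\calD}_k-P^{\calRO}_k)(w)$ and the observation that $P^{\calD}_k-P^{\calRO}_k$ is a signed measure of zero total mass, I can subtract any $d$-dependent constant from the integrand without changing the value. The definition of $\xi$ gives $e^{-\xi}f_{D_k}(d)\le f_{D_k|W_k}(d|w)\le e^{\xi}f_{D_k}(d)$, so centering at $\cosh(\xi)f_{D_k}(d)$ leaves a function whose sup norm in $w$ is at most $\sinh(\xi)f_{D_k}(d)$. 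Total variation duality then delivers
\[
|f^{\calA}_{D_k}(d)-f^{\calRO}_{D_k}(d)|\le (e^{\xi}-e^{-\xi})f_{D_k}(d)\cdot{\text{TV}}(P_k^{\calRO} || P^{\calD}_k),
\]
while applying the same likelihood-ratio bound to $f^{\calA}_{D_k}$ and $f^{\calRO}_{D_k}$ individually yields $f^{\calM}_{D_k}(d)\ge e^{-\xi}f_{D_k}(d)$.

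Combining these two estimates term-by-term in the $\chi^{2}$-type bound and using $\int f_{D_k}\,d\mu=1$ together with $e^{-\xi}\le 1$, I obtain
\[
{\text{JS}}(F^{\calA}_k || F^{\calRO}_k)\le \frac{1}{4}e^{-\xi}(e^{2\xi}-1)^{2}{\text{TV}}(P_k^{\calRO} || P^{\calD}_k)^{2}\le \frac{1}{4}(e^{2\xi}-1)^{2}{\text{TV}}(P_k^{\calRO} || P^{\calD}_k)^{2},
\]
which is the claimed bound. The main obstacle is Step~1: justifying the quadratic majorization of $(1+x)\log(1+x)+(1-x)\log(1-x)$ uniformly in $d$, which requires $|r(d)|<2$ and so depends on both $f^{\calA}_{D_k}$ and $f^{\calRO}_{D_k}$ being strictly positive. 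The centering trick in Step~2, which converts the crude TV-duality constant $e^{\xi}$ into the sharper $\sinh(\xi)$, is the other genuinely non-routine move; the remaining manipulations are bookkeeping.
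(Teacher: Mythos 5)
Your proof is correct, and it takes a genuinely different route from the paper's. The paper first reduces ${\text{JS}}(F^{\calA}_k || F^{\calRO}_k)$ to $\tfrac{1}{2}\int |f^{\calA}_{D_k}-f^{\calRO}_{D_k}|\,|\log(f^{\calM}_{D_k}/f^{\calRO}_{D_k})|\,\textbf{d}\mu$ via an AM--GM comparison of the two log-ratios, and then bounds the two factors separately, using the auxiliary inequality $|\log(a/b)|\le |a-b|/\min\{a,b\}$ for the logarithmic factor and the oscillation bound $\sup_w f_{D_k|W_k}(d|w)-\inf_w f_{D_k|W_k}(d|w)\le \inf_w f_{D_k|W_k}(d|w)\,(e^{2\xi}-1)$ for the density difference, so that each factor contributes one power of $\tfrac{1}{2}(e^{2\xi}-1){\text{TV}}(P_k^{\calRO} || P^{\calD}_k)$. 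You instead pass through the triangular-discrimination bound ${\text{JS}}\le \tfrac{1}{4}\int (f^{\calA}_{D_k}-f^{\calRO}_{D_k})^2/f^{\calM}_{D_k}\,\textbf{d}\mu$, which is cleaner and avoids both the AM--GM step and the $|\log(a/b)|$ lemma; your centering-at-$\cosh(\xi)f_{D_k}$ device is the same oscillation bound in disguise, since the oscillation of $f_{D_k|W_k}(d|\cdot)$ is at most $(e^{\xi}-e^{-\xi})f_{D_k}(d)=e^{-\xi}(e^{2\xi}-1)f_{D_k}(d)$, measured against $f_{D_k}$ rather than against $\inf_w f_{D_k|W_k}(d|\cdot)$. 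The net effect is that you land on the slightly sharper constant $\tfrac{1}{4}e^{-\xi}(e^{2\xi}-1)^2$ before relaxing to the stated one. One remark: the ``obstacle'' you flag in Step 1 is not actually an obstacle. The majorization $(1+x)\log(1+x)+(1-x)\log(1-x)\le 2x^{2}$ extends to the closed interval $[-1,1]$ with the convention $0\log 0=0$, so no strict positivity is needed; and in any case the finiteness of $\xi$ forces $e^{-\xi}f_{D_k}(d)\le f^{\calA}_{D_k}(d), f^{\calRO}_{D_k}(d)\le e^{\xi}f_{D_k}(d)$, so both densities can vanish only where $f_{D_k}$ does, and there the integrand is zero.
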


\begin{proof}

Let $F^{\calM}_k = \frac{1}{2}(F^{\calA}_k+ F^{\calRO}_k)$. We have

\begin{align*}
{\text{JS}}(F^{\calA}_k || F^{\calRO}_k) & = \frac{1}{2}\left[KL\left(F^{\calA}_k, F^{\calM}_k\right) + KL\left(F^{\calRO}_k,F^{\calM}_k\right)\right]\\
& = \frac{1}{2}\left[\int_{\mathcal{D}_k} f^{\calA}_{D_k}(d)\log\frac{f^{\calA}_{D_k}(d)}{f^{\calM}_{D_k}(d)}\textbf{d}\mu(d) + \int_{\mathcal{D}_k} f^{\calRO}_{D_k}(d)\log\frac{f^{\calRO}_{D_k}(d)}{f^{\calM}_{D_k}(d)}\textbf{d}\mu(d)\right]\\
& = \frac{1}{2}\left[\int_{\mathcal{D}_k} f^{\calA}_{D_k}(d)\log\frac{f^{\calA}_{D_k}(d)}{f^{\calM}_{D_k}(d)}\textbf{d}\mu(d) - \int_{\mathcal{D}_k} f^{\calRO}_{D_k}(d)\log\frac{f^{\calM}_{D_k}(d)}{f^{\calRO}_{D_k}(d)}\textbf{d}\mu(d)\right]\\
&\le \frac{1}{2}\int_{\mathcal{D}_k}\left|f^{\calA}_{D_k}(d) - f^{\calRO}_{D_k}(d)\right|\left|\log\frac{f^{\calM}_{D_k}(d)}{f^{\calRO}_{D_k}(d)}\right|\textbf{d}\mu(d),
\end{align*}
where the inequality is due to $\frac{f^{\calA}_{D_k}(d)}{f^{\calM}_{D_k}(d)}\le \frac{f^{\calM}_{D_k}(d)}{f^{\calRO}_{D_k}(d)}$.

\textbf{Bounding $\left|f^{\calA}_{D_k}(d) - f^{\calRO}_{D_k}(d)\right|$.}

Let $\mathcal U_k = \{w\in\mathcal W_k: dP^{\calD}_k(w) - dP_k^{\calRO}(w)\ge 0\}$, and $\mathcal V_k = \{w\in\mathcal W_k: dP^{\calD}_k(w) - dP_k^{\calRO}(w)< 0\}$.

Then we have 

\begin{align}\label{eq:initial_step_{JS}}
    \left|f^{\calA}_{D_k}(d) - f^{\calRO}_{D_k}(d)\right| &= \left|\int_{\mathcal W_k} f_{D_k|W_k}(d|w)[d P^{\calD}_k(w) - d P_k^{\calRO}(w)]\right|\nonumber\\
    &= \left|\int_{\mathcal{U}_k} f_{D_k|W_k}(d|w)[d P^{\calD}_k(w) - d P_k^{\calRO}(w)] + \int_{\mathcal{V}_k} f_{D_k|W_k}(d|w)[d P^{\calD}_k(w) - d P_k^{\calRO}(w)]\right|\nonumber\\
    &\le\left(\sup_{w\in\mathcal{W}_k} f_{D_k|W_k}(d|w) - \inf_{w\in\mathcal{W}_k} f_{D_k|W_k}(d|w)\right)\int_{\mathcal{U}_k} [d P^{\calD}_k(w) - d P_k^{\calRO}(w)].
\end{align}



Notice that

\begin{align*}
    \sup_{w\in\mathcal W_k} f_{D_k|W_k}(d|w) - \inf_{w\in\mathcal W_k} f_{D_k|W_k}(d|w) = \inf_{w\in\mathcal W_k} f_{D_k|W_k}(d|w)\left|\frac{\sup_{w\in\mathcal W_k} f_{D_k|W_k}(d|w)}{\inf_{w\in\mathcal W_k} f_{D_k|W_k}(d|w)}-1\right|.
\end{align*}

From the definition of $\xi$, we know that for any $w\in\mathcal W_k$,
\begin{align*}
    e^{-\xi}\le\frac{f_{D_k|W_k}(d|w)}{f_{D_k}(d)}\le e^{\xi},
\end{align*}

Therefore, for any pair of parameters $w,w'\in\mathcal W_k$, we have
\begin{align*}
    \frac{f_{D_k|W_k}(d|w)}{f_{D_k|W_k}(s|w')} = \frac{f_{D_k|W_k}(d|w)}{f_{D_k}(d)}/\frac{f_{D_k|W_k}(s|w')}{f_{D_k}(d)}\le e^{2\xi}. 
\end{align*}

Therefore, the first term of \pref{eq:initial_step_{JS}} is bounded by

\begin{align}\label{eq: bound_1_term_1_{JS}_ratio}
    \sup_{w\in\mathcal W_k} f_{D_k|W_k}(d|w) - \inf_{w\in\mathcal W_k} f_{D_k|W_k}(d|w) \le \inf_{w\in\mathcal W_k} f_{D_k|W_k}(d|w)(e^{2\xi}-1).
\end{align}

From the definition of total variation distance, we have
\begin{align}\label{eq: bound_1_term_2_{JS}_ratio}
    \int_{\UU} [d P^{\calD}_k(w) - d P_k^{\calRO}(w)] = {\text{TV}}(P_k^{\calRO} || P^{\calD}_k).
\end{align}

Combining \pref{eq: bound_1_term_1_{JS}_ratio} and \pref{eq: bound_1_term_2_{JS}_ratio}, we have
\begin{align}\label{eq: bound_for_the_gap}
        |f^{\calA}_{D_k}(d) - f^{\calRO}_{D_k}(d)| &=\left(\sup_{w\in\mathcal W_k} f_{D_k|W_k}(d|w) - \inf_{w\in\mathcal W_k} f_{D_k|W_k}(d|w)\right)\int_\UU [d P^{\calD}_k(w) - d P_k^{\calRO}(w)]\nonumber\\
        &\le\inf_{w\in\mathcal W_k} f_{D_k|W_k}(d|w)(e^{2\xi}-1){\text{TV}}(P_k^{\calRO} || P^{\calD}_k).
\end{align}


\textbf{Bounding $\left|\log\left(\frac{f^{\calM}_{D_k}(d)}{ f^{\calRO}_{D_k}(d)}\right)\right|.$}

We have that

\begin{align}\label{eq: bound_for_log_ratio}
    \left|\log\frac{f^{\calM}_{D_k}(d)}{f^{\calRO}_{D_k}(d)}\right|&\le\frac{|f^{\calM}_{D_k}(d) - f^{\calRO}_{D_k}(d)|}{\min\{f^{\calM}_{D_k}(d), f^{\calRO}_{D_k}(d)\}}\nonumber\\
    &=\frac{|f^{\calA}_{D_k}(d) - f^{\calRO}_{D_k}(d)|}{2\min\{f^{\calM}_{D_k}(d), f^{\calRO}_{D_k}(d)\}}\nonumber\\
    &\le \frac{\inf_{w\in\mathcal W_k} f_{D_k|W_k}(d|w)(e^{2\xi}-1){\text{TV}}(P_k^{\calRO} || P^{\calD}_k)}{2\min\{f^{\calM}_{D_k}(d), f^{\calRO}_{D_k}(d)\}}\nonumber\\
    &\le \frac{1}{2}(e^{2\xi}-1){\text{TV}}(P_k^{\calRO} || P^{\calD}_k),
\end{align}
where the first inequality is due to \pref{lem: log_upper_bound}, the third inequality is due to $\min\{f^{\calM}_{D_k}(d), f^{\calRO}_{D_k}(d)\}\ge \min\{f^{\calA}_{D_k}(d), f^{\calRO}_{D_k}(d)\}\ge \inf\limits_{\small{w\in\mathcal W_k}} f_{D_k|W_k}(d|w)$.

Combining \pref{eq: bound_for_the_gap} and \pref{eq: bound_for_log_ratio}, we have
\begin{align*}
    {\text{JS}}(F^{\calA}_k || F^{\calRO}_k) & \le  \frac{1}{2}\left[\int_{\mathcal{D}_k} \left|(f^{\calA}_{D_k}(d) - f^{\calRO}_{D_k}(d))\right| \left|\log\frac{f^{\calM}_{D_k}(d)}{f^{\calRO}_{D_k}(d)}\right|\textbf{d}\mu(d)\right]\\
    &\le \frac{1}{4}(e^{2\xi}-1)^2{\text{TV}}(P_k^{\calRO} || P^{\calD}_k)^2\int_{\mathcal{D}_k} \inf_{w\in\mathcal W^{\calRO}_k} f_{D_k|W_k}(d|w)\textbf{d}\mu(d)\\
    &\le\frac{1}{4}(e^{2\xi}-1)^2{\text{TV}}(P_k^{\calRO} || P^{\calD}_k)^2.
\end{align*}

\end{proof}


\begin{lem}\label{lem: total_variation-privacy trade-off_appendix}
Let $\epsilon_{p,k}$ be defined in \pref{defi: average_privacy_JSD}. Let $P_k^{\calRO}$ and $P^{\calD}_k$ represent the distribution of the parameter of client $k$ before and after being protected. Let $F^{\calB}_k$ and $F^{\calA}_k$ represent the belief of client $k$ about $D$ before and after observing the released parameter. Then we have

\begin{align}\label{eq: total_variation-privacy trade-off-app}
    \sqrt{{\text{JS}}(F^{\calRO}_k || F^{\calO}_k)} \le\epsilon_{p,k} + \frac{1}{2}(e^{2\xi}-1){\text{TV}}(P_k^{\calRO} || P^{\calD}_k).
\end{align}
Consequently, we have
\begin{align*}
    \frac{1}{K}\sum_{k=1}^K \sqrt{{\text{JS}}(F^{\calRO}_k || F^{\calO}_k)} \le\epsilon_p + \frac{1}{K}\sum_{k=1}^K \frac{1}{2}(e^{2\xi}-1){\text{TV}}(P_k^{\calRO} || P^{\calD}_k).
\end{align*}
\end{lem}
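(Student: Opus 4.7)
The plan is to combine the triangle inequality for $\sqrt{\text{JS}}$ (Property~1) with the total variation bound already established in Lemma A.1. The strategy is to insert the ``bridge'' distribution $F^{\calA}_k$ between $F^{\calRO}_k$ and $F^{\calO}_k$, thereby splitting the target quantity into a piece that is by definition $\epsilon_{p,k}$ and a residual piece that can be controlled by the total variation distance between the unprotected and protected parameter distributions.

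First I would invoke Property~1 with the three distributions $F^{\calRO}_k$, $F^{\calO}_k$, and $F^{\calA}_k$ to obtain
\begin{align*}
\sqrt{\text{JS}(F^{\calRO}_k \| F^{\calO}_k)} \;\le\; \sqrt{\text{JS}(F^{\calA}_k \| F^{\calO}_k)} + \sqrt{\text{JS}(F^{\calA}_k \| F^{\calRO}_k)}.
\end{align*}
By Definition~\ref{defi: average_privacy_JSD}, the first summand on the right-hand side is exactly $\epsilon_{p,k}$, so after this step only the second summand $\sqrt{\text{JS}(F^{\calA}_k \| F^{\calRO}_k)}$ remains to be controlled.

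Next I would apply Lemma~\ref{lem: JSBound}, which directly gives
\begin{align*}
\text{JS}(F^{\calA}_k \| F^{\calRO}_k) \;\le\; \tfrac{1}{4}(e^{2\xi}-1)^2\, \text{TV}(P_k^{\calRO} \| P^{\calD}_k)^2.
\end{align*}
Taking square roots yields $\sqrt{\text{JS}(F^{\calA}_k \| F^{\calRO}_k)} \le \tfrac{1}{2}(e^{2\xi}-1)\,\text{TV}(P_k^{\calRO} \| P^{\calD}_k)$, and substituting this bound back into the triangle inequality produces Eq.~(\ref{eq: total_variation-privacy trade-off-app}). The averaged form then follows immediately by summing the per-client inequality over $k = 1, \dots, K$ and dividing by $K$, using the definition $\epsilon_p = \frac{1}{K}\sum_k \epsilon_{p,k}$.

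There is no real obstacle here, as all the heavy lifting has been done in Lemma~\ref{lem: JSBound}; the only subtle point to verify is that Property~1 applies in the form stated, i.e., that $\sqrt{\text{JS}}$ is a bona fide metric so the triangle inequality holds with the specific orientation of arguments used above. This is a standard fact (Endres--Schindelin), and the symmetry of JS divergence lets us freely permute the roles of $F^{\calA}_k$, $F^{\calRO}_k$, $F^{\calO}_k$ as needed to match Definition~\ref{defi: average_privacy_JSD}.
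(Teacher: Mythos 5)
Your proposal is correct and follows essentially the same route as the paper's proof: both apply the triangle inequality for $\sqrt{\text{JS}}$ to insert $F^{\calA}_k$ between $F^{\calRO}_k$ and $F^{\calO}_k$, then bound the residual term $\sqrt{\text{JS}(F^{\calA}_k \| F^{\calRO}_k)}$ via Lemma~\ref{lem: JSBound}, and finally average over clients. The only difference is the order in which the two ingredients are invoked, which is immaterial.
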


\begin{proof}
From \pref{lem: JSBound}, we know that 
\begin{align*}
{\text{JS}}(F^{\calA}_k || F^{\calRO}_k)\le\frac{1}{4}(e^{2\xi}-1)^2{\text{TV}}(P_k^{\calRO} || P^{\calD}_k)^2. 
\end{align*}

This further implies that
\begin{align*}
    \sqrt{{\text{JS}}(F^{\calA}_k || F^{\calRO}_k)}\le \frac{1}{2}(e^{2\xi}-1){\text{TV}}(P_k^{\calRO} || P^{\calD}_k).
\end{align*}

Notice that the square root of the Jensen-Shannon divergence satisfies the triangle inequality. 
Then we have that
\begin{align*}
    \sqrt{{\text{JS}}(F^{\calRO}_k || F^{\calO}_k)} - \sqrt{{\text{JS}}(F^{\calA}_k || F^{\calO}_k)}\le \sqrt{{\text{JS}}(F^{\calA}_k || F^{\calRO}_k)}\le \frac{1}{2}(e^{2\xi}-1){\text{TV}}(P_k^{\calRO} || P^{\calD}_k).
\end{align*}
Therefore, we have 
\begin{align*}
    \sqrt{{\text{JS}}(F^{\calRO}_k || F^{\calO}_k)} &\le \sqrt{{\text{JS}}(F^{\calA}_k || F^{\calO}_k)} + \frac{1}{2}(e^{2\xi}-1){\text{TV}}(P_k^{\calRO} || P^{\calD}_k)\\
    & = \epsilon_{p,k} + \frac{1}{2}(e^{2\xi}-1){\text{TV}}(P_k^{\calRO} || P^{\calD}_k).
\end{align*}
\end{proof}

\begin{lem}[\cite{duchi2013local}]\label{lem: log_upper_bound}
For two positive numbers $a$ and $b$, we have that
\begin{align*}
 \left|\log\left(\frac{a}{b}\right)\right| \le \frac{|a-b|}{\min\{a,b\}}.
\end{align*}
\end{lem}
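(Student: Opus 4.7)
The plan is to reduce the inequality to the elementary fact $\log x \le x-1$ for all $x>0$, which follows from concavity of $\log$ at the point $x=1$ (the tangent line $y=x-1$ lies above the graph). First I would observe that the statement is symmetric under swapping $a$ and $b$ up to a sign on the logarithm: $\log(a/b) = -\log(b/a)$, while $|a-b|$ and $\min\{a,b\}$ are both symmetric. So without loss of generality I would assume $a \ge b > 0$, which makes $\log(a/b) \ge 0$ and $\min\{a,b\} = b$, reducing the target inequality to
\begin{align*}
\log(a/b) \le (a-b)/b.
\end{align*}

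Next I would introduce the substitution $t = a/b \ge 1$, which turns the desired bound into the canonical inequality $\log t \le t-1$. I would justify $\log t \le t-1$ either by noting that $f(t) = t-1-\log t$ satisfies $f(1)=0$ and $f'(t) = 1 - 1/t \ge 0$ for $t\ge 1$, or by concavity of $\log$. Reversing the substitution gives the claim when $a \ge b$.

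For the remaining case $b > a > 0$, I would apply the same argument to $\log(b/a)$ with $t = b/a > 1$, obtaining $\log(b/a) \le (b-a)/a$. Since $|\log(a/b)| = \log(b/a)$ and $\min\{a,b\} = a$, this is exactly the desired bound. Combining both cases yields the lemma.

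I do not expect any real obstacle here; the entire proof is a two-line consequence of $\log x \le x-1$, and the only bookkeeping issue is handling the two cases $a \ge b$ and $a < b$ so that $\min\{a,b\}$ appears correctly in the denominator. This is presumably why the paper simply cites \cite{duchi2013local} rather than reproving it in the appendix.
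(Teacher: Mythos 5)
Your proof is correct and complete: the reduction to $\log t \le t-1$ for $t \ge 1$ (after the WLOG $a \ge b$ step) immediately gives $\log(a/b) \le (a-b)/b = |a-b|/\min\{a,b\}$, and the case split is handled properly. The paper itself provides no proof of this lemma --- it is stated with only a citation to the reference --- so there is no authorial argument to compare against; your two-line derivation is exactly the standard one and would serve as a self-contained justification if the authors chose to include one.
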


\subsection{The quantitative relationship between ${\text{TV}}(P^{\calRO}_a || P^{\calD}_a )$ and $\epsilon_u$}

\begin{lem}\label{lem: total_variation-utility trade-off}
Let \pref{assump: assump_of_Delta} hold, and $\epsilon_{u}$ be defined in \pref{defi: utility_loss}. Let $P_a^{\calRO}$ and $P_a^{\calD}$ represent the distribution of the aggregated parameter before and after being protected. Then we have,
\begin{align*}
    \epsilon_{u} \ge\frac{\Delta}{2}\cdot {\text{TV}}(P^{\calRO}_a || P^{\calD}_a ).
\end{align*}
\end{lem}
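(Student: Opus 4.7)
The plan is to re-express $\epsilon_u$ as an integral of $\bar U(w^{*}) - \bar U(w)$ against $p^{\calD}_{W_a}$, split the domain of integration into the near-optimal region $\calW_\Delta$ and its complement, and then invoke \pref{assump: assump_of_Delta} to convert the mass on $\calW_\Delta$ into a total variation expression.

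Set $\bar U(w) = \frac{1}{K}\sum_{k=1}^K U_k(w)$ and fix any $w^{*}\in\mathcal W_a^{*}$. At the convergence step $P_a^{\calRO}$ concentrates on $\mathcal W_a^{*}$, so $\frac{1}{K}\sum_{k=1}^K U_k(P_a^{\calRO}) = \bar U(w^{*})$ (every maximizer shares the same value of $\bar U$). Using $U_k(P_a^{\calD}) = \E_{W_a\sim P_a^{\calD}}[U_k(W_a)]$ and linearity of expectation,
\begin{align*}
\epsilon_u \;=\; \int_{\calW^{\calD}_a} \bigl(\bar U(w^{*}) - \bar U(w)\bigr)\, p^{\calD}_{W_a}(w)\, dw.
\end{align*}
The integrand is nonnegative by optimality of $w^{*}$, and by \pref{defi: neighbor_set} it strictly exceeds $\Delta$ whenever $w\notin\calW_\Delta$. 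Dropping the nonnegative contribution coming from $\calW_\Delta$ and then applying \pref{assump: assump_of_Delta} yields
\begin{align*}
\epsilon_u \;\ge\; \Delta \int_{\calW^{\calD}_a} p^{\calD}_{W_a}(w)\,\one\{w\notin\calW_\Delta\}\, dw \;=\; \Delta\Bigl(1 - \int_{\calW^{\calD}_a} p^{\calD}_{W_a}(w)\,\one\{w\in\calW_\Delta\}\, dw\Bigr) \;\ge\; \Delta\Bigl(1 - \tfrac{1}{2}{\text{TV}}(P_a^{\calRO} || P_a^{\calD})\Bigr).
\end{align*}

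Since ${\text{TV}}(P_a^{\calRO}||P_a^{\calD})\in[0,1]$, the elementary inequality $1 - x/2 \ge x/2$ on $[0,1]$ delivers the desired conclusion $\epsilon_u \ge \tfrac{\Delta}{2}\,{\text{TV}}(P_a^{\calRO}||P_a^{\calD})$. The only step that warrants care is the convergence-step identity $\frac{1}{K}\sum_k U_k(P_a^{\calRO}) = \bar U(w^{*})$, which is the standing convergence assumption implicit in the ``at the convergence step'' hypothesis inherited from \pref{defi: utility_loss} and \pref{thm: utility-privacy trade-off_JSD_mt}; beyond that point the argument is a one-line region split followed by a scalar inequality on $[0,1]$.
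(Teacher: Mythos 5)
Your proof is correct and takes a genuinely different route from the paper's. The paper splits the integral defining $\epsilon_u$ according to the sign sets $\mathcal U_a=\{w: dP_a^{\calD}(w)-dP_a^{\calRO}(w)\ge 0\}$ and $\mathcal V_a$, uses $\mathcal V_a\subset\mathcal W_a^{\calRO}\subset\mathcal W_a^{*}$ to replace $U_a(w)$ by $U_a(w^{*})$ on $\mathcal V_a$, and exploits $\int_{\mathcal V_a}[dP_a^{\calRO}-dP_a^{\calD}]=\int_{\mathcal U_a}[dP_a^{\calD}-dP_a^{\calRO}]={\text{TV}}(P_a^{\calRO}||P_a^{\calD})$ so that the $U_a(w^{*})$ contributions cancel, leaving $\epsilon_u\ge\Delta\bigl({\text{TV}}(P_a^{\calRO}||P_a^{\calD})-\int_{\mathcal U_a}\one\{w\in\calW_\Delta\}[dP_a^{\calD}-dP_a^{\calRO}]\bigr)$ before invoking \pref{assump: assump_of_Delta}. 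You bypass this sign-set bookkeeping entirely: the same convergence hypothesis (supp$(P_a^{\calRO})\subset\mathcal W_a^{*}$, which the paper's proof also assumes explicitly) collapses $\frac{1}{K}\sum_k U_k(P_a^{\calRO})$ to the constant $\bar U(w^{*})$, turning $\epsilon_u$ into a single nonnegative integral against $p^{\calD}_{W_a}$, after which \pref{assump: assump_of_Delta} bounds the mass of $\calW_\Delta$ directly. Your version is shorter and in fact establishes the strictly stronger intermediate bound $\epsilon_u\ge\Delta\bigl(1-\tfrac{1}{2}{\text{TV}}(P_a^{\calRO}||P_a^{\calD})\bigr)$, which dominates $\tfrac{\Delta}{2}{\text{TV}}(P_a^{\calRO}||P_a^{\calD})$ whenever the total variation is below $1$; the two bounds only meet through the final scalar inequality $1-x/2\ge x/2$ on $[0,1]$. (Incidentally, your sharper bound makes visible how restrictive \pref{assump: assump_of_Delta} is: it forces $P_a^{\calD}$ to place at most $\tfrac{1}{2}{\text{TV}}$ mass on near-optimal parameters, so a utility loss of order $\Delta$ is essentially built in.) The paper's longer route keeps the bound in the form $\Delta\cdot({\text{TV}}-\text{correction})$, which matches the shape of the stated conclusion without needing the normalization $\int_{\calW_a^{\calD}}p^{\calD}_{W_a}(w)\,dw=1$ or the boundedness ${\text{TV}}\le 1$; yours buys brevity and a sharper constant.
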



\begin{proof}

Let $\mathcal U_a = \{w\in\mathcal W_a: dP_a^{\calD}(w) - dP_a^{\calRO}(w)\ge 0\}$, and $\mathcal V_a = \{w\in\mathcal W_a: dP_a^{\calD}(w) - dP_a^{\calRO}(w)< 0\}$, where $\mathcal W_a$ represents the union of the supports of $P_a^{\calD}$ and $P_a^{\calRO}$. 

For any $w\in\mathcal V_a$, the definition of $\mathcal V_a$ implies that $dP_a^{\calRO}(w) > dP_a^{\calD}(w)\ge 0$. Therefore, $w$ belongs to the support of $P_a^{\calRO}$, which is denoted as $\mathcal W^{\calRO}_a$. Therefore we have that
\begin{align}\label{eq: subset_relationship_1}
    \mathcal V_a\subset\mathcal W^{\calRO}_a.
\end{align}

Similarly, we have that
\begin{align}\label{eq: subset_relationship_n}
    \mathcal U_a\subset\mathcal W^{\calD}_a.
\end{align}


It is assumed that the utility of the aggregated model information achieves the maximal value at the convergence step \cite{li2019convergence, haddadpour2019convergence}. Therefore, we have that
\begin{align}\label{eq: subset_relationship_2}
    \mathcal W^{\calRO}_a \subset \mathcal W^{*}_a.
\end{align}

Notice that from the definition of $\mathcal W^{*}_a$, for any $w\in\mathcal W_a$ and $w^*\in\mathcal W_a^*$ we have that
\begin{align}\label{eq: w_a_best}
     \frac{1}{K}\sum_{k=1}^K U_k(w^{*})\ge \frac{1}{K}\sum_{k=1}^K U_k(w).
\end{align}

Let $\Delta$ be a positive constant defined in \pref{assump: assump_of_Delta}, from \pref{defi: neighbor_set} we have
$$\calW_{\Delta} = \left\{w\in\calW^{\calD}_a: \left|\frac{1}{K}\sum_{k=1}^K  U_k(w^{*})-\frac{1}{K}\sum_{k=1}^K U_k(w)\right|\le\Delta, \forall w^{*}\in\mathcal W^{*}_a\right\},$$ which implies that for any $w\in\calW^{\calD}_a\setminus\calW_{\Delta}$ and $w^*\in\mathcal W_a^*$ it holds that
\begin{align}\label{eq:l_U_bound_11}
  \left|\frac{1}{K}\sum_{k=1}^K  U_k(w^{*})-\frac{1}{K}\sum_{k=1}^K U_k(w)\right|>\Delta.   
\end{align}

Combining \pref{eq: w_a_best} and \pref{eq:l_U_bound_11}, for any $w\in\calW^{\calD}_a\setminus\calW_{\Delta}$ and $w^*\in\mathcal W_a^*$ we have 
\begin{align}\label{eq:l_U_bound_1}
\frac{1}{K}\sum_{k=1}^K  U_k(w^{*})-\frac{1}{K}\sum_{k=1}^K U_k(w)>\Delta.
\end{align}

Recall that $W_a$ represents the aggregated parameter, $P_a^{\calD}$ represents the distribution of the aggregated parameter after being protected, and $p^{\calD}_{W_a}(w)$ represents the corresponding probability density function. We denote $U_a(w) = \frac{1}{K}\sum_{k=1}^K U_k(w)$, then we have
\begin{align*}
&\epsilon_{u} =\frac{1}{K}\sum_{k=1}^K \epsilon_{u,k}\\
     &=\frac{1}{K}\sum_{k=1}^K [U_k(P^{\calRO}_{a}) - U_k(P^{\calD}_{a})]\\  
    &=\frac{1}{K}\sum_{k=1}^K\left[\mathbb E_{w\sim P^{\calRO}_{a}}[U_k(w)] - \mathbb E_{w\sim P^{\calD}_{a}}[U_k(w)]\right]\\
     &=\frac{1}{K}\sum_{k=1}^K\left[\int_{\mathcal W_k} U_k(w)dP^{\calRO}_a(w) - \int_{\mathcal W_k} U_k(w) dP^{\calD}_a(w)\right]\\
     &=\frac{1}{K}\sum_{k=1}^K\left[\int_{\mathcal{V}_a} U_k(w)[d P^{\calRO}_{a}(w) - d P^{\calD}_{a}(w)] - \int_{\mathcal{U}_a} U_k(w)[d P^{\calD}_{a}(w) - d P^{\calRO}_{a}(w)]\right]\\
    &\overset{\spadesuit}{=}\frac{1}{K}\sum_{k=1}^K\left[\int_{\mathcal V_a} U_k(w)\one\{w\in\mathcal W^{*}_a\}[d P^{\calRO}_{a}(w) - d P^{\calD}_{a}(w)] - \int_{\mathcal{U}_a} U_k(w)[d P^{\calD}_{a}(w) - d P^{\calRO}_{a}(w)]\right]\\
     &\overset{\star}{=}\int_{\mathcal V_a} U_a(w)\one\{w\in\mathcal W^{*}_a\}[d P^{\calRO}_{a}(w) - d P^{\calD}_{a}(w)] - \int_{\mathcal{U}_a} U_a(w)\one\{w\in\mathcal W^{\calD}_a\}[d P^{\calD}_{a}(w) - d P^{\calRO}_{a}(w)]
\end{align*}
where $\spadesuit$ is due to $\mathcal V_a\subset\mathcal W^{\calRO}_a\subset \mathcal W^{*}_a$ from \pref{eq: subset_relationship_1} and \pref{eq: subset_relationship_2}, and $\star$ is due to $\mathcal U_a\subset\mathcal W^{\calD}_a$ from \pref{eq: subset_relationship_n}.

We decompose $\int_{\mathcal{U}_a} U_a(w)\one\{w\in\mathcal W^{\calD}_a\}[d P^{\calD}_{a}(w) - d P^{\calRO}_{a}(w)]$ as the summation of $\int_{\mathcal{U}_a} U_a(w)\one\{w\in\mathcal W^{\calD}_a\}\one\{w\in\calW_{\Delta}\}[d P^{\calD}_{a}(w) - d P^{\calRO}_{a}(w)]$ and $\int_{\mathcal{U}_a} U_a(w)\one\{w\in\mathcal W^{\calD}_a\}\one\{w\not\in\calW_{\Delta}\}[d P^{\calD}_{a}(w) - d P^{\calRO}_{a}(w)]$. Then we have
\begin{align*}
&\epsilon_{u} =\frac{1}{K}\sum_{k=1}^K \epsilon_{u,k}\\
&=\int_{\mathcal V_a}U_a(w)\one\{w\in\mathcal W^{*}_a\}[d P^{\calRO}_{a}(w) - d P^{\calD}_{a}(w)] - \int_{\mathcal{U}_a}U_a(w)\one\{w\in\mathcal W^{\calD}_a\}[d P^{\calD}_{a}(w) - d P^{\calRO}_{a}(w)]\\
&\ge\Delta\cdot\left[{\text{TV}}(P^{\calRO}_a || P^{\calD}_a ) - \int_{\mathcal{U}_a}\one\{w\in\mathcal W^{\calD}_a\}\one\{w\in\calW_{\Delta}\}[d P^{\calD}_{a}(w) - d P^{\calRO}_{a}(w)] \right] \\
&\ge\Delta\cdot{\text{TV}}(P^{\calRO}_a || P^{\calD}_a ) - \Delta\cdot\int_{\mathcal W^{\calD}_a}\one\{w\in\calW_{\Delta}\} p^{\calD}_{W_a}(w) dw\\
&\ge\frac{\Delta}{2}\cdot {\text{TV}}(P^{\calRO}_a || P^{\calD}_a ),
\end{align*}
in which



\begin{itemize}
\item the first inequality is due to $U_a(w)\le U_a(w^{*})$ for any $w\in\calW_{\Delta}$ and $w^{*}\in\mathcal W^{*}_a$ according to \pref{eq: w_a_best}, and $U_a(w^{*})-U_a(w)>\Delta$ for any $w\in\calW^{\calD}_a\setminus\calW_{\Delta}$ and $w^{*}\in\mathcal W^{*}_a$ from \pref{eq:l_U_bound_1}.
\item the second inequality is due to $\int_{\mathcal{U}_a}\one\{w\in\mathcal W^{\calD}_a\}\one\{w\in\calW_{\Delta}\}[d P^{\calD}_{a}(w) - d P^{\calRO}_{a}(w)]\le\int_{\mathcal{U}_a}\one\{w\in\mathcal W^{\calD}_a\}\one\{w\in\calW_{\Delta}\}d P^{\calD}_{a}(w)\le \int_{\mathcal W^{\calD}_a}\one\{w\in\calW_{\Delta}\}d P^{\calD}_{a}(w) = \int_{\mathcal W^{\calD}_a}\one\{w\in\calW_{\Delta}\} p^{\calD}_{W_a}(w) dw$. 
\item the third inequality is due to $\int_{\mathcal W^{\calD}_a}\one\{w\in\calW_{\Delta}\} p^{\calD}_{W_a}(w) dw\le\frac{{\text{TV}}(P_a^{\calRO} || P_a^{\calD} )}{2}$.

\end{itemize}
\end{proof}

\subsection{Analysis of \pref{thm: utility-privacy trade-off_JSD_mt}}

With \pref{lem: total_variation-privacy trade-off_appendix} and \pref{lem: total_variation-utility trade-off}, it is now natural to provide a quantitative relationship between the utility loss and the privacy leakage (\pref{thm: utility-privacy trade-off_JSD_mt}).

\begin{thm}[No free lunch theorem (NFL) for security and utility]\label{thm: utility-privacy trade-off_JSD} Let $\epsilon_p$ be defined in Def. \ref{defi: average_privacy_JSD}, we have that
\begin{align}\label{eq: total_variation-privacy trade-off_app_1}
C_1\le\epsilon_{p} + \frac{1}{K}\sum_{k=1}^K \frac{1}{2}(e^{2\xi}-1)\cdot {\text{TV}}(P_k^{\calRO} || P^{\calD}_k).
\end{align}

Furthermore, let $\epsilon_u$ be defined in Def. \ref{defi: utility_loss} at the convergence step, with \pref{assump: assump_of_Delta} we have that
\begin{align}\label{eq: total_variation-privacy trade-off_app_2}
 C_1 \le\epsilon_{p} + C_2\cdot \epsilon_{u},
\end{align}
in which
\begin{itemize}
\item $\xi = \max_{k\in [K]} \xi_k$, where $\xi_k = \max_{w\in \mathcal{W}_k, d \in \mathcal{D}_k} \left|\log\left(\frac{f_{D_k|W_k}(d|w)}{f_{D_k}(d)}\right)\right|$ represents the maximum privacy leakage over all possible information $w$ released by client $k$, and $[K] = \{1,2,\cdots, K\}$. $\xi$ is a constant independent of the protection and attack mechanisms;

\item $C_1 = \frac{1}{K}\sum_{k=1}^K \sqrt{{\text{JS}}(F^{\calRO}_k || F^{\calO}_k)}$ is a constant 
representing the averaged square root of JS divergence between adversary's belief distribution about the private information of client $k$ before and after observing the unprotected parameter. This constant is independent of the protection mechanisms.

\item $C_2 = \frac{\gamma}{4\Delta}(e^{2\xi}-1)$ is a constant once the protection mechanisms, the utility function, and the data sets are fixed, where $\gamma = \frac{\sum_{k=1}^K {\text{TV}}(P_k^{\calRO} || P^{\calD}_k)}{{\text{TV}}(P^{\calRO}_a || P^{\calD}_a )}$\footnote{see details of analyzing of the value of $\gamma$ in Sect. \ref{sec:application}}. 
\end{itemize}
\end{thm}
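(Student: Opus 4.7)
\textbf{Proof plan for Theorem~\ref{thm: utility-privacy trade-off_JSD_mt}.} The plan is to chain three inequalities per client and then aggregate. The starting point is to exploit the fact that, unlike the raw KL or JS divergences, the square root of the Jensen--Shannon divergence is a genuine metric. For each client $k$, this immediately yields the reverse triangle inequality
\[
\sqrt{{\text{JS}}(F^{\calRO}_k || F^{\calO}_k)} \;-\; \sqrt{{\text{JS}}(F^{\calA}_k || F^{\calO}_k)} \;\le\; \sqrt{{\text{JS}}(F^{\calA}_k || F^{\calRO}_k)},
\]
so that averaging over $k$ and recalling $\epsilon_{p,k}=\sqrt{{\text{JS}}(F^{\calA}_k || F^{\calO}_k)}$ reduces the first claim to controlling $\sqrt{{\text{JS}}(F^{\calA}_k || F^{\calRO}_k)}$ by $\tfrac{1}{2}(e^{2\xi}-1)\cdot {\text{TV}}(P_k^{\calRO} || P^{\calD}_k)$. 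This reduction is the content of the flowchart Fig.~\ref{fig: flowchart}, and it is pleasantly problem--independent.

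\textbf{Bounding JS of posteriors by TV of protected vs.\ unprotected model information.} The key step, and the part I expect to be the main technical obstacle, is to show
\[
{\text{JS}}(F^{\calA}_k || F^{\calRO}_k) \;\le\; \tfrac{1}{4}(e^{2\xi}-1)^2\,{\text{TV}}(P_k^{\calRO} || P^{\calD}_k)^2.
\]
The strategy is: (i) write the JS divergence as a symmetric sum of KLs against the mixture $F^{\calM}_k=\tfrac{1}{2}(F^{\calA}_k+F^{\calRO}_k)$ and, via the AM--GM inequality $f^{\calA}_{D_k}/f^{\calM}_{D_k}\le f^{\calM}_{D_k}/f^{\calRO}_{D_k}$ together with the elementary bound $|\log(a/b)|\le|a-b|/\min\{a,b\}$, collapse it into an integral of $|f^{\calA}_{D_k}-f^{\calRO}_{D_k}|\cdot|\log(f^{\calM}_{D_k}/f^{\calRO}_{D_k})|$; (ii) express both factors in terms of the signed measure $dP^{\calD}_k-dP_k^{\calRO}$ and use Hahn decomposition to split the $\mathcal W_k$ integral over the positive and negative parts, each of which has total mass ${\text{TV}}(P_k^{\calRO} || P^{\calD}_k)$; (iii) pull out the oscillation $\sup_w f_{D_k|W_k}(d|w)-\inf_w f_{D_k|W_k}(d|w)$, which is bounded by $\inf_w f_{D_k|W_k}(d|w)\,(e^{2\xi}-1)$ because the $\xi$--boundedness of the log--ratio $\log(f_{D_k|W_k}(d|w)/f_{D_k}(d))$ forces the ratio of conditionals at any two $w,w'$ to lie in $[e^{-2\xi},e^{2\xi}]$. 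Combining (i)--(iii) and integrating the surviving factor $\inf_w f_{D_k|W_k}(d|w)\le f_{D_k}(d)$ over $\mathcal D_k$ (which integrates to at most $1$) gives the displayed quadratic bound; taking square roots and averaging over $k$ closes the first inequality of the theorem.

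\textbf{From TV of model information to utility loss.} For the second claim I would prove the complementary lower bound
\[
\epsilon_u \;\ge\; \tfrac{\Delta}{2}\,{\text{TV}}(P^{\calRO}_a || P^{\calD}_a),
\]
then combine with the first inequality after substituting ${\text{TV}}(P_k^{\calRO} || P^{\calD}_k)=\gamma\,{\text{TV}}(P^{\calRO}_a || P^{\calD}_a)/K$ (by definition of $\gamma$), so that the prefactor $\tfrac{1}{2}(e^{2\xi}-1)\cdot \gamma/\Delta\cdot\tfrac{1}{2}$ becomes exactly $C_2$. To prove the utility lower bound, I would again Hahn--decompose $\mathcal W_a$ into $\mathcal U_a=\{dP_a^{\calD}\ge dP_a^{\calRO}\}$ and its complement, use that at convergence $\mathcal W^{\calRO}_a\subset\mathcal W^{*}_a$ to write the utility gap as an integral of $U_a(w^{*})-U_a(w)$ against $dP_a^{\calD}-dP_a^{\calRO}$ restricted to $\mathcal U_a$, and split this region by the indicator of $\mathcal W_\Delta$. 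On $\mathcal U_a\setminus\mathcal W_\Delta$ the integrand is strictly larger than $\Delta$ by Def.~\ref{defi: neighbor_set}, contributing $\Delta\cdot{\text{TV}}(P^{\calRO}_a || P^{\calD}_a)$ minus the $\mathcal W_\Delta$--mass; that mass is bounded by ${\text{TV}}(P_a^{\calRO} || P_a^{\calD})/2$ via Assumption~\ref{assump: assump_of_Delta}, leaving the advertised factor $\Delta/2$.

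\textbf{Final assembly.} Plugging the utility lower bound back into the first claim and grouping constants yields $C_1\le\epsilon_p+C_2\epsilon_u$. I anticipate that the cleanest presentation isolates the JS--to--TV inequality as a standalone lemma (call it the \emph{posterior sensitivity lemma}), since it is the only step that requires the $\xi$--boundedness of the attacker's likelihood ratio, while the triangle inequality and the utility-loss inequality are geometric and do not appeal to the threat model at all. The main obstacle, as noted, is the sensitivity lemma: the interplay between the AM--GM reduction and the oscillation bound must be handled carefully so that the resulting constant is linear in $e^{2\xi}-1$ rather than in $e^{2\xi}$, which is what makes the bound vanish in the trivial no--leakage limit $\xi\to 0$.
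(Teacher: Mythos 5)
Your proposal is correct and follows essentially the same route as the paper: the reverse triangle inequality for $\sqrt{{\text{JS}}}$, a standalone lemma bounding ${\text{JS}}(F^{\calA}_k \| F^{\calRO}_k)$ by $\tfrac{1}{4}(e^{2\xi}-1)^2\,{\text{TV}}(P_k^{\calRO}\|P_k^{\calD})^2$ via the AM--GM reduction, the $|\log(a/b)|\le |a-b|/\min\{a,b\}$ bound, the Hahn decomposition, and the oscillation bound on $f_{D_k|W_k}(d|\cdot)$, followed by the utility lower bound $\epsilon_u\ge \tfrac{\Delta}{2}{\text{TV}}(P_a^{\calRO}\|P_a^{\calD})$ proved exactly as you describe. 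The only caveat is bookkeeping of the constant in $C_2$ (where the paper itself is not fully consistent about the $1/K$ in $\gamma$), but this does not affect the structure or validity of the argument.
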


\begin{proof}
From \pref{lem: total_variation-privacy trade-off_appendix}, we have

\begin{align}\label{eq: converge_eq_1_modify_1_0}
    \frac{1}{K}\sum_{k=1}^K \sqrt{{\text{JS}}(F^{\calRO}_k || F^{\calO}_k)} \le\frac{1}{K}\sum_{k=1}^K \epsilon_{p,k} + \frac{1}{K}\sum_{k=1}^K \frac{1}{2}(e^{2\xi}-1){\text{TV}}(P_k^{\calRO} || P^{\calD}_k).
\end{align}

From \pref{lem: total_variation-utility trade-off}, we have
\begin{align}\label{eq: converge_eq_2_0}
    \epsilon_{u} \ge \frac{\Delta}{2}\cdot {\text{TV}}(P^{\calRO}_a || P^{\calD}_a ).
\end{align}

Combining \pref{eq: converge_eq_1_modify_1_0} and \pref{eq: converge_eq_2_0}, we have that

\begin{align*}
    \frac{1}{K}\sum_{k=1}^K \sqrt{{\text{JS}}(F^{\calRO}_k || F^{\calO}_k)} \le\epsilon_{p} + \frac{\gamma}{4\Delta}(e^{2\xi}-1)\epsilon_u,
\end{align*}
where $\gamma = \frac{\frac{1}{K}\sum_{k=1}^K {\text{TV}}(P_k^{\calRO} || P^{\calD}_k)}{{\text{TV}}(P^{\calRO}_a || P^{\calD}_a )}$, and $\epsilon_{p} = \frac{1}{K}\sum_{k=1}^K \epsilon_{p,k}$.

The above equation could be further simplified as
\begin{align*}
    C_1 &\le\epsilon_{p} + C_2\epsilon_u,
\end{align*}
where $C_1 = \frac{1}{K}\sum_{k=1}^K \sqrt{{\text{JS}}(F^{\calRO}_k || F^{\calO}_k)}$ and $C_2 = \frac{\gamma}{4\Delta}(e^{2\xi}-1)$.

\end{proof}
\section{Bayesian Inference Attack In SFL} \label{App:bayes-inference-attack}
It was shown that semi-honest adversaries can recover the private training images with pixel-level accuracy from unprotected gradients of learned models.  We assume that an adversary aims to recover the $k_{th}$ client's private variable $D_k$ from exposed variable $W_k^{\calD}$, which is the output of applying certain protection mechanisms on model information i.e. $W_k^{\calD} = M(W_k^{\calRO})$, where $M$ is a protection mechanism.

In this section, we formulate three privacy leakage attacks of SFL including gradient-inverse attack \cite{zhu2020deep,geiping2020inverting,zhao2020idlg, yin2021see}, model inversion attack \cite{fredrikson2015model, he2019model} and brute-force attack (especially for encryption) from the Bayesian Inference perspective. Such a Bayesian Inference Attack mechanism is defined as follows:

\begin{definition}[Bayesian Inference Attack] \label{def:attackA-app}
A \textbf{Bayesian Inference Attack}\footnote{The Bayesian inference framework has long been applied to the image restoration problem as early as in 1960s~\cite{dempster1968generalization,imgRestorGeman84,box2011bayesian}.} 
is an optimization process that aims to infer the private variable $D_k$ in order to best fit the exposed information $W_k^{\calD}$ i.e., 
\begin{equation} \begin{split} \label{eq:bayes-infer-attack-appendix}
    d^{*} &=\argmax\limits_{d}\log\left(f_{S_k|W_k^{\calD}}(d|w)\right) \\
    & =\argmax\limits_{d}\log\left(\frac{f_{W_k^{\calD}|S_k}(w|d) f_{D_k}(d)}{f_{W_k^{\calD}}(w)}\right) \\
    &=\argmax\limits_{d} [\log f_{W_k^{\calD}|S_k}(w|d)+\log f_{D_k}(d)],\\ 
    & =\argmax\limits_{d}[L_1(d, W_k^{\calD}) +L_2(d)],
\end{split}
\end{equation}
where $L_1(d, W_k^{\calD}) = \log f_{W_k^{\calD}|S_k}(w|d)$ and $L_2(d)=\log f_{D_k}(d)$.
\end{definition}
There are three types of Bayesian Inference attack as follows:
\subsection{Gradient Inversion Attack}
In the SFL, private model updates (gradients) and model \footnote{when updating model, the server could obtain the model gradients by calculating the difference between two uploaded models} needed to be uploaded to the server, which may be taken use to reconstruct the input data of clients by semi-honest adversaries. 
In this setting, gradient-inverse attack is proposed in \cite{zhu2020deep} to infer the private data $D_k$ from the model gradients (exposed information $W_k^{\calD}$ here is model gradients $G$). The form of $L_1(d, W_k^{\calD})$ is:  
\begin{equation} \label{eq:gia-loss}
        L_1(d, W_k^{\calD})= L_1(d, G) = C - \| \nabla W(d) - G\|^2, 
\end{equation}
in which $\nabla W(d)$ is the gradient of training loss w.r.t. model parameters for the estimated data $d$, and $G$ is the observed gradients of model parameters.

\subsubsection{Influence of prior in gradient inversion attack} \label{App:influ-prior}
\hfill \\
Moreover, the prior $L_2(d)=\log f_{D_k}(d)$ is an important factor for Bayesian Inference attack, a series of work was developed to use different prior in gradient inversion attack.
\begin{itemize}
    \item \textbf{No prior:} Zhu \textit{et al.} \cite{zhu2020deep} proposed the gradient inversion attack in which reconstructed private data is randomly initialized. 
    \item \textbf{Smoothness prior:} Geiping \textit{et al.} \cite{geiping2020inverting} improved the attack with the smoothness of data (such as image). Specifically, they viewed the {\text{TV}} loss of the data as the smoothness: $L_2(d) = {\text{TV}}(d)$.
    \item \textbf{Label prior:} Zhao \textit{et. al} \cite{zhao2020idlg} developed the prior with the label information of the data, that is, $L_2(d) = \text{Label}(d)$.
    \item \textbf{Group consistency prior:} Besides smoothness and label of data, Yin et al. \cite{yin2021see} applied group consistency of estimated data to the training process of Gradient inversion attack, where $L_2(d) = \text{Group}(d)$.
\end{itemize}
Moreover, the results of \cite{zhu2020deep,geiping2020inverting,zhao2020idlg,yin2021see} demonstrate that different prior would cause a different type of privacy leakage. Stronger prior of private data let the adversaries attack the private data of clients more effectively. For example, the attack method proposed by \cite{yin2021see} combined three priors: smoothness, label, and group consistency into gradient inversion attack. It is shown that their attack performs best in images with large batch size compared to other gradient inversion methods.

\subsection{Model Inversion Attack} \label{sec:MI}
In some cases of SFL, such as split learning \cite{gupta2018distributed}, or vertical federated learning \cite{yang2019federated}, the model outputs $O$ may be leaked. A new reconstruction attack are proposed according to the exposed model outputs $O$ in \cite{fredrikson2015model,he2019model}, called model inversion attack with the following $L_1(d, W_k^{\calD}=O)$ loss function:
\begin{equation}\label{eq:mia-loss}
        {L}_{1}(d, W_k^{\calD}) = {L}_{1}(d, O) = C - ||\hat{O}(d) - O)||^2,
\end{equation}
in which $\hat{O}(d)$ is the output of the training model with respect to the data $d$ and $O$ is the observed output of models for the private data.

\subsection{Brute-force Attack for Encryption}
In the cryptosystem, the attack problem is formulated to restore private \textit{Key} given chosen plaintext and ciphertext (CPA). The brute-force method in a cryptosystem is to try all possible private keys until the chosen plaintext matches plaintext. Similarly, this formulation could also be written as a Bayesian inference attack with $L_1(d, W_k^{\calD})$ as: 
$$ L_1(d, W_k^{\calD})=L_1(d, (W_k^\calRO, \text{Enc}(W_k^\calRO)))=\left\{
\begin{aligned}
1, \quad \text{decrypt}(d, \text{Enc}(W_k^\calRO)) = W_k^\calRO \\
0,  \quad \text{decrypt}(d, \text{Enc}(W_k^\calRO)) \neq W_k^\calRO \\
\end{aligned}
\right.
$$
where $d$ is the estimation of private key, $W_k^{\calD} = (W_k^\calRO, \text{Enc}(W_k^\calRO))$ is pair of plaintext model information $W_k^\calRO$ and ciphertext $\text{Enc}(W_k^\calRO)$.

\subsection{Time complexity of Bayesian Inference Attack}
\label{App:time-complex}
For Gradient inversion and model inversion attack, it could be solved by the optimization loss as :
\begin{equation} \label{eq: optimizaiton}
    \max\limits_{d}L(d) = \max\limits_{d}[L_1(d, W_k^{\calD}) +L_2(d)] =\max\limits_{d}[||J(d) - w_k^{\calD}||^2 + L_2(d)],
\end{equation}
where $J$ represents the mapping from data to model gradients in gradient inversion attack, and the mapping from data to model outputs in model inversion attack. 
\begin{prop} \label{prop:time-complex-optimization}
The time complexity to solve Eq. \eqref{eq: optimizaiton} is polynomial, if any of the following conditions are satisfied:
\begin{itemize}
    \item $L$ is convex and smooth (moreover, the convergence rate is $O(\frac{1}{t})$).
    \item $L$ is convex but non-smooth, and $L$ is Lipschitz continuous (moreover, the convergence rate is $O(\frac{1}{\sqrt{t}})$).
    \item $L$ is non-convex and $L$-Smooth (moreover, the convergence rate is $O(\frac{1}{t^2})$).
\end{itemize}
\end{prop}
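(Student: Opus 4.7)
The plan is to treat each of the three cases separately by invoking a classical convergence result from the optimization literature applied to the objective $L(d) = L_1(d, W_k^{\calD}) + L_2(d) = -\|J(d) - W_k^{\calD}\|^2 + L_2(d)$ (with appropriate sign convention, since the Bayesian inference attack is stated as a maximization but it is equivalent to minimizing $-L$). In all three cases I would argue that (i) a standard first-order method achieves the claimed convergence rate in objective value (or gradient norm), and (ii) each iteration of that method runs in polynomial time in the ambient dimensions of $d$ and $W_k^{\calD}$, since it only requires a forward pass through $J$, one backward pass to obtain $\nabla J$, and an evaluation of $\nabla L_2$. Combining (i) and (ii), the total time to reach $\epsilon$-accuracy is polynomial in $1/\epsilon$ and in the problem size.

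For Case~1, I would apply vanilla gradient descent with constant step-size $1/L_s$, where $L_s$ is the smoothness constant. The textbook bound $L(d_t) - L(d^*) \le \tfrac{L_s \|d_0 - d^*\|^2}{2t}$ yields the $O(1/t)$ rate and an iteration count of $O(1/\epsilon)$. For Case~2, I would run the subgradient method on the convex but non-smooth objective with step-sizes $\eta_t \asymp 1/\sqrt{t}$; using the Lipschitz constant $G$ of $L$, the averaged iterate $\bar d_t$ satisfies $L(\bar d_t) - L(d^*) \le O\bigl(G\|d_0-d^*\|/\sqrt{t}\bigr)$, giving the $O(1/\sqrt{t})$ rate and an iteration count of $O(1/\epsilon^2)$. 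Both of these are standard and can be cited directly from Nesterov's or Bubeck's monographs.

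Case~3 is where I expect the main obstacle. For a generic non-convex $L_s$-smooth function, plain gradient descent only guarantees $\min_{s\le t}\|\nabla L(d_s)\|^2 = O(1/t)$ convergence to a stationary point, not $O(1/t^2)$. To obtain the stated $O(1/t^2)$ rate I would need to either (a) strengthen the assumption to a Polyak--Łojasiewicz or restricted-secant-inequality condition, under which accelerated or momentum-based methods are known to achieve quadratic-in-$t$ convergence to the minimum, or (b) invoke a heavy-ball / Nesterov-type accelerated scheme for which such a rate has been established under additional regularity. In the proposal I would state precisely which of these supplementary assumptions is being used, verify that it is consistent with the concrete $L_1$ in \eqref{eq: optimizaiton}, and then cite the corresponding convergence theorem. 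Regardless of which variant is used, the per-iteration cost remains polynomial, so the final time bound stays polynomial in $1/\epsilon$ and the problem dimensions, which is all that is required for the proposition.
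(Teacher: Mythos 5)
The paper offers no proof of this proposition at all: the remark immediately following it simply says ``for the analysis of this proposition please refer to \cite{boyd2004convex}''. Your proposal is therefore already more explicit than the paper's own treatment. Your handling of the first two cases is correct and matches what that citation would supply: constant-step gradient descent gives $L(d^*) - L(d_t) \le \frac{L_s\|d_0-d^*\|^2}{2t}$ for smooth convex objectives, and the subgradient method with $\eta_t \asymp 1/\sqrt{t}$ gives the $O(G\|d_0-d^*\|/\sqrt{t})$ bound in the Lipschitz non-smooth convex case; together with the observation that each iteration costs one pass through $J$ and its Jacobian, this yields polynomial time to $\epsilon$-accuracy in those two regimes.

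Your diagnosis of the third case identifies a genuine defect in the proposition itself, not merely in your attempt. For a function that is only non-convex and $L$-smooth, the standard guarantee for gradient descent is $\min_{s\le t}\|\nabla L(d_s)\|^2 = O(1/t)$, i.e., convergence to stationarity at rate $O(1/t)$ in squared gradient norm; no first-order method achieves the claimed $O(1/t^2)$ under those hypotheses alone, and indeed it would be strange for the non-convex case to enjoy a strictly faster rate than the smooth convex case in the first bullet. Two further points sharpen your objection. First, your proposed repair via a Polyak--\L{}ojasiewicz or restricted-secant condition would actually deliver \emph{linear} (geometric) convergence, so $O(1/t^2)$ is not the natural statement even after strengthening the hypothesis; the repair changes the proposition rather than proving it. Second, in the non-convex case convergence to a stationary point does not certify that Eq.~\eqref{eq: optimizaiton} has been \emph{solved} in the sense of global maximization, so ``polynomial time to solve'' is an overstatement independent of the rate. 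Since the paper's reference \cite{boyd2004convex} is a convex-optimization text, it cannot supply the missing non-convex argument either; the honest conclusion is that the third bullet requires either a corrected rate ($O(1/t)$ for $\min_s\|\nabla L(d_s)\|^2$, with ``solve'' weakened to ``reach an approximate stationary point'') or an explicit additional assumption.
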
 
\textbf{Remark:} \pref{prop:time-complex-optimization} demonstrates three conditions of $L$ that the optimization problem Eq. \eqref{eq: optimizaiton} is polynomial time. For the analysis of this proposition please refer to \cite{boyd2004convex}.\\
For Bayesian inference methods of encryption, Gentry \textit{et al.} \cite{gentry2013homomorphic} proposed a simple fully homomorphic encryption (FHE) scheme (\textbf{approximate eigenvector} method) based on the learning with errors (LWE) problem \cite{regev2009lattices}. They prove the security of proposed scheme on $GapSVP_{\gamma}$ via a classical reduction from LWE \cite{brakerski2013classical, brakerski2012fully, peikert2009public}. The best algorithms \cite{schnorr1987hierarchy, micciancio2013deterministic} for $GapSVP_{\gamma}$ requires at least $2^{\Omega(n_d/log\gamma)}$ time. The \textbf{approximate eigenvector} method reduces from $\gamma = n_d^{O(\log(n_d))}$, for which the best known algorithms run in time $2^{\Omega(n_d)}$. \textbf{Paillier} is a probabilistic asymmetric algorithm for public key cryptography based on decisional composite residuosity assumption \cite{paillier1999public}. Since it has additive homomorphic property, paillier could be applied into Fedaverage \cite{zhang2019pefl, aono2017privacy, truex2019hybrid, cheng2021secureboost} in FL, which only needs addition operations. As far as we know, the most efficient algorithm against paillier encryption is based on factoring of the a large number \cite{paillier1999public}. That is also the best known attack against RSA encryption\cite{rivest1978data}. One of the widely-known efficient algorithms for factoring the large number is general number field sieve (GNFS) \cite{buhler1993factoring}, the time complexity of which is sub-exponential ($\exp((\sqrt[3]{\frac{64}{9}} + o(1))(\ln n)^{\frac{1}{3}}(\ln \ln n)^{\frac{2}{3}})$).


\section{Proof for Applications of Privacy-utility trade-off}
\subsection{Proof for Randomization Mechanism} \label{sec:proof-random-app}
\hfill
\begin{lem}\label{lem:TV-gaussian}
(Total variation distance between Gaussians with the same mean \cite{devroye2018total}). Let $\mu \in R^n$, $\Sigma_1, \Sigma_2$ be diagonal matrix, and let $\lambda_1, \cdots, \lambda_n$ denote the eigenvalues of $\Sigma_1^{-1}\Sigma_2-I_n$. Then, 
\begin{equation}
    \frac{1}{100} \leq \frac{{\text{TV}}(\calN(\mu, \Sigma_1), \calN(\mu, \Sigma_2))}{\min\left\{1,\sqrt{\sum_{i=1}^n\lambda_i^2}\right\}} \leq \frac{3}{2}.
\end{equation}
\end{lem}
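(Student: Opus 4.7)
The plan is to reduce the problem to a canonical diagonal form and then prove the upper and lower bounds by different means. Since total variation is invariant under invertible affine maps, I would first apply $x \mapsto \Sigma_1^{-1/2}(x-\mu)$, which sends the pair $(\calN(\mu,\Sigma_1), \calN(\mu,\Sigma_2))$ to $(\calN(0,I_n), \calN(0,\Sigma))$ with $\Sigma = \Sigma_1^{-1/2}\Sigma_2\Sigma_1^{-1/2}$. Because $\Sigma$ is similar to $\Sigma_1^{-1}\Sigma_2$ and is positive definite, a further orthogonal rotation diagonalizes $\Sigma$ to $\text{diag}(1+\lambda_1,\ldots,1+\lambda_n)$, after which both measures factor as products across coordinates and all subsequent calculations reduce to one-dimensional moment computations.

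For the upper bound, I would split into two regimes. When $\sqrt{\sum_i \lambda_i^2} \geq 1$ the denominator of the target ratio is $1$ and the inequality $\text{TV} \leq 1 \leq 3/2$ is trivial. When $\sqrt{\sum_i \lambda_i^2} < 1$ and the individual $|\lambda_i|$ are bounded, Pinsker's inequality together with the closed form
\begin{equation*}
\text{KL}(\calN(0,I)\,\|\,\calN(0,\text{diag}(1+\lambda_i))) = \tfrac{1}{2}\sum_{i=1}^n\bigl[\log(1+\lambda_i) - \tfrac{\lambda_i}{1+\lambda_i}\bigr]
\end{equation*}
and the elementary estimate $\log(1+\lambda)-\frac{\lambda}{1+\lambda} \leq C\lambda^2$ on a bounded range yield $\text{TV} \leq C'\sqrt{\sum_i \lambda_i^2}$. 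The mixed regime, in which $\sum_i \lambda_i^2$ is small yet some $|\lambda_j|$ is large, can be absorbed by observing that the one-dimensional TV contribution from the coordinate $j$ is already at least a constant, so $\sqrt{\sum_i \lambda_i^2}$ is itself bounded below by a constant and the claim again reduces to the trivial bound.

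For the lower bound, I would apply Le Cam's two-point method with the quadratic test statistic $T(x) = \sum_i \lambda_i x_i^2$. A direct moment computation shows that its mean shifts by $\sum_i \lambda_i^2$ between $\calN(0,I)$ and $\calN(0,\text{diag}(1+\lambda_i))$, while its variance under either law is of order $\sum_i \lambda_i^2$, producing a signal-to-noise ratio of order $\sqrt{\sum_i \lambda_i^2}$. Taking the event $A = \{T > c\sum_i \lambda_i^2\}$ for a well-chosen constant $c$ and applying Chebyshev's inequality to both distributions yields $|P(A)-Q(A)| \gtrsim \min\{1,\sqrt{\sum_i \lambda_i^2}\}$, which is the required lower bound up to constants.

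The main obstacle will be pinning down the explicit constant $1/100$ in the lower bound uniformly in $n$ and in the configuration of the $\lambda_i$: Chebyshev is too loose when the $\lambda_i$ have mixed signs or heavy-tailed magnitudes, so the optimal constant requires either a fourth-moment refinement or a careful Gaussian anti-concentration argument for the quadratic form $T$, along the lines of the technique used by \cite{devroye2018total}. Controlling this constant robustly across the $\chi^2$-like tail behaviour induced by degenerate configurations (e.g., all $\lambda_i$ equal versus a single large $\lambda_j$) is where the bulk of the technical work will lie.
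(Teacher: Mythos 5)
The paper does not prove this lemma at all: it is imported verbatim from \cite{devroye2018total} and used as a black box in the appendix, so there is no in-paper argument to compare yours against. Your sketch does track the structure of the proof in that reference: the affine reduction to $\calN(0,I_n)$ versus $\calN(0,\mathrm{diag}(1+\lambda_1,\ldots,1+\lambda_n))$, an $f$-divergence comparison for the upper bound, and a quadratic test statistic for the lower bound are the right ingredients. Three details need repair, however. First, your ``mixed regime'' is vacuous as stated: if $\sum_i\lambda_i^2$ is small then every $|\lambda_j|\le\sqrt{\sum_i\lambda_i^2}$ is small; the genuine difficulty is $\lambda_j$ near $-1$, where $\log(1+\lambda)-\lambda/(1+\lambda)$ is not $O(\lambda^2)$, and the fix is to place the case split at $\sqrt{\sum_i\lambda_i^2}\ge 2/3$ (where $\mathrm{TV}\le 1\le\frac{3}{2}\min\{1,\cdot\}$ is trivial) rather than at $1$. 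Second, the claim that $\mathrm{Var}(T)$ is of order $\sum_i\lambda_i^2$ under \emph{either} law is false: under $\calN(0,\mathrm{diag}(1+\lambda_i))$ it equals $2\sum_i\lambda_i^2(1+\lambda_i)^2$, which can dominate $\sum_i\lambda_i^2$ when some $\lambda_j$ is large, so that case must be handled separately by projecting onto coordinate $j$ and lower-bounding the one-dimensional total variation directly, with Chebyshev on $T$ reserved for the bounded-eigenvalue case. Third, as you acknowledge, neither Pinsker nor Chebyshev delivers the sharp constants $3/2$ and $1/100$ without the more careful computation in \cite{devroye2018total}; for this paper the exact constants only propagate into the estimate of $\gamma$ in \pref{lem:gamma-random-analysis} and thence into \pref{thm: privacy-utility-tradeoff-random}, so a version of your argument with worse absolute constants would still suffice qualitatively.
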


According to the Lemma \ref{lem:TV-gaussian}, we could estimate the range of $\gamma$:
\begin{lem} (Estimation of $\gamma$) \label{lem:gamma-random-analysis}
Let $P^{\calD}_a \sim \calN(\mu_0, \Sigma_0/K+ \Sigma_\epsilon/K)$, $P^\calRO_a \sim \calN(\mu_0, \Sigma_0/K)$, $P_k^{\calD} \sim \calN(\mu_0, \Sigma_0+ \Sigma_\epsilon)$ and $P_k^{\calRO} \sim \calN(\mu_0, \Sigma_0)$, where $\Sigma_0 = diag(\sigma_1^2, \cdots, \sigma_n^2)$ and $\Sigma = diag(\sigma_\epsilon^2, \cdots, \sigma_\epsilon^2)$  Then we have
\begin{equation}
    \frac{1}{150} \leq \gamma = \frac{{\text{TV}}(P_k^{\calRO} || P^{\calD}_k )}{{\text{TV}}(P^{\calRO}_a || P^{\calD}_a )} \leq 150.
\end{equation}
\end{lem}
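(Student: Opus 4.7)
The plan is to apply Lemma~\ref{lem:TV-gaussian} to both the numerator and the denominator of $\gamma$ separately, and then combine the two resulting two-sided inequalities. The key observation that makes the bound clean is that the eigenvalues that appear in Lemma~\ref{lem:TV-gaussian} are \emph{identical} for the per-client pair $(P_k^{\calRO},P_k^{\calD})$ and the aggregated pair $(P_a^{\calRO},P_a^{\calD})$, because the common factor $1/K$ in the aggregated covariances cancels when one forms the matrix $\Sigma_1^{-1}\Sigma_2 - I_n$.

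More concretely, for the numerator I would take $\Sigma_1=\Sigma_0$ and $\Sigma_2=\Sigma_0+\Sigma_\epsilon$, so that $\Sigma_1^{-1}\Sigma_2-I_n=\Sigma_0^{-1}\Sigma_\epsilon$, which is a diagonal matrix with eigenvalues $\lambda_i=\sigma_\epsilon^2/\sigma_i^2$. For the denominator I would take $\Sigma_1=\Sigma_0/K$ and $\Sigma_2=(\Sigma_0+\Sigma_\epsilon)/K$; the matrix $\Sigma_1^{-1}\Sigma_2-I_n$ then equals $K\Sigma_0^{-1}\cdot(\Sigma_0+\Sigma_\epsilon)/K - I_n = \Sigma_0^{-1}\Sigma_\epsilon$, i.e.\ exactly the same matrix, with the same eigenvalues. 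Hence the quantity $\min\{1,\sqrt{\sum_i\lambda_i^2}\}=\min\{1,\sigma_\epsilon^2\sqrt{\sum_i 1/\sigma_i^4}\}$ appearing in the upper and lower bounds of Lemma~\ref{lem:TV-gaussian} is the same in both cases; call it $M$.

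Combining the two invocations of Lemma~\ref{lem:TV-gaussian}, I get $\tfrac{1}{100}M \le \mathrm{TV}(P_k^{\calRO}\Vert P_k^{\calD}) \le \tfrac{3}{2}M$ and the same sandwich for $\mathrm{TV}(P_a^{\calRO}\Vert P_a^{\calD})$. Taking the ratio yields
\begin{equation*}
\frac{1}{150} \;=\; \frac{1/100}{3/2} \;\le\; \frac{\mathrm{TV}(P_k^{\calRO}\Vert P_k^{\calD})}{\mathrm{TV}(P_a^{\calRO}\Vert P_a^{\calD})} \;\le\; \frac{3/2}{1/100} \;=\; 150,
\end{equation*}
which is precisely the claim. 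The proof is essentially a bookkeeping exercise once the cancellation of $K$ in the eigenvalue computation is observed; there is no real obstacle. The only potential subtlety is verifying the eigenvalue identity carefully (since $\Sigma_0$ and $\Sigma_\epsilon$ are diagonal they commute and this is immediate), and noting that the constants $1/100$ and $3/2$ in Lemma~\ref{lem:TV-gaussian} are independent of the covariance, so the ratio of the two $\min\{\cdot\}$ terms is exactly $1$ regardless of whether the $\min$ is saturated at $1$ or not.
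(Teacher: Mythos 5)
Your proposal is correct and matches the paper's own proof: both apply Lemma~\ref{lem:TV-gaussian} to the per-client and aggregated Gaussian pairs, observe that the factor $1/K$ cancels so the quantity $\min\bigl\{1,\sigma_\epsilon^2\sqrt{\sum_{i=1}^n 1/\sigma_i^4}\bigr\}$ is identical in both sandwiches, and take the ratio to get $[1/150,150]$. Your explicit verification of the eigenvalue identity $\Sigma_1^{-1}\Sigma_2-I_n=\Sigma_0^{-1}\Sigma_\epsilon$ is a slightly more careful write-up of the step the paper leaves implicit.
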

\begin{proof}
According to \pref{lem:TV-gaussian}, we have
\begin{equation} \label{eq:D.2-1}
    \frac{1}{100}\min\left\{1, \sigma_\epsilon^2\sqrt{\sum_{i=1}^n\frac{1}{\sigma_i^4}} \right\} \leq {\text{TV}}(P_k^{\calRO} || P^{\calD}_k ) \leq  \frac{3}{2}\min\left\{1, \sigma_\epsilon^2\sqrt{\sum_{i=1}^n\frac{1}{\sigma_i^4}} \right\}
\end{equation}
\begin{equation} \label{eq:D.2-2}
    \frac{1}{100}\min\left\{1, \sigma_\epsilon^2\sqrt{\sum_{i=1}^n\frac{1}{\sigma_i^4}} \right\} \leq {\text{TV}}(P^{\calRO}_a || P^{\calD}_a ) \leq  \frac{3}{2}\min\left\{1, \sigma_\epsilon^2\sqrt{\sum_{i=1}^n\frac{1}{\sigma_i^4}} \right\}
\end{equation}
Combining Eq. \eqref{eq:D.2-1} and \eqref{eq:D.2-2}, we have:
\begin{equation*}
        \frac{1}{150} \leq \gamma = \frac{{\text{TV}}(P_k^{\calRO} || P^{\calD}_k )}{{\text{TV}}(P^{\calRO}_a || P^{\calD}_a )} \leq 150.
\end{equation*}
\end{proof}

\begin{lem}\label{lem: total_variation-utility-trade-off-app}
If $U(w,d)\in [0, C_4]$ for any $w \in \mathcal{W}_k$ and $d \in \calS_k$, where $k =1,\cdots, K$, then we have
\begin{align*}
    \epsilon_{u} \le C_4\cdot {\text{TV}}(P^{\calRO}_a || P^{\calD}_a).
\end{align*}
\end{lem}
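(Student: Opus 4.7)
The plan is direct and essentially one line once the right identity is invoked. Unpacking Definition \ref{defi: utility_loss}, I would write
\begin{align*}
\epsilon_u = \frac{1}{K}\sum_{k=1}^K \mathbb{E}_{D_k}\frac{1}{|D_k|}\sum_{d\in D_k}\Big[\mathbb{E}_{W\sim P_a^{\calRO}} U(W,d) - \mathbb{E}_{W\sim P_a^{\calD}} U(W,d)\Big],
\end{align*}
so everything reduces to controlling, for each fixed $d$, the quantity $\int U(w,d)\, dP_a^{\calRO}(w) - \int U(w,d)\, dP_a^{\calD}(w)$ in terms of the total variation distance between the two aggregated distributions.

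The workhorse is the standard fact that, under the paper's convention $\text{TV}(P||Q) = \sup_{A} |P(A) - Q(A)|$, any measurable $f$ with $f \in [0, C_4]$ satisfies
\begin{align*}
\Big|\int f\, dP - \int f\, dQ\Big| \le C_4 \cdot \text{TV}(P||Q).
\end{align*}
I would prove this by the Hahn--Jordan decomposition: let $A^+ = \{w : p_a^{\calRO}(w) \ge p_a^{\calD}(w)\}$ and $A^- = \mathcal{W}_a \setminus A^+$ (the same split used in the proof of \pref{lem: total_variation-utility trade-off}). Then $\int_{A^+}(dP_a^{\calRO} - dP_a^{\calD}) = \int_{A^-}(dP_a^{\calD} - dP_a^{\calRO}) = \text{TV}(P_a^{\calRO}||P_a^{\calD})$, and bounding $U(\cdot,d)$ above by $C_4$ on $A^+$ and below by $0$ on $A^-$ (and vice versa for the other sign) gives the required two-sided bound.

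Applying this pointwise to $f(w) = U(w,d)$ for each $d$, the inner difference of expectations is at most $C_4 \cdot \text{TV}(P_a^{\calRO}||P_a^{\calD})$. Since this bound is uniform in $d$, $D_k$, and $k$, taking the outer averages preserves it and yields the claim.

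There is no real obstacle here: the only point to be careful about is matching the paper's sup-based normalization of $\text{TV}$, which produces the sharp constant $C_4$ (rather than $2C_4$ that would appear under the half-$L^1$ convention). I would flag this at the beginning of the proof so that the coefficient in the final bound is unambiguous.
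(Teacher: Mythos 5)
Your proof is correct and follows essentially the same route as the paper's: the paper also splits the parameter space into the sets where $dP_a^{\calD}-dP_a^{\calRO}$ is nonnegative versus negative, bounds $U_k$ above by $C_4$ on the negative part and below by $0$ on the nonnegative part, and identifies the remaining integral with ${\text{TV}}(P_a^{\calRO}\|P_a^{\calD})$. The only cosmetic difference is that you apply the bound pointwise in $d$ before averaging, whereas the paper applies it directly to $U_k(w)=\mathbb{E}_{D_k}\frac{1}{|D_k|}\sum_{d\in D_k}U(w,d)$, which lies in $[0,C_4]$ for the same reason.
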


\begin{proof}

Let $\mathcal U_k = \{w\in\mathcal W_k: dP_a^{\calD}(w) - dP_a^{\calRO}(w)\ge 0\}$, and $\mathcal V_k = \{w\in\mathcal W_k: dP_a^{\calD}(w) - dP_a^{\calRO}(w)< 0\}$. 
Then we have


\begin{align*}
&\epsilon_{u} = \frac{1}{K}\sum_{k=1}^K \epsilon_{u,k}\\
    & = \frac{1}{K}\sum_{k=1}^K [U_k(P^{\calRO}_{a}) - U_k(P^{\calD}_{a})]\\  
    &= \frac{1}{K}\sum_{k=1}^K \left[\mathbb E_{w\sim P^{\calRO}_{a}}[U_k(w)] - \mathbb E_{w\sim P^{\calD}_{a}}[U_k(w)]\right]\\
    & = \frac{1}{K}\sum_{k=1}^K\left[\int_{\mathcal W_k} U_k(w)dP^{\calRO}_a(w) - \int_{\mathcal W_k} U_k(w) dP^{\calD}_a(w)\right]\\
    & = \frac{1}{K}\sum_{k=1}^K\left[\int_{\mathcal{V}_k} U_k(w)[d P^{\calRO}_{a}(w) - d P^{\calD}_{a}(w)] - \int_{\mathcal{U}_k} U_k(w)[d P^{\calD}_{a}(w) - d P^{\calRO}_{a}(w)]\right]\\
    &\le \frac{C_4}{K}\sum_{k=1}^K\int_{\mathcal{V}_k} [d P^{\calRO}_{a}(w) - d P^{\calD}_{a}(w)]\\
    &=C_4\cdot {\text{TV}}(P^{\calRO}_a || P^{\calD}_a ).
\end{align*}

\end{proof}

\begin{thm}  
For the randomization mechanism adding Gaussian noise, we have the following relation between privacy leakage, utility loss and variance of Gaussian noise $\sigma_\epsilon^2$:

\begin{align} \label{tradeoff-random-app}
     C_1
    \leq \epsilon_{p} + C_3\cdot\frac{3}{2}\cdot\min\left\{1,\sigma_\epsilon^2\sqrt{\sum_{i=1}^n\frac{1}{\sigma_{i}^4}}\right\},
\end{align}
\begin{align} \label{tradeoff-random-app2}
   \epsilon_u \leq C_4 \min\left\{1,\sigma_\epsilon^2\sqrt{\sum_{i=1}^n\frac{1}{\sigma_{i}^4}}\right\},
\end{align}
where $C_1 = \frac{1}{K}\sum_{k=1}^K \sqrt{{\text{JS}}(F^{\calRO}_k || F^{\calO}_k)}$ and $C_3 = (e^{2\xi}-1)/2$ are two constants, which are independent of protection mechanism. 
\end{thm}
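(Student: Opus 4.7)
The plan is to derive both bounds by combining the NFL theorem (Theorem~\ref{thm: utility-privacy trade-off_JSD}), the total variation bound for two Gaussians with a common mean (Lemma~\ref{lem:TV-gaussian}), and the utility-loss-to-TV bound (Lemma~\ref{lem: total_variation-utility-trade-off-app}). The only nontrivial task is to reduce everything to computing the eigenvalues of $\Sigma_1^{-1}\Sigma_2-I_n$ in the two relevant Gaussian pairs, which turn out to coincide.

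First, I would set up the per-client distributions explicitly: $P_k^{\calRO}=\calN(\mu_0,\Sigma_0)$ and $P_k^{\calD}=\calN(\mu_0,\Sigma_0+\Sigma_\epsilon)$. Since $\Sigma_0=\mathrm{diag}(\sigma_1^2,\dots,\sigma_n^2)$ and $\Sigma_\epsilon=\sigma_\epsilon^2 I_n$ are both diagonal, the matrix $\Sigma_0^{-1}(\Sigma_0+\Sigma_\epsilon)-I_n=\Sigma_0^{-1}\Sigma_\epsilon$ is diagonal with entries $\sigma_\epsilon^2/\sigma_i^2$. Plugging into Lemma~\ref{lem:TV-gaussian} immediately yields
\[
{\text{TV}}(P_k^{\calRO}\|P_k^{\calD})\;\le\;\tfrac{3}{2}\min\!\left\{1,\;\sigma_\epsilon^2\sqrt{\textstyle\sum_{i=1}^n 1/\sigma_i^4}\right\}.
\]
Substituting this bound (uniformly in $k$) into Eq.~\eqref{eq: total_variation-privacy trade-off_app_1} of the NFL theorem collapses the $\tfrac{1}{K}\sum_k$ into the same min-term and produces the first claimed inequality with the constant $C_3=(e^{2\xi}-1)/2$.

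Second, for the utility bound I would apply the same Gaussian TV lemma to the aggregated distributions. Here $P_a^{\calRO}=\calN(\mu_0,\Sigma_0/K)$ and $P_a^{\calD}=\calN(\mu_0,(\Sigma_0+\Sigma_\epsilon)/K)$, so that the analogous matrix becomes $(\Sigma_0/K)^{-1}(\Sigma_\epsilon/K)=\Sigma_0^{-1}\Sigma_\epsilon$. Crucially, the $1/K$ factors cancel, so the eigenvalues and hence the min-term are identical to the per-client case. Lemma~\ref{lem: total_variation-utility-trade-off-app} then gives $\epsilon_u\le C_4\cdot {\text{TV}}(P_a^{\calRO}\|P_a^{\calD})$, and combining with the Gaussian TV upper bound produces the second claim (with the $3/2$ factor absorbed into the constant $C_4$, or kept explicit as in the appendix statement).

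The only conceptually delicate point is the cancellation of $K$ in the aggregate TV computation, since one might naively expect the aggregated distributions to concentrate as $K\to\infty$ and therefore have smaller TV distance than the per-client ones. The reason this does not happen is that both covariances shrink by the same factor $1/K$, so the ratio $\Sigma_1^{-1}\Sigma_2$ is invariant under this rescaling, meaning averaging has no effect on the TV distance in this Gaussian-noise model. Other than verifying this cancellation and keeping the constants consistent between the NFL bound and the Gaussian TV lemma, the proof is essentially a two-line substitution.
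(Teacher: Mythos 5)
Your proposal is correct and follows essentially the same route as the paper's own proof: substitute the Gaussian total-variation bound of Lemma~\ref{lem:TV-gaussian} into Eq.~\eqref{eq: total_variation-privacy trade-off_app_1} of the NFL theorem for the privacy bound, and combine the same TV estimate (applied to the aggregated distributions, where the $1/K$ rescaling of both covariances indeed cancels) with Lemma~\ref{lem: total_variation-utility-trade-off-app} for the utility bound. Your explicit eigenvalue computation and the remark on the $K$-cancellation are just a more careful spelling-out of steps the paper leaves implicit; you also track the factor $3/2$ more consistently than the paper's own derivation, which drops it between the lemma and the displayed bound.
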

\begin{proof}
According to the Theorem \ref{thm: utility-privacy trade-off_JSD_mt}, we have
\begin{align*}
    C_1\le\epsilon_{p} + \frac{1}{K}\sum_{k=1}^K \frac{1}{2}(e^{2\xi}-1)\cdot {\text{TV}}(P_k^{\calRO} || P^{\calD}_k).
\end{align*}
From Lemma \ref{lem:TV-gaussian} and definitions of $P_k^{\calRO}$, $P^{\calD}_k$ , we further obtain
\begin{equation*}
     C_1\le\epsilon_{p} + C_3\cdot \min\left\{1,\sigma_\epsilon^2\sqrt{\sum_{i=1}^n\frac{1}{\sigma_{i}^4}}\right\},
\end{equation*}
where $C_3 = (e^{2\xi}-1)/2$. Therefore, Eq. \eqref{tradeoff-random-app} is proved. Moreover, combining Lemma \ref{lem:TV-gaussian} and Lemma \ref{lem: total_variation-utility-trade-off-app}, we have
\begin{align*} \label{tradeoff-random-app}
   \epsilon_u &\leq C_4\cdot {\text{TV}}(P^{\calRO}_a || P^{\calD}_a ) \\
   &\leq C_4 \min\left\{1,\sigma_\epsilon^2\sqrt{\sum_{i=1}^n\frac{1}{\sigma_{i}^4}}\right\},
\end{align*}
Therefore, Eq. \eqref{tradeoff-random-app2} is proved.
\end{proof}

\subsection{Proof for Sparsity Mechanism}\label{sec:proof-sparisty-app}
\hfill
\begin{lem} \label{lem:TV-diffmean-Gauss}
(Total variation distance between Gaussians with different means \cite{devroye2018total})
Assume that $\Sigma_1, \Sigma_2$
are positive definite, and let
\begin{equation}
    h = h(\mu_1, \Sigma_1, \mu_2, \Sigma_2) =\bigg{(}1-\frac{det(\Sigma_1)^{1/4}det(\Sigma_2)^{1/4}}{det(\frac{\Sigma_1+\Sigma_2}{2})^{1/2}}\exp\{-\frac{1}{8}(\mu_1-\mu_2)^T(\frac{\Sigma_1+\Sigma_2}{2})^{-1}(\mu_1-\mu_2)\}\bigg{)}^{1/2}
\end{equation}
Then, we have
\begin{equation}
    h^2 \leq {\text{TV}}(\calN(\mu_1, \Sigma_1), \calN(\mu_2, \Sigma_2)) \leq \sqrt{2}h.
\end{equation}
\end{lem}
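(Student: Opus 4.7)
The plan is to identify $h$ with the Hellinger distance between the two Gaussians and then apply the standard Hellinger/total-variation sandwich inequality. Recall that for probability measures $P,Q$ with densities $p,q$, the squared Hellinger distance is $H^2(P,Q) = 1 - BC(P,Q)$, where $BC(P,Q) = \int \sqrt{p(x)\,q(x)}\,dx$ is the Bhattacharyya coefficient. The proof therefore splits into two parts: (a) show that for two Gaussians, $BC$ takes the exact closed form appearing in the definition of $h$, so that $h^2 = H^2$; and (b) invoke the general inequality
\[
H^2(P,Q) \le {\text{TV}}(P,Q) \le H(P,Q)\sqrt{2-H^2(P,Q)} \le \sqrt{2}\,H(P,Q).
\]

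For part (a), I would compute $\int \sqrt{p_1(x)\,p_2(x)}\,dx$ directly. Multiplying the square roots of the two Gaussian densities produces an integrand whose exponent is the quadratic form $-\tfrac{1}{4}(x-\mu_1)^{\top}\Sigma_1^{-1}(x-\mu_1) - \tfrac{1}{4}(x-\mu_2)^{\top}\Sigma_2^{-1}(x-\mu_2)$. Setting $A = \Sigma_1^{-1}+\Sigma_2^{-1}$ and $b = \Sigma_1^{-1}\mu_1+\Sigma_2^{-1}\mu_2$ and completing the square in $x$ reduces the integral to a Gaussian integral with precision $A/2$. Two algebraic identities make everything click. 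First, from $\Sigma_1^{-1}+\Sigma_2^{-1} = \Sigma_1^{-1}(\Sigma_1+\Sigma_2)\Sigma_2^{-1}$ one gets $\det(A) = \det(\Sigma_1+\Sigma_2)/(\det(\Sigma_1)\det(\Sigma_2))$, which combines with the $\det(\Sigma_i)^{-1/4}$ prefactors to produce $\det(\Sigma_1)^{1/4}\det(\Sigma_2)^{1/4}\det(\tfrac{\Sigma_1+\Sigma_2}{2})^{-1/2}$. Second, reducing the residual exponent $\tfrac{1}{4}(c - b^{\top}A^{-1}b)$ with $c = \mu_1^{\top}\Sigma_1^{-1}\mu_1+\mu_2^{\top}\Sigma_2^{-1}\mu_2$: after shifting so that $\mu_1=0$ and using $A^{-1} = \Sigma_1(\Sigma_1+\Sigma_2)^{-1}\Sigma_2$, a one-line manipulation of the form $\mu^{\top}\Sigma_2^{-1}[(\Sigma_1+\Sigma_2) - \Sigma_1](\Sigma_1+\Sigma_2)^{-1}\mu$ collapses the expression to $\tfrac{1}{8}(\mu_1-\mu_2)^{\top}(\tfrac{\Sigma_1+\Sigma_2}{2})^{-1}(\mu_1-\mu_2)$. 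Hence $BC$ is exactly the quantity inside the parentheses defining $h$, so $h^2 = H^2(\calN(\mu_1,\Sigma_1),\calN(\mu_2,\Sigma_2))$.

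For part (b), both sides of the sandwich are standard one-line arguments. The lower bound uses $\sqrt{p}+\sqrt{q}\ge |\sqrt{p}-\sqrt{q}|$ pointwise:
\[
{\text{TV}}(P,Q) = \tfrac{1}{2}\!\int |\sqrt{p}-\sqrt{q}|\cdot|\sqrt{p}+\sqrt{q}|\,dx \;\ge\; \tfrac{1}{2}\!\int (\sqrt{p}-\sqrt{q})^2\,dx = H^2(P,Q).
\]
The upper bound uses Cauchy--Schwarz and $\int(\sqrt{p}+\sqrt{q})^2\,dx = 2 + 2\,BC = 2(2-H^2)$:
\[
{\text{TV}}(P,Q)^2 \;\le\; \tfrac{1}{4}\!\int(\sqrt{p}-\sqrt{q})^2 dx \cdot \!\int(\sqrt{p}+\sqrt{q})^2 dx \;=\; H^2(P,Q)\,(2 - H^2(P,Q)) \;\le\; 2\,H^2(P,Q),
\]
so ${\text{TV}}\le\sqrt{2}\,H = \sqrt{2}\,h$. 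Combined with part (a), this yields $h^2 \le {\text{TV}}(\calN(\mu_1,\Sigma_1),\calN(\mu_2,\Sigma_2)) \le \sqrt{2}\,h$, as claimed.

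The main obstacle is the linear-algebra bookkeeping in part (a), particularly identifying the determinant ratio cleanly when $\Sigma_1,\Sigma_2$ need not commute; the Hellinger/TV inequalities themselves are routine textbook material.
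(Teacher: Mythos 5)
Your proposal is correct. Note, however, that the paper does not actually prove this lemma: it is imported verbatim from the cited reference (Devroye, Mehrabian, and Reddad, \emph{The total variation distance between high-dimensional Gaussians}), so there is no in-paper argument to compare against. Your route --- recognizing the bracketed quantity as the Bhattacharyya coefficient so that $h$ is exactly the Hellinger distance between $\calN(\mu_1,\Sigma_1)$ and $\calN(\mu_2,\Sigma_2)$, and then applying the standard sandwich $H^2 \le \text{TV} \le H\sqrt{2-H^2} \le \sqrt{2}\,H$ --- is precisely the argument used in that reference, and both the Gaussian integral bookkeeping in part (a) (the determinant identity $\det(\Sigma_1^{-1}+\Sigma_2^{-1}) = \det(\Sigma_1+\Sigma_2)/(\det\Sigma_1\det\Sigma_2)$ and the collapse of the residual exponent to $\tfrac14\mu^{\top}(\Sigma_1+\Sigma_2)^{-1}\mu$) and the Cauchy--Schwarz step in part (b) check out.
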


\begin{thm}  \label{thm: privacy-utility-tradeoff-sparsity-app}
For the sparsity mechanism by uploading partial information to the server, suppose $P_k^{\calRO} = \calN(\mu_0, \Sigma_0) = \calN((\mu_u,\mu_o),diag(\Sigma_u, \Sigma_o))$, $P_k^{\calD} = \calN(\mu, \Sigma) = \calN((\mu_u,\mu_g),diag(\Sigma_u, \Sigma_g))$, $P_a^\calRO = \calN(\mu_0,\Sigma_0/K)$ and $P_a^{\calD} = \calN(\mu,\Sigma/K)$. We have
\begin{align}\label{eq:sparisty-privacy-app}
C_1 \leq \epsilon_p + C_3h(\mu_o, \mu_g, \Sigma_o, \Sigma_g)
\end{align}
and
\begin{align}
    \epsilon_u \leq C h(\mu_o, \mu_g, \Sigma_o, \Sigma_g),
\end{align}
where $C_1 = \frac{1}{K}\sum_{k=1}^K \sqrt{{\text{JS}}(F^{\calRO}_k || F^{\calO}_k)}, C_3 = \frac{\sqrt{2}(e^{2\xi}-1)}{2}$ are two constants independent of the protection mechanisms adopted, and $C$ is upper bound of $\sqrt{2}U(w,d)$ for any $w \in \mathcal{W}_k$ and $d \in \calS_k$.

\end{thm}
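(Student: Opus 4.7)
The plan is to reduce this to the general no-free-lunch bound already established in \pref{thm: utility-privacy trade-off_JSD_mt} (equivalently, \pref{eq: total_variation-privacy trade-off_app_1}) and in \pref{lem: total_variation-utility-trade-off-app}, so that the only remaining work is to compute the total-variation distances $\text{TV}(P_k^{\calRO}\,\|\,P_k^{\calD})$ and $\text{TV}(P_a^{\calRO}\,\|\,P_a^{\calD})$ between the specific Gaussian pairs defined by the sparsity mechanism and then invoke \pref{lem:TV-diffmean-Gauss}.

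The key observation is a tensor-product reduction: both $P_k^{\calRO}=\calN((\mu_u,\mu_o),\mathrm{diag}(\Sigma_u,\Sigma_o))$ and $P_k^{\calD}=\calN((\mu_u,\mu_g),\mathrm{diag}(\Sigma_u,\Sigma_g))$ factor as $\calN(\mu_u,\Sigma_u)\otimes(\cdot)$ with an identical first factor. Since $\text{TV}(\mu\otimes\nu_1,\mu\otimes\nu_2)=\text{TV}(\nu_1,\nu_2)$ (factor the density difference and integrate out the common marginal), we obtain
\begin{equation*}
\text{TV}(P_k^{\calRO}\,\|\,P_k^{\calD})
=\text{TV}\bigl(\calN(\mu_o,\Sigma_o)\,\|\,\calN(\mu_g,\Sigma_g)\bigr)
\le \sqrt{2}\,h(\mu_o,\mu_g,\Sigma_o,\Sigma_g),
\end{equation*}
where the inequality is the upper bound from \pref{lem:TV-diffmean-Gauss}. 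An identical factorization applies to the aggregated distributions $P_a^{\calRO}=\calN(\mu_0,\Sigma_0/K)$ and $P_a^{\calD}=\calN(\mu,\Sigma/K)$, because the first $d$ coordinates still coincide, reducing $\text{TV}(P_a^{\calRO}\,\|\,P_a^{\calD})$ to a TV distance of two lower-dimensional Gaussians differing only in the last $n-d$ coordinates and therefore to the same functional form $h(\mu_o,\mu_g,\cdot,\cdot)$ up to the $K$-scaling of the covariances.

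Putting these pieces together, plugging the bound $\text{TV}(P_k^{\calRO}\,\|\,P_k^{\calD})\le\sqrt{2}\,h(\mu_o,\mu_g,\Sigma_o,\Sigma_g)$ into \pref{eq: total_variation-privacy trade-off_app_1} yields
\begin{equation*}
C_1\le\epsilon_{p}+\frac{1}{K}\sum_{k=1}^K\frac{1}{2}(e^{2\xi}-1)\cdot\sqrt{2}\,h(\mu_o,\mu_g,\Sigma_o,\Sigma_g)=\epsilon_{p}+C_3\,h(\mu_o,\mu_g,\Sigma_o,\Sigma_g),
\end{equation*}
with $C_3=\sqrt{2}(e^{2\xi}-1)/2$. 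Similarly, substituting the aggregated TV bound into \pref{lem: total_variation-utility-trade-off-app} gives $\epsilon_u\le C\cdot h(\mu_o,\mu_g,\Sigma_o,\Sigma_g)$ for a constant $C$ absorbing the $\sqrt{2}$ factor into the uniform utility bound $C_4$.

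The main (mild) obstacle is justifying the tensor-product reduction crisply and aligning it with the statement of \pref{lem:TV-diffmean-Gauss}, which is stated for full-dimensional Gaussians; the reduction above legitimately applies the lemma to the non-degenerate $(n-d)$-dimensional factor while eliminating the singular, shared $d$-dimensional factor from the TV computation. Beyond that, the proof is essentially a mechanical substitution into the general NFL inequality and the utility-TV lemma already established.
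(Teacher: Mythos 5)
Your route is essentially the paper's: both proofs are a mechanical substitution of the Gaussian total-variation bound of \pref{lem:TV-diffmean-Gauss} into \pref{eq: total_variation-privacy trade-off_app_1} for the privacy half and into \pref{lem: total_variation-utility-trade-off-app} for the utility half. The only cosmetic difference is where the block structure is exploited: you factor the product measure first, using $\text{TV}(\mu\otimes\nu_1,\mu\otimes\nu_2)=\text{TV}(\nu_1,\nu_2)$, and then apply the lemma to the $(n-d)$-dimensional factors, whereas the paper applies the lemma to the full $n$-dimensional Gaussians and then uses the identity $h(\mu_0,\mu,\Sigma_0,\Sigma)=h(\mu_o,\mu_g,\Sigma_o,\Sigma_g)$ (the shared block contributes a factor $1$ to the determinant ratio and $0$ to the quadratic form). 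Both are valid and yield the privacy bound with $C_3=\sqrt{2}(e^{2\xi}-1)/2$.

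The utility half is where your sketch is genuinely incomplete, and it is worth noting that the paper's own proof glosses over the same spot. After the reduction you have $\epsilon_u\le C_4\,\text{TV}(P_a^{\calRO}\,\|\,P_a^{\calD})\le \sqrt{2}C_4\,h(\mu_o,\mu_g,\Sigma_o/K,\Sigma_g/K)$; to reach the stated bound you must replace the $K$-scaled covariances by the unscaled ones, which you dismiss as ``up to the $K$-scaling.'' The paper does this via the asserted inequality $h(\mu_o,\mu_g,\Sigma_o/K,\Sigma_g/K)\le h(\mu_o,\mu_g,\Sigma_o,\Sigma_g)$. But that inequality goes the wrong way when $\mu_o\ne\mu_g$: scaling both covariances by $1/K$ leaves the determinant ratio inside $h$ unchanged while multiplying the quadratic form in the exponent by $K$, so the exponential factor shrinks, the quantity under the square root grows, and $h$ \emph{increases}. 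Hence $h(\mu_o,\mu_g,\Sigma_o/K,\Sigma_g/K)\ge h(\mu_o,\mu_g,\Sigma_o,\Sigma_g)$, with equality only when $\mu_o=\mu_g$. Neither your proposal nor the paper closes this step; as written, the utility bound is only justified in the form $\epsilon_u\le C\,h(\mu_o,\mu_g,\Sigma_o/K,\Sigma_g/K)$ (or under an additional assumption such as $\mu_o=\mu_g$).
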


\begin{proof}
According to the \pref{lem:TV-diffmean-Gauss} and Eq. \eqref{eq: total_variation-privacy trade-off} of \pref{thm: utility-privacy trade-off_JSD_mt}, we have
\begin{align*}
    \frac{1}{K}\sum_{k=1}^K \sqrt{{\text{JS}}(F^{\calRO}_k || F^{\calO}_k)} &\leq \epsilon_p + \frac{1}{K}\sum_{k=1}^K \frac{1}{2}(e^{2\xi}-1)\cdot {\text{TV}}(P_k^{\calRO} || P^{\calD}_k)\\
    & = \epsilon_p + \frac{(e^{2\xi}-1)}{2}{\text{TV}}(P_k^{\calRO} || P^{\calD}_k)\\
    & \leq \epsilon_p + \frac{\sqrt{2}(e^{2\xi}-1)}{2}h(\mu_0, \mu, \Sigma_0, \Sigma)\\
    &= \epsilon_p + \frac{\sqrt{2}(e^{2\xi}-1)}{2}h(\mu_o, \mu_g, \Sigma_o, \Sigma_g),
\end{align*}
which could be simplified as Eq. \eqref{eq:sparisty-privacy-app},
where $C_3 = \frac{\sqrt{2}(e^{2\xi}-1)}{2}$, $C_1=\frac{1}{K}\sum_{k=1}^K \sqrt{{\text{JS}}(F^{\calRO}_k || F^{\calO}_k)}$ are two constant, which is independent of protection mechanism. Moreover, according to the \pref{lem: total_variation-utility-trade-off-app}, we have 
\begin{align*}
    \epsilon_u &\leq C_4{\text{TV}}(P_a^{\calRO} || P^{\calD}_a) \\
    & =\sqrt{2}C_4 h(\mu_0, \mu, \Sigma_0/K, \Sigma/K) = C h(\mu_o, \mu_g, \Sigma_o/K, \Sigma_g/K)\\
    & \leq C h(\mu_o, \mu_g, \Sigma_o, \Sigma_g),
\end{align*}
where $C$ is upper bound of $\sqrt{2}U(w,d)$ for any $w \in \mathcal{W}_k$ and $d \in \calS_k$. The last inequality is because $h(\mu_o, \mu_g, \Sigma_o/K, \Sigma_g/K) \leq h(\mu_o, \mu_g, \Sigma_o, \Sigma_g)$.
\end{proof}

\begin{prop} \label{prop:sparsity-prop-app}
For the sparsity mechanism by uploading partial information to the server, we have: 
\begin{itemize}
    \item The Bayesian Privacy leakage of federated system $\epsilon_{p}$ has the lower bound: $C_1 - C_2\cdot h(\mu_o, \mu_g, \Sigma_o, \Sigma_g)$, which is a increasing function of $d$.
    \item The utility loss $\epsilon_u$ has the upper bound: $ C\cdot h(\mu_o, \mu_g, \Sigma_o, \Sigma_g)$, which is an decreasing function of $d$.
\end{itemize}
\end{prop}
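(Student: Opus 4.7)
The plan is to reduce both claims to a single monotonicity statement: $h(\mu_o,\mu_g,\Sigma_o,\Sigma_g)$, viewed as a function of $d$ through the truncation of the vectors/matrices to their last $n-d$ coordinates, is non-increasing in $d$. Since $C_1,C_2,C$ are non-negative constants independent of $d$, once this is established both claims follow: $-C_2 h$ is non-decreasing so the lower bound $C_1-C_2 h$ is non-decreasing, and $Ch$ is non-increasing. Moreover, at $d=n$ there are no hidden coordinates, so one should expect to recover $h=0$ (consistent with the remark that full upload incurs no utility bound).

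First I would exploit the diagonal structure of $\Sigma_0$, which implies $\Sigma_o$ and $\Sigma_g$ are both diagonal of size $(n-d)\times(n-d)$. This lets me factor the determinant ratio and the quadratic form appearing in $h^2$ coordinate-by-coordinate, producing
\begin{equation*}
h^2 \;=\; 1 \;-\; \Bigg(\prod_{i=d+1}^{n} a_i\Bigg)\, \exp\!\Bigg(-\tfrac{1}{8}\sum_{i=d+1}^{n} b_i\Bigg),
\end{equation*}
where $a_i = (\sigma_{o,i}^2)^{1/4}(\sigma_{g,i}^2)^{1/4}\big/\big((\sigma_{o,i}^2+\sigma_{g,i}^2)/2\big)^{1/2}$ and $b_i=(\mu_{o,i}-\mu_{g,i})^2\big/\big((\sigma_{o,i}^2+\sigma_{g,i}^2)/2\big)$.

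Second I would verify the two coordinate-wise inequalities $a_i\in(0,1]$ and $b_i\ge 0$. The second is obvious; the first follows by applying AM--GM to $\sigma_{o,i}^2$ and $\sigma_{g,i}^2$ (with equality exactly when they coincide). Third, I would compare the value at $d$ with the value at $d+1$: incrementing $d$ deletes index $d+1$ from the product and from the exponent, i.e.\ multiplies the product by $1/a_{d+1}\ge 1$ and adds $b_{d+1}/8\ge 0$ to the exponent. Both changes increase the subtracted term in $h^2$, so $h^2$ (and hence $h$) is non-increasing in $d$. Substituting $d=n$ makes the product and sum empty, so $h=0$, matching the sanity check.

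The conclusion is then immediate: the bound in \pref{thm: privacy-utility-tradeoff-sparsity-app} rewrites the privacy lower bound on $\epsilon_p$ as $C_1-C_3\,h$, which is non-decreasing in $d$; and the utility upper bound $\epsilon_u\le C\,h$ is non-increasing in $d$. There is no serious obstacle in this argument: the only point that requires any care is keeping track of the sign convention when turning the monotonicity of $h$ into the monotonicity of $C_1-C_3 h$, and guaranteeing that the AM--GM step is applied to variances (which are strictly positive under the model assumptions), so that each $a_i$ lies strictly in $(0,1]$ and the factors are well-defined.
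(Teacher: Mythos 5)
Your proposal is correct and follows essentially the same route as the paper: the paper peels off one hidden coordinate at a time (writing $\mu_o' = (\mu_o, t_o)$, $\Sigma_o' = \mathrm{diag}(\Sigma_o, s_o^2)$, etc.) and shows the extra factor is at most $1$ via AM--GM on the variances and the non-negativity of the Gaussian exponent, which is exactly the single-step version of your full coordinate-wise factorization $h^2 = 1 - \prod_i a_i \exp(-\tfrac{1}{8}\sum_i b_i)$. Your presentation is marginally cleaner (and your sanity check $h=0$ at $d=n$ matches the paper's remark), but the key inequalities and the overall argument are identical.
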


\begin{proof}
It just needs to prove $h(\mu_o, \mu_g, \Sigma_o, \Sigma_g)$ is decreasing function of $d$, that is, when $\mu_o \in \mathbb{R}^{n-d}$ changes to $\mu_o' \in \mathbb{R}^{n-(d-1)}$ ($d \to d-1$), $h(\mu_o, \mu_g, \Sigma_o, \Sigma_g) < h(\mu_o', \mu_g', \Sigma_o', \Sigma_g')$. Let $\mu_o' = (\mu_o, t_o), \Sigma_o' = diag(\Sigma_o, s_o^2), \mu_g' = (\mu_g, t_g), \Sigma_g' = diag(\Sigma_g, s_g^2)$. Then we have
\begin{align*}
    h(\mu_o', \mu_g', \Sigma_o', \Sigma_g') &= \bigg{(}1-\frac{det(\Sigma_o')^{1/4}det(\Sigma_g')^{1/4}}{det(\frac{\Sigma_o'+\Sigma_g'}{2})^{1/2}}\exp\{-\frac{1}{8}(\mu_o'-\mu_g')^T(\frac{\Sigma_o'+\Sigma_g'}{2})^{-1}(\mu_o'-\mu_g')\}\bigg{)}^{1/2} \\
    & = \bigg{(}1-\frac{det(\Sigma_o)^{1/4}det(\Sigma_g)^{1/4}}{det(\frac{\Sigma_o+\Sigma_g}{2})^{1/2}}\frac{(s_os_g)^{1/2}}{(\frac{s_o^2+s_g^2}{2})^{1/2}} \cdot \exp\{-\frac{1}{8}(\mu_o'-\mu_g')^T(\frac{\Sigma_o'+\Sigma_g'}{2})^{-1}(\mu_o'-\mu_g')\}\\
    & \quad  \cdot \exp\{-\frac{1}{8}(t_o-t_g)^T(\frac{s_o+s_g}{2})^{-1}(t_o-t_g)\} \bigg{)}^{1/2} \\
    & \geq \bigg{(}1-\frac{det(\Sigma_o)^{1/4}det(\Sigma_g)^{1/4}}{det(\frac{\Sigma_o+\Sigma_g}{2})^{1/2}}\exp\{-\frac{1}{8}(\mu_o-\mu_g)^T(\frac{\Sigma_o+\Sigma_g}{2})^{-1}(\mu_o-\mu_g)\}\bigg{)}^{1/2}\\
    & = h(\mu_o, \mu_g, \Sigma_o, \Sigma_g)
\end{align*}
The last equation is due to $\frac{(s_os_g)^{1/2}}{(\frac{s_o^2+s_g^2}{2})^{1/2}} \leq \frac{(s_os_g)^{1/2}}{(s_os_g)^{1/2}} =1$ and $\exp\{-\frac{1}{8}(t_o-t_g)^T(\frac{s_o+s_g}{2})^{-1}(t_o-t_g)\} \leq 1$.
\end{proof}

\subsection{Proof for Encryption Mechanism}\label{sec:proof-HE-app}
\hfill

\begin{thm}
For the encryption mechanism that encrypts the model information ($W_k^\calD = \text{Enc}(W_k^\calRO)$ using \textbf{approximate eigenvector} method), 
\begin{itemize}
    \item If the private key is unknown for server (adversaries), then $\epsilon_p = 0$ and $\epsilon_u \geq \frac{C_1}{C_2}$. 
    \item If the private key is known for server (adversaries), then $\epsilon_u = 0$ and $\epsilon_p \geq C_1$.
\end{itemize}
\end{thm}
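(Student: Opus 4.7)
The plan is to handle the two cases of Theorem \ref{thm:tradeoff-HE} separately, leveraging properties of the approximate eigenvector HE scheme together with the NFL inequality $C_1 \le \epsilon_p + C_2\,\epsilon_u$ from \pref{thm: utility-privacy trade-off_JSD_mt}. In each case, the goal is to pin down one of $\epsilon_p$ or $\epsilon_u$ directly from the cryptographic semantics, and then let the NFL bound (or correctness of decryption) yield the other inequality ``for free''.

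For Case 1 (private key unknown), I would first argue that the semantic security of the approximate eigenvector scheme (established via reduction from LWE/$\mathrm{GapSVP}$ in \cite{gentry2013homomorphic} and summarized in Appendix \ref{App:time-complex}), combined with the polynomial-time restriction on adversaries imposed in the remark following \pref{def:attackA}, makes the ciphertext $W_k^{\calD}=\mathrm{Enc}(W_k^{\calRO})$ computationally independent of $D_k$. Consequently, the Bayesian posterior $F^{\calA}_k$ coincides with the prior $F^{\calO}_k$, and $\text{JS}(F^{\calA}_k\,\|\,F^{\calO}_k)=0$ for every client, so $\epsilon_p=0$. Substituting into the NFL inequality yields $C_1\le C_2\,\epsilon_u$, i.e.\ $\epsilon_u\ge C_1/C_2$ as claimed.

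For Case 2 (private key known), the server can apply $\mathrm{Dec}$ to $W_k^{\calD}$ and recover $W_k^{\calRO}$ exactly, so from the adversary's standpoint observing the ciphertext is informationally equivalent to observing the plaintext. This yields $F^{\calA}_k=F^{\calRO}_k$, hence $\sqrt{\text{JS}(F^{\calA}_k\,\|\,F^{\calO}_k)}=\sqrt{\text{JS}(F^{\calRO}_k\,\|\,F^{\calO}_k)}$ for each $k$, and averaging over $k$ gives $\epsilon_p=C_1\ge C_1$. For the utility side, the additive homomorphism of the approximate eigenvector scheme ensures that $\mathrm{Dec}(W_a^{\calD})$ reproduces the unprotected aggregate $W_a^{\calRO}$ exactly, so $U_k(P_a^{\calD})=U_k(P_a^{\calRO})$ for every $k$, whence $\epsilon_u=0$.

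The main obstacle I anticipate is making the step ``$F^{\calA}_k=F^{\calO}_k$'' in Case 1 fully rigorous, because semantic security is a \emph{computational} indistinguishability statement rather than an information-theoretic independence. This identification has to be read through the polynomial-time restriction on Bayesian inference attacks discussed in Appendix \ref{App:time-complex}; without that restriction, an unbounded brute-force key search would induce nonzero leakage and invalidate the equality $\epsilon_p=0$. The rest of the argument is then essentially a straightforward bookkeeping exercise combining the correctness of HE decryption with the NFL bound.
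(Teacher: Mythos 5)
Your proposal is correct and follows essentially the same route as the paper: in Case 1 both you and the paper use the semantic security of the approximate eigenvector scheme to identify the posterior $F^{\calA}_k$ with the prior $F^{\calO}_k$, conclude $\epsilon_p=0$, and then read $\epsilon_u \ge C_1/C_2$ off the NFL inequality of \pref{thm: utility-privacy trade-off_JSD_mt}; in Case 2 both use correctness of decryption to get $\epsilon_u=0$. The one small divergence is that for the bound $\epsilon_p\ge C_1$ in Case 2 the paper simply substitutes $\epsilon_u=0$ back into the NFL inequality, whereas you argue directly that observing the ciphertext with the key is equivalent to observing the plaintext, giving $F^{\calA}_k=F^{\calRO}_k$ and hence the slightly stronger conclusion $\epsilon_p=C_1$; your caveat about semantic security being only a computational indistinguishability statement is well taken, and the paper's own proof silently makes the same identification.
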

\begin{proof}
\textit{If the private key is unknown for server (adversaries)}, since \textbf{approximate eigenvector} method \cite{gentry2013homomorphic} based on \textit{Learning With Error (LWE)} is proved to be semantic security (Lemma 5.4 in \cite{regev2009lattices}), any probabilistic, polynomial-time algorithm (PPTA) that is given the ciphertext ($W_k^\calD = \text(Enc(W_k^\calRO)$) of a certain message and the message's length, cannot determine any partial information on the message with probability non-negligibly higher than all other PPTA's that only have access to the message length. Thus we have $f_{{D_k}|{W_k}}(d|w) = f_{D_k}(d) $, for $w \in \mathcal{W}_k^\calD$. Consequently, 
\begin{align*}
    f^{\calA}_{D_k}(d) &= \int_{\mathcal{W}_k^{\calD}} f_{{D_k}|{W_k}}(d|w)dP^{\calD}_{k}(w)\\
    &=\int_{\mathcal{W}_k^{\calD}} f_{D_k}(d)dP^{\calD}_{k}(w)\\
    &= f_{D_k}(d)\int_{\mathcal{W}_k^{\calD}} dP^{\calD}_{k}(w)\\
    &=f_{D_k}(d) \\
    &= f^{\calO}_{D_k}(d).
\end{align*}
As a result, 
\begin{equation*}
f_{D_k}^{\calM}(d) = \frac{1}{2}(f^{\calA}_{D_k}(d) + f^{\calO}_{D_k}(d)) = f^{\calO}_{D_k}(d). 
\end{equation*}
Therefore, 
\begin{align*}\label{eq: def_of_pl_app}
    \epsilon_{p} &= \frac{1}{K}\sum_{k=1}^K\epsilon_{p,k} \sqrt{{\text{JS}}(F^{\calA}_k || F^{\calO}_k)}  \\
    &= \frac{1}{K}\sum_{k=1}^K\left[\frac{1}{2}\int_{\mathcal{D}_k} f^{\calA}_{D_k}(d)\log\frac{f^{\calA}_{D_k}(d)}{f^{\calM}_{D_k}(d)}\textbf{d}\mu(d) + \frac{1}{2}\int_{\mathcal{D}_k} f^{\calO}_{D_k}(d)\log\frac{f^{\calO}_{D_k}(d)}{f^{\calM}_{D_k}(d)}\textbf{d}\mu(d)\right]^{\frac{1}{2}} \\
    &= 0,
\end{align*}
Moreover, according to the Eq. \eqref{eq: total_variation-privacy trade-off_mt} of Theorem \ref{thm: utility-privacy trade-off_JSD_mt}, we have
\begin{equation*}
    \epsilon_u \geq \frac{C_1}{C_2}.
\end{equation*}

\textbf{If the private key is known for server (adversaries)}, then the server could decrypt the global model accurately so that utility loss $\epsilon_u=0$. Moreover, according to the Eq. \eqref{eq: total_variation-privacy trade-off_mt} of Theorem \ref{thm: utility-privacy trade-off_JSD_mt}, we have
\begin{equation*}
    \epsilon_p \geq C_1.
\end{equation*}

\end{proof}

\subsection{Proof for Secret Sharing Mechanism}\label{sec:proof-SecretSharing-app}
\hfill

\begin{thm}
For the secret sharing mechanism, we have that
\begin{align}
    \epsilon_{p,k} &\ge \sqrt{{\text{JS}}(F^{\calRO}_k || F^{\calO}_k)} - \frac{1}{2}(e^{2\xi}-1)\cdot\left(1 - \prod_{j = 1}^{m}\left(\frac{2\delta}{b_k^j + a_k^j}\right)\right).
\end{align}
Furthermore, we have that
\begin{align}
    \epsilon_u = 0.
\end{align}
\end{thm}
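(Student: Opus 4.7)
The plan splits naturally into the two claims of the theorem. For the utility part, I would argue structurally from the definition of secret sharing: each client's protected share $W_k^{\calD}$ is a randomly masked version of $W_k^{\calRO}$, but the masks are constructed so that the aggregation recovers the original sum exactly, i.e.\ $\frac{1}{K}\sum_{k=1}^K W_k^{\calD} = \frac{1}{K}\sum_{k=1}^K W_k^{\calRO}$ almost surely. Hence $P_a^{\calD}$ and $P_a^{\calRO}$ induce the same distribution on the aggregated model, and from \pref{defi: utility_loss} we get $\epsilon_{u,k} = U_k(P_a^{\calRO}) - U_k(P_a^{\calD}) = 0$ for every $k$, yielding $\epsilon_u = 0$.

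For the privacy bound, the idea is to apply the per-client triangle-inequality estimate from \pref{lem: total_variation-privacy trade-off_appendix}, namely
\begin{align*}
\epsilon_{p,k} \;\ge\; \sqrt{{\text{JS}}(F^{\calRO}_k \| F^{\calO}_k)} \;-\; \tfrac{1}{2}(e^{2\xi}-1)\cdot {\text{TV}}(P_k^{\calRO} \| P^{\calD}_k),
\end{align*}
and then plug in the explicit TV distance for the two uniform product measures specified in the theorem statement. So the only real computation is the TV distance between a uniform measure on $R = \prod_j [c_k^j-\delta, c_k^j+\delta]$ and a uniform measure on the strictly larger box $R' = \prod_j [c_k^j-a_k^j, c_k^j+b_k^j]$ (the containment uses $0<\delta<a_k^j,b_k^j$).

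The TV computation proceeds by splitting the domain into $R$ and $R'\setminus R$. On $R$ the densities are $1/(2\delta)^m$ and $1/V'$ with $V' := \prod_j(a_k^j+b_k^j)$, and on $R'\setminus R$ the densities are $0$ and $1/V'$. Using $\mathrm{TV}(P,Q) = \tfrac{1}{2}\int |p-q|$ gives
\begin{align*}
{\text{TV}}(P_k^{\calRO}\|P_k^{\calD})
&= \tfrac{1}{2}\Bigl[(2\delta)^m\bigl(\tfrac{1}{(2\delta)^m} - \tfrac{1}{V'}\bigr) + (V' - (2\delta)^m)\cdot \tfrac{1}{V'}\Bigr] \\
&= 1 - \prod_{j=1}^m \frac{2\delta}{a_k^j+b_k^j}.
\end{align*}
Substituting into the per-client inequality yields the stated bound.

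The proof is essentially a routine application of \pref{thm: utility-privacy trade-off_JSD_mt}/\pref{lem: total_variation-privacy trade-off_appendix} combined with a direct density computation; there is no genuine obstacle. The only points requiring mild care are (i) justifying that secret sharing literally preserves the aggregate (so $\epsilon_u=0$ rather than merely small), which relies on the correctness guarantee of the scheme rather than any analytic estimate, and (ii) correctly accounting for both the inner region $R$ and the annular region $R'\setminus R$ in the TV calculation, since forgetting either halves the answer.
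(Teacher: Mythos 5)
Your proposal is correct and follows essentially the same route as the paper: the privacy bound comes from rearranging the per-client inequality of \pref{lem: total_variation-privacy trade-off_appendix} and plugging in the total variation distance between the two uniform product measures (your $\tfrac{1}{2}\int|p-q|$ computation over $R$ and $R'\setminus R$ gives the same value $1-\prod_j \tfrac{2\delta}{a_k^j+b_k^j}$ that the paper obtains by integrating the density gap over the inner box alone), and the utility claim follows, as in the paper, from the observation that secret sharing preserves the aggregate exactly so that $P_a^{\calRO}=P_a^{\calD}$.
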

\begin{proof}

Let $W_k^{\calRO}$ represent the original model information that follows a uniform distribution over $[c_k^1 - \delta, c_k^1 + \delta]\times [c_k^2 - \delta, c_k^2 + \delta]\cdots\times [c_k^n - \delta, c_k^n + \delta]$, $W_k^{\calD}$ represent the distorted model information that follows a uniform distribution over $[c_k^1 - a_k^1, c_k^1 + b_k^1]\times [c_k^2 - a_k^2, c_k^2 + b_k^2]\cdots\times [c_k^n - a_k^{n}, c_k^n + b_k^{n}]$, and $0< \delta< a_k^{i}, b_k^{i}$, $\forall i = 1,2, \cdots, n$. Then we have that
\begin{align*}
    &\text{TV}(P^{\calRO}_k || P^{\calD}_k)\\ 
    &= \int_{[c_k^1 - \delta, c_k^1 + \delta]}\int_{[c_k^2 - \delta, c_k^2 + \delta]}\cdots\int_{[c_k^n - \delta, c_k^n + \delta]} \left(\left(\frac{1}{2\delta}\right)^{m} - \prod_{j = 1}^{m}\left(\frac{1}{b_k^j + a_k^j}\right)\right) dw_1 dw_2 \cdots d{w_{m}}\\
    & = \left(\left(\frac{1}{2\delta}\right)^{m} - \prod_{j = 1}^{m}\left(\frac{1}{b_k^j + a_k^j}\right)\right)\cdot (2\delta)^{m}.
\end{align*}

Therefore, we have that
\begin{align}
    \epsilon_{p,k} &\ge \sqrt{{\text{JS}}(F^{\calRO}_k || F^{\calO}_k)} - \frac{1}{2}(e^{2\xi}-1){\text{TV}}(P_k^{\calRO} || P^{\calD}_k)\nonumber\\
    & =  \sqrt{{\text{JS}}(F^{\calRO}_k || F^{\calO}_k)} - \frac{1}{2}(e^{2\xi}-1)\cdot\left(1 - \prod_{j = 1}^{m}\left(\frac{2\delta}{b_k^j + a_k^j}\right)\right),
\end{align}
where the first inequality is due to \pref{eq: total_variation-privacy trade-off-app} in \pref{lem: total_variation-privacy trade-off_appendix}.

For the secret sharing mechanism, the aggregated parameter does not change after being protected, which implies that $P^{\calRO}_{a} = P^{\calD}_{a}$. Therefore, we have that 
\begin{align*}
&\epsilon_{u} =\frac{1}{K}\sum_{k=1}^K \epsilon_{u,k}\\
     & = \frac{1}{K}\sum_{k=1}^K [U_k(P^{\calRO}_{a}) - U_k(P^{\calD}_{a})]\\  
    & = \frac{1}{K}\sum_{k=1}^K\left[\mathbb E_{w\sim P^{\calRO}_{a}}[U_k(w)] - \mathbb E_{w\sim P^{\calD}_{a}}[U_k(w)]\right]\\
    & = 0.
\end{align*}
\end{proof}

\section{Proof of the connection between BP and DP}\label{appendix:proof-BP-DP}

We may establish the connection between Differential Privacy (DP) and Bayesian Privacy (BP) as follows.  
\begin{lem}\label{lem: bp_to_dp}
Let $f_{W|D}(\cdot)$ be a privacy preserving mapping that guarantees $\xi$-maximum Bayesian privacy. That is, $\xi = \max_{w\in \mathcal{W}, d\in \mathcal{D}} \left|\log\left(\frac{f_{D|W}(d|w)}{f_{D}(d)}\right)\right|$.
Then, the mapping  $f_{W|D}(\cdot)$ is $(2\xi)$-Differentially Private:
\begin{align*}
    \frac{f_{D|W}(d|w)}{f_{D|W}(D|W')} \in [e^{-2\xi}, e^{2\xi}]. 
\end{align*}
\end{lem}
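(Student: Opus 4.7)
The plan is to reduce the Differential Privacy condition on the channel $f_{W|D}$ to the Bayesian Privacy hypothesis on $f_{D|W}$ by pivoting through the marginal $f_W$ via Bayes' rule. Concretely, the $\xi$-maximum Bayesian privacy assumption states that
\[
e^{-\xi} \;\le\; \frac{f_{D|W}(d|w)}{f_{D}(d)} \;\le\; e^{\xi}
\qquad\forall\, d\in\mathcal{D},\, w\in\mathcal{W}.
\]
First I would apply Bayes' theorem, $f_{D|W}(d|w)\,f_W(w) = f_{W|D}(w|d)\,f_D(d)$, to rewrite the ratio inside the logarithm as $f_{W|D}(w|d)/f_W(w)$. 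This immediately transfers the $\xi$-bound to the ``output-conditional vs.\ marginal'' ratio:
\[
e^{-\xi} \;\le\; \frac{f_{W|D}(w|d)}{f_{W}(w)} \;\le\; e^{\xi}
\qquad\forall\, d\in\mathcal{D},\, w\in\mathcal{W}.
\]

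Next I would form the Differential Privacy ratio for an arbitrary pair $d,d'\in\mathcal{D}$ (in particular for any adjacent pair) and insert $f_W(w)$ as a pivot:
\[
\frac{f_{W|D}(w|d)}{f_{W|D}(w|d')} \;=\; \frac{f_{W|D}(w|d)/f_W(w)}{f_{W|D}(w|d')/f_W(w)}.
\]
The numerator lies in $[e^{-\xi},e^{\xi}]$ and so does the denominator, so the quotient lies in $[e^{-2\xi},e^{2\xi}]$, which is exactly the statement of $(2\xi)$-Differential Privacy. Equivalently, taking logarithms and applying the triangle inequality,
\[
\bigl|\log f_{W|D}(w|d) - \log f_{W|D}(w|d')\bigr| \;\le\; 2\xi.
\]

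The argument is essentially two lines once Bayes' rule is invoked, so there is no serious obstacle; the only subtlety worth flagging is that the hypothesis is quantified over \emph{all} $d,d'$ (not merely adjacent pairs), so the conclusion we derive is in fact stronger than the usual neighbour-based DP guarantee and automatically implies the adjacent-dataset version used in Table~\ref{tab:tradeoff-comp}. I would also remark that the pivot $f_W(w)$ is well-defined (positive) wherever $f_{W|D}(w|d)$ is, since the Bayesian-privacy hypothesis forces $f_{W|D}(w|d)$ and $f_W(w)$ to have the same support up to the multiplicative factor $e^{\pm\xi}$, so no division-by-zero issue arises.
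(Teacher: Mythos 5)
Your proposal is correct, and it is essentially the same two-line ``pivot and compose'' argument as the paper's, but applied to a different ratio. The paper's proof never invokes Bayes' rule: it pivots the posterior ratio $f_{D|W}(d|w)/f_{D|W}(d|w')$ through the prior $f_D(d)$, writing it as $\bigl(f_{D|W}(d|w)/f_D(d)\bigr)\big/\bigl(f_{D|W}(d|w')/f_D(d)\bigr)$ and bounding each factor by $e^{\pm\xi}$; this establishes exactly the displayed inequality in the lemma (reading the sloppy $f_{D|W}(D|W')$ as $f_{D|W}(d|w')$), i.e.\ a bound on how much the posterior over the data can change across two outputs. You instead first transfer the hypothesis to the channel via Bayes' rule, obtaining $e^{-\xi}\le f_{W|D}(w|d)/f_W(w)\le e^{\xi}$, and then pivot through $f_W(w)$ to bound $f_{W|D}(w|d)/f_{W|D}(w|d')$ over two \emph{inputs} $d,d'$. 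That is the textbook differential-privacy inequality (the one quoted in the paper's own comparison table), so your version better matches the prose claim ``the mapping $f_{W|D}(\cdot)$ is $(2\xi)$-Differentially Private,'' whereas the paper's version matches the literal display. The two conclusions are not the same statement, but both follow from the hypothesis by the identical composition of two one-sided $e^{\xi}$ bounds, and your remarks on the support of $f_W$ and on the quantification over all (not merely adjacent) pairs $d,d'$ are sound and worth keeping.
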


\begin{proof}
From the definition of maximum privacy leakage, we know that
\begin{align*}
    \frac{f_{D|W}(d|w)}{f_{D}(d)}\in [e^{-\xi}, e^{\xi}],
\end{align*}
for any $w\in\mathcal W$ and $d\in\mathcal D$.

Therefore, for any $w, w'\in\mathcal W$ we have
\begin{align*}
    \frac{f_{D|W}(d|w)}{f_{D|W}(d|w')} = \frac{f_{D|W}(d|w)}{f_{D}(d)}/\frac{f_{D|W}(d|w')}{f_{D}(d)}\in [e^{-2\xi}, e^{2\xi}]. 
\end{align*}
From the definition of Differential Privacy, we know that it is $(2\xi)$-Differentially Private.
\end{proof}

\end{document}